\newtheorem{lemma}{Lemma}
\newtheorem{proposition}{Proposition}
\newtheorem{theorem}{Theorem}
\newtheorem{assumption}{Assumption}
\newtheorem{definition}{Definition}[section]
\newtheorem{corollary}{Corollary}
\begin{document}
% ---------- Shrink Display Equation Spaces ---------

%% Put support info here.  Reminder: do not thank the handling coeditor anonymously or by name
%\support{We thank ???.}

\title{\bf MoMA: Model-based Mirror Ascent for Offline Reinforcement Learning}
	%\author{ %\thanks{ }	
	%	\date{\empty}
	%}
\author{Mao Hong$^1$
		\quad \quad Zhiyue Zhang$^1$
  \quad \quad Yue Wu$^1$
		\quad \quad Yanxun Xu$^1$\thanks{\texttt{yanxun.xu@jhu.edu}} 
}
  \date{%
    $^1$Department of Applied Mathematics and Statistics, Johns Hopkins University\\%
    % \today
}

	\maketitle
%%%%%%%%%%%%%%%%%%%%%%%%%%%%%%%%%%%%%%%%%%%%%%%%%%%%%%%%%%%%
%%%%%%%%%%%%%%%%%%%%%%%%%%%%%%%%%%%%%%%%%%%%%%%%%%%%%%%%%%%%
\begin{abstract}
Model-based offline reinforcement learning methods (RL) have achieved state-of-the-art performance in many decision-making problems thanks to their sample efficiency and generalizability. Despite these advancements, existing model-based offline RL approaches
either focus on theoretical studies without developing practical algorithms or rely on a restricted parametric policy space,  thus not fully leveraging the advantages of an unrestricted policy space inherent to model-based methods. To address this limitation, we develop MoMA, a model-based mirror ascent algorithm with general function approximations under partial coverage of offline data. MoMA distinguishes itself from existing literature by employing an unrestricted policy class.    
In each iteration, MoMA conservatively estimates the value function by a minimization procedure within a confidence set of transition models in the policy evaluation step, then updates the policy with general function approximations instead of commonly-used parametric policy classes in the policy improvement step. Under some mild assumptions, we establish theoretical guarantees of MoMA by proving an upper bound on the suboptimality of the returned policy.  
We also provide a practically implementable, approximate version of the algorithm. 
The effectiveness of MoMA is demonstrated via numerical studies.

\end{abstract}

%\tableofcontents

\section{Introduction}
\label{intro section}

Reinforcement Learning (RL) has emerged as an effective approach for optimizing sequential decision making by maximizing the expected cumulative reward to learn an optimal policy through iterative online interactions with the environment. RL algorithms have made significant advances in a wide range of areas such as autonomous driving \citep{shalev2016safe}, video games \citep{torrado2018deep}, and robotics \citep{kober2013reinforcement}. However, numerous real-world problems require methods to learn only from pre-collected and static (i.e., offline) datasets because interacting with the environment can be expensive or unethical, such as assigning patients to inferior or toxic treatments in healthcare applications \citep{gottesman2019guidelines}. Therefore, the development of offline RL methods, which learn an optimal policy solely from offline data without further interactions with the environment, has grown rapidly in recent decades \citep{levine2020offline}.

The performance of offline RL methods often relies on the coverage of offline data. Earlier theoretical studies of offline RL usually assume that offline data have full coverage, i.e., every possible policy's occupancy measure can be covered by the occupancy measure of the behavior policy that generates offline data \citep{munos2003error,antos2008learning,munos2008finite,farahmand2010error,lange2012batch,ross2012agnostic,chen2019information,liu2019off, uehara2020minimax,xie2020q, xie2021batch}. This assumption implies a highly exploratory behavior policy that can explore all state-action pairs, a condition rarely met in practical scenarios. Recent studies have shifted from this restrictive full coverage assumption to a more realistic partial coverage framework. This only requires the behavior policy's occupancy measure to cover that of the target policy. Methods developed under the partial coverage assumption fall into two main categories: policy constraint methods and pessimistic methods. Policy constraint methods focus on learning an optimal policy within a boundary that maintains proximity to the behavior policy \citep{fujimoto2019off,kumar2019stabilizing,wu2019behavior,liu2019off,laroche2019safe,nachum2019algaedice,siegel2020keep,kostrikov2021offline,fujimoto2021minimalist}, ensuring that the performance of the learned policy is 
 at least as good as the behavior policy. On the other hand, pessimistic methods, which include both model-free \citep{kumar2020conservative,liu2020provably, jin2021pessimism, rashidinejad2021bridging, xie2021bellman,zanette2021provable,yin2021towards,kostrikov2021offline,cheng2022adversarially,shi2022pessimistic,zhang2022corruption} and model-based approaches \citep{yu2020mopo, kidambi2020morel, chang2021mitigating,uehara2021pessimistic,yu2021combo, rigter2022rambo,guo2022model,rashidinejad2022optimal,bhardwaj2023adversarial}, employ a principle of pessimism to penalize less-visited state-action pairs without constraining the policy, thereby avoiding uncertain regions not covered by the offline data. %This paper positions itself within the second category, specifically focusing on the model-based perspective.

Compared to pessimistic model-free methods, a key advantage of pessimistic model-based methods lies in their ability to utilize an unrestricted policy space  \citep{uehara2021pessimistic}. Nonetheless, existing  offline model-based approaches \citep{uehara2021pessimistic, rigter2022rambo,guo2022model,rashidinejad2022optimal, bhardwaj2023adversarial} either focus on theoretical studies without developing practical algorithms, or rely on a restricted parametric policy space,  thereby not fully exploiting the potential of an unrestricted policy space inherent to model-based methods. The drawback of using parameterized policy classes becomes evident when the optimal policy falls outside the predefined policy class. To circumvent the constraint of parametric policy spaces, this paper aims to develop a pessimistic model-based algorithm that employs an unrestricted policy class.

%To address this limitation,  our work aims to develop a pessimistic model-based algorithm that employs an unrestricted policy class. This approach not only circumvents the constraints of parametric policy spaces but also emphasizes the inherent advantages of model-based methods over their model-free counterparts.

In this study, we introduce MoMA, a pessimistic model-based mirror ascent algorithm designed for offline RL, without the need for explicit policy parameterization. At a high level, MoMA iteratively performs two steps: conservative policy evaluation and policy improvement. In the conservative policy evaluation step, we find a pessimistic $Q$ function of the current policy at each iteration $t$ by minimizing the $Q$ function over a confidence set of transition models. %which is essentially a constrained minimization problem. As the true transition model $P^*$ belongs to this confidence set with high probability, the minimum value over the confidence set becomes a lower bound for the value under $P^*$. The conservative policy evaluation step draws inspiration from \citet{uehara2021pessimistic}. 
It plays a critical role in relaxing the full coverage assumption to partial coverage in offline RL. In the policy improvement step, given the pessimistic $Q$ function from the first step, we update the policy by mirror ascent \citep{beck2003mirror} with general function approximations.     %The design of policy improvement step is inspired by the update rule proposed in \citet{lan2022policyB} which allows unrestricted policy class. 
The design of the policy improvement step is adapted from \citet{lan2022policyB} to facilitate an unrestricted policy class in our model-based offline RL setting. 

%is inspired by the update rule from \citet{lan2022policyB}, adapted to facilitate an unrestricted policy class in our model-based offline RL setting. 

The theoretical framework of MoMA separates conservative policy evaluation from policy improvement. This separation is advantageous as it allows for independent investigation of statistical and computational complexities. Under this theoretical framework, we establish an upper bound on the suboptimality of MoMA with a characterization of statistical error, optimization error, and function approximation error, assuming a computational oracle for a constrained minimization problem in the conservative policy evaluation step. A notable distinction of our work, compared to model-free approaches like \cite{xie2021bellman}, is that our suboptimality upper bound does not include the size of the policy class. This is a significant advantage, as it permits the assumption of an unrestricted policy class. In model-free settings, the policy-dependent confidence set must maintain certain properties uniformly across all policies in the class, resulting in the  inclusion of the policy class size in the suboptimality upper bound. 
%. This uniformity necessitates the inclusion of the policy class size in the suboptimality upper bound, limiting the expansiveness of the policy class. 
In contrast, in our model-based approach, the confidence set is independent of the policy. This independence eliminates the need to factor in the size of the policy class, thereby removing constraints on its expansiveness.
On the computational side, we develop a practically implementable algorithm that serves as an approximation of the theoretical algorithm. In particular, a primal-dual step is designed to address the computational oracle requirement - approximately solving the constrained minimization problem in the conservative policy evaluation step. %To validate the effectiveness of our algorithm, we conducted a series of numerical experiments using this practical and approximate implementation. The results of these experiments demonstrate the satisfactory performance of our algorithm.

The rest of the paper is organized as follows. Section \ref{sec: preliminaries} introduces some preliminary settings and definitions. In Section \ref{sec: Moma method}, we present MoMA, the proposed algorithm, followed by Section \ref{sec: algo 2}, which discusses its practical, approximate version suitable for implementation. Section \ref{section: theoretical analysis} establishes the main theoretical results for MoMA under some assumptions. Numerical results in both synthetic dataset and D4RL benchmark are included in section \ref{sec: simulation}. Finally, Section \ref{sec: conclusion} concludes the paper with  a discussion. %, offering discussions on the current limitations of our approach and potential directions for future research.

%%%%%%%%%%%%%%%%%%%%%%%%%%%%%%%%%%%%%%%%%%%%%%%%%%%%%%
%%%%%%%%%%%%%%%%%%%%%%%%%%%%%%%%%%%%%%%%%%%%%%%%%%%%%%
%%%%%%%%%%%%%%%%%%%%%%%%%%%%%%%%%%%%%%%%%%%%%%%%%%%%%%
%%%%%%%%%%%%%%%%%%%%%%%%%%%%%%%%%%%%%%%%%%%%%%%%%%%%%%
%%%%%%%%%%%%%%%%%%%%%%%%%%%%%%%%%%%%%%%%%%%%%%%%%%%%%%
%%%%%%%%%%%%%%%%%%%%%%%%%%%%%%%%%%%%%%%%%%%%%%%%%%%%%%
%%%%%%%%%%%%%%%%%%%%%%%%%%%%%%%%%%%%%%%%%%%%%%%%%%%%%%
%%%%%%%%%%%%%%%%%%%%%%%%%%%%%%%%%%%%%%%%%%%%%%%%%%%%%%

\section{Preliminaries} 
\label{sec: preliminaries}
\textbf{Markov decision processes and offline RL:} We consider an infinite-horizon Markov decision process (MDP) $M=(\mathcal{S}, \mathcal{A}, P, r, \gamma, \mu_0)$, with continuous state space $\mathcal{S}$, {discrete} action space $\mathcal{A}=\{A_1,A_2,...,A_m\}$, a transition dynamics $P(s'\mid s,a)$  with $s, s'\in \mathcal{S}$ and $a\in \mathcal{A}$,  a reward function $r: \mathcal{S}\times\mathcal{A}\to [0,1]$, a discount factor $\gamma\in [0,1)$, and an initial state distribution $\mu_0$. We assume a model space $\mathcal{P}$ for the transition dynamic, i.e. $P\in\mathcal{P}$. The reward function $r$ is assumed to be known throughout this work. A stochastic policy $\pi$ maps from state space to a distribution over actions, representing a decision strategy to pick an action with probability $\pi(\cdot|s)$ given the current state $s$, i.e. $\pi(\cdot\mid s)\in \Delta(\mathcal{A}):=\{p\in\mathbb{R}^m: \sum_{i=1}^m p_i =1, p_i\ge 0, \forall i\}$ for all $s\in\mathcal{S}$. Given a policy $\pi$ and a transition dynamics $P$, the value function $V_P^\pi(s):=\mathbb{E}_{P,\pi}[\sum_{t=0}^{\infty}\gamma^t r(s_t, a_t)| s_0= s]$ denotes the expected cumulative discounted reward of $\pi$ under the transition dynamics $P$ with an initial state $s$ and a reward function $r$. We use $V_P^\pi:=\mathbb{E}_{s\sim \mu_0}V_P^\pi(s)$ to denote the expected value integrated over $\mathcal{S}$ with an initial distribution $\mu_0$. The action-value function (i.e., $Q$ function) is defined similarly: $Q_P^{\pi}(s, a)= \mathbb{E}_{P,\pi}[\sum_{t=0}^{\infty}\gamma^t r(s_t, a_t)| s_0= s, a_0=a]$. Let $d_{P}^{\pi}(s,a):= (1-\gamma)\sum_{t=0}^{\infty} \gamma^t\text{Pr}(s_t=s, a_t=a| s_0\sim \mu_0)$ be the occupancy measure of the policy $\pi$ under the dynamics $P$. Then $V^\pi_P$ can be expressed as $\mathbb{E}_{(s,a)\sim d_{P}^{\pi}}[r(s,a)]$. Assuming that a static offline dataset $\mathcal{D}_n= \{(s_i,a_i,r_i,s_i'): i=1,...,n\}$ is generated by some behavior policy under the ground truth transition dynamics $P^*$, model-based offline RL methods aim to learn an optimal policy that maximizes the value $V^\pi_{P^*}$ through learning the dynamics  from the offline dataset without any further interactions with the environment. 

\noindent\textbf{Partial coverage:} One fundamental challenge in offline RL is distribution shift \citep{levine2020offline}: the visitation distribution of states and actions induced by the learned policy inevitably deviates from the distribution of
offline data. The concept of coverage has been introduced to measure the distribution shift using the density ratio \citep{chen2019information}. Denote $\rho(s, a)$ to be the offline distribution that generates the state-action pairs $(s_i,a_i)_{i=1}^n$ in offline data. Full coverage means  $\sup_{s,a}d_{P^*}^{\pi}(s,a)/\rho(s,a)< \infty$ for all possible policies $\pi$, which may not hold in practice. In contrast, partial coverage only assumes that the offline distribution covers the visitation distribution induced by some comparator policy $\pi^\dagger$ \citep{xie2021bellman}, such that   $\sup_{s,a} d_{P^*}^{\pi^\dagger}(s,a)/\rho(s,a)< \infty$. Our work aims to learn the optimal policy among all polices covered by offline data, i.e., $\Pi:=\{\pi: \sup _{s, a} d_{p^*}^\pi(s, a) / \rho(s, a)<\infty\}$. 

%%%%%%%%%%%%%%%%%%%%%%%%%%%%%%%%%%%%%%%%%%%%%%%%%%%%%%
%%%%%%%%%%%%%%%%%%%%%%%%%%%%%%%%%%%%%%%%%%%%%%%%%%%%%%
%%%%%%%%%%%%%%%%%%%%%%%%%%%%%%%%%%%%%%%%%%%%%%%%%%%%%%
%%%%%%%%%%%%%%%%%%%%%%%%%%%%%%%%%%%%%%%%%%%%%%%%%%%%%%
\section{Model-based mirror ascent for offline RL}
\label{sec: Moma method}
In this section, we present MoMA, which is summarized in  Algorithm \ref{alg: alg 1}.  MoMA can be separated into two steps in each iteration: 1) In the policy evaluation step, we conservatively evaluate the updated policy through a minimization procedure within a confidence set of transition models; 2) In the policy improvement step, we update the policy based on mirror ascent (MA) under the current transition model. We provide details for the policy evaluation and policy improvement in \cref{sec: policy eval} and \cref{sec: policy improvement} respectively.

\begin{algorithm}[tb]
   \caption{MoMA: Model-based mirror ascent for offline RL}
   \label{alg: alg 1}
\begin{algorithmic}
   \STATE {\bfseries Input:} The learning rate $\{\eta_t\}_{t=1}^T$, a consistent estimate $\widehat{P}$ and its corresponding $\alpha_n$.
   \STATE {\bfseries Initialization:} Initialize $\pi_0(\cdot|s)=\mathrm{Unif}(\mathcal{A})$.
   \FOR{$t=1$ {\bfseries to} $T$}
   \STATE Conservative policy evaluation: \\
   Let $P_t= \text{argmin}_{P\in \mathcal{P}_{n,\alpha_n}} V_P^{\pi_t}$, where $\mathcal{P}_{n,\alpha_n}=\{P\in\mathcal{P}: \mathcal{E}_n(P)\le \alpha_n\}.$\\
   \STATE Policy improvement: $\pi_{t+1}(\cdot\mid s) =\underset{p \in \Delta(\mathcal{A})}{\arg \max }\left\{\langle Q_{P_t}^{\pi_t}(s, \cdot),p\rangle-\frac{1}{\eta_t} D\left(\pi_t(\cdot\mid s), p\right)\right\}, \forall s.$
   \ENDFOR
\end{algorithmic}
\end{algorithm}

%%%%%%%%%%%%%%%%%%%%%%%%%%%%%%%%%%%%%%%%%%%%%%%%%%%%%%
%%%%%%%%%%%%%%%%%%%%%%%%%%%%%%%%%%%%%%%%%%%%%%%%%%%%%%
%%%%%%%%%%%%%%%%%%%%%%%%%%%%%%%%%%%%%%%%%%%%%%%%%%%%%%

\subsection{Policy evaluation: conservative estimate of \texorpdfstring{$Q$}{Q}}
\label{sec: policy eval}
In the policy evaluation step, inspired by \citet{uehara2021pessimistic}, we first construct a confidence set $\mathcal{P}_{n,\alpha_n}=\{P\in\mathcal{P}: \mathcal{E}_n(P)\le \alpha_n\}$ for transition models. Here $\mathcal{E}_n(P):=\widehat{L}_n(P)-\widehat{L}_n(\widehat{P})$, where $\widehat{L}_n: \mathcal{P}\to \mathbb{R}^+$ is an empirical loss function for $P$, depending on the offline dataset $\mathcal{D}_n$, and $\widehat{P}=\arg\min_{P\in\mathcal{P}} \widehat{L}_n(P)$ is an estimator based on $\widehat{L}_n$. For example, if $\widehat{L}_n(P)$ denotes the negative log-likelihood function of $P$, then $\widehat{P}$ is the maximum likelihood estimator  (MLE) for $P$. $\alpha_n$ can be understood as the radius of the confidence set. Then, we find $P_t$ that minimizes the value function $V_P^{\pi_t}$ within $\mathcal{P}_{n,\alpha_n}$, which is formulated as
\begin{equation}
\begin{aligned}
\label{equa: constrained}
P_t= &\text{argmin}_{P\in \mathcal{P}_{n,\alpha_n}} V_P^{\pi_t}.
\end{aligned}
\end{equation}

The minimizer $P_t$, combined with the current policy $\pi_t$, can be used to evaluate $Q^{\pi_t}_{P_t}$ through Monte Carlo methods.  In \cref{sec: an example in practical}, we will provide an example using negative log-likelihood to illustrate the implementation procedure. 

The conservative policy evaluation step is designed to provide a pessimistic estimate of the value function given a policy. This idea has been employed in pessimistic model-free actor-critic algorithms \citep{khodadadian2021finite,zanette2021provable, xie2021bellman}, where the critic lower bounds the $Q$ function.  For example, \citet{khodadadian2021finite} constructed the pessimism for $Q$ under the tabular setting. \citet{zanette2021provable} assumed a linear $Q$ function and conservatively estimated $Q$ by minimizing $Q$ within a confidence set of the coefficients for $Q$. \citet{xie2021bellman} conservatively estimated $Q$ in a general function approximation setting, however, the size of the function class was limited by the sample size of offline data. Compared to these model-free methods, MoMA has several advantages. First, unlike \citet{xie2021bellman},  the size of the function class for approximation in MoMA can be arbitrarily large. Second, MoMA has no restriction on the policy class, which is crucial when the optimal policy is not contained in a restricted parametric policy class. See Appendix \ref{append: discussions} for a detailed discussion about the comparisons between our work and existing literature in the policy evaluation step.

\subsection{Policy improvement: mirror ascent}
\label{sec: policy improvement}
In the policy improvement step, we use mirror ascent, which maximizes the $Q$ function with a regularizer that penalizes the Bregman distance $D(\cdot,\cdot)$ between the next policy and the current policy. We first introduce the definition of the Bregman distance. 
Let $\|\cdot\|$ be a given norm in $\Delta(\mathcal{A})$ and $\omega: \Delta(\mathcal{A}) \rightarrow \mathbb{R}$ be a strongly convex function with respect to (w.r.t.) $\|\cdot\|$. Then $D(\cdot,\cdot)$ is a Bregman distance if 
\begin{equation*}
\begin{aligned}
D\left(p, p\prime\right):=\omega\left(p\prime\right)-\left[\omega\left(p\right)+\left\langle \nabla\omega\left(p\right), p\prime-p\right\rangle\right] \geq \frac{1}{2}\left\|p\prime-p\right\|^2, \forall p, p\prime \in \Delta(\mathcal{A}),
\end{aligned}
\end{equation*}
where $\langle\cdot,\cdot\rangle$ denotes the inner product. For clarity, we consider $Q_{P_t}^{\pi_t}(s, \cdot)\in \mathbb{R}^m$ with $i$-th element $Q_{P_t}^{\pi_t}(s, A_i)$. Given a pre-specified learning rate $\eta_t>0$,
the proposed update rule is:  
\begin{equation}
\begin{aligned}
\label{equ: policy update}
\pi_{t+1}(\cdot\mid s) =\underset{p \in \Delta(\mathcal{A})}{\arg \max }\left\{\langle Q_{P_t}^{\pi_t}(s, \cdot),p\rangle-\frac{1}{\eta_t} D\left(\pi_t(\cdot\mid s), p\right)\right\} \quad \forall s.  \end{aligned}
\end{equation}

This update rule is distinct from existing literature in that it does not require any explicit policy parameterization. This feature underscores the advantage of MoMA over existing model-based offline algorithms that rely on parametric policy classes, e.g.,  \cite{rigter2022rambo, guo2022model, rashidinejad2022optimal, bhardwaj2023adversarial}. 

As the Bregman distance defines a general class of distance measures, one can design corresponding algorithms based on specific distance measures if desired. 
Notably, natural policy gradient (NPG) \citep{kakade2001natural} is a special case of policy mirror ascent when $D(\cdot,\cdot)$ is set to be KL divergence. 

We remark that for a continuous state space $S$, it is impossible to enumerate \eqref{equ: policy update} for infinitely many states. To overcome this issue, in \cref{sec: func approx in MA} we provide a computationally {efficient algorithm} through function approximation, which is one of the key advantages of MoMA compared to existing literature, e.g., Algorithm 1 in \citet{xie2021bellman}.

%%%%%%%%%%%%%%%%%%%%%%%%%%%%%%%%%%%%%%%%%%%%%%%%%%%%%%%%%%%%
%%%%%%%%%%%%%%%%%%%%%%%%%%%%%%%%%%%%%%%%%%%%%%%%%%%%%%%%%%%%
%%%%%%%%%%%%%%%%%%%%%%%%%%%%%%%%%%%%%%%%%%%%%%%%%%%%%%%%%%%%
%%%%%%%%%%%%%%%%%%%%%%%%%%%%%%%%%%%%%%%%%%%%%%%%%%%%%%%%%%%%
\section{A practical algorithm}
\label{sec: algo 2}

\begin{algorithm}[tb]
   \caption{MoMA: A Practical Algorithm}
   \label{alg: alg 2}
\begin{algorithmic}
   \STATE {\bfseries Input:} The learning rate $\eta_t$, $\mathcal{P}_{n,\alpha_n}$.

   \STATE {\bfseries Initialization:} Initialize $\pi_0=\mathrm{Unif}(\mathcal{A})$.
   \FOR{$t=1$ {\bfseries to} $T$}
   \STATE Conservative policy evaluation: \\
   Compute $P_t:= P_{\phi^{(K)}}$, where $\phi^{(K)}$ is the output from \cref{algo: PD}.
   \STATE Policy improvement: \\
   \STATE Sample $\{s_j\}_{j=1}^N$ from $d_{P_t}^{\pi_t}$.\\
       \FOR{$j=1$ {\bfseries to} $N$}
           \STATE Input $\{f_{t-1,i}(s;\widehat{\beta}_{t-1,i})\}_{i=1}^m$ and $s_j$ into \cref{alg: alg 3}, and output $\{\Tilde{Q}_{\omega,t}(s_j,A_i)\}_{i=1}^m$.         
       \ENDFOR
    \STATE Find $\widehat{\beta}_{t,i}$ that solves \cref{beta hat} for each $i=1,...,m$.
    \STATE Save the parametric function $\{f_{t,i}(s;\widehat{\beta}_{t,i})\}_{i=1}^m$ as an input for iteration $t+1$.
   \ENDFOR
\end{algorithmic}
\end{algorithm}

In this section, we present an implementable algorithm that approximately solves the constrained minimization problem (\ref{equa: constrained}), summarized in \cref{alg: alg 2}.
%%%%%%%%%%%%%%%%%%%%%%%%%%%%%%%%%%%%%%%%%%%%%%%%%%%%%%
%%%%%%%%%%%%%%%%%%%%%%%%%%%%%%%%%%%%%%%%%%%%%%%%%%%%%%
%%%%%%%%%%%%%%%%%%%%%%%%%%%%%%%%%%%%%%%%%%%%%%%%%%%%%%
\subsection{Primal-dual (PD) for solving the constrained optimization problem}
\label{sec: PD details}
In order to approximately solve the constrained minimization problem \eqref{equa: constrained} in the policy evaluation step, we introduce its Lagrangian form:
\begin{equation}
\label{alg: constrained}
\max_{\lambda>0}\min_{P\in\mathcal{P}} V_P^{\pi_t} + \lambda(\mathcal{E}_n(P)-\alpha_n).
\end{equation}

Suppose the transition model $P$ is parameterized by $\phi$, denoted as $P_\phi$. To solve \eqref{alg: constrained}, we design a model gradient primal-dual method with the following update rule:

\begin{equation}
\begin{aligned}
\label{algo: PD}
& \phi^{(k+1)}=\phi^{(k)}-\kappa_1 \nabla_\phi \mathcal{L}_{V,t}(\phi^{(k)},\lambda^{(k)}) , \quad \lambda^{(k+1)}=\operatorname{Proj}_{\Lambda}\left(\lambda^{(k)}+\kappa_2\left(\mathcal{E}_n(P_\phi)-\alpha_n\right)\right).
\end{aligned}
\end{equation}
Here $\operatorname{Proj}_{\Lambda}$ is the projection to a pre-specified interval $\Lambda$ for $\lambda$, and $\mathcal{L}_{V,t}(\phi^{(k)},\lambda^{(k)})$ is the Lagrangian function: $V_{P_{\phi^{(k)}}}^{\pi_t} + \lambda^{(k)}(\mathcal{E}_n(P_{\phi^{(k)}})-\alpha_n)$. We will show an example of calculating $ \nabla_\phi \mathcal{L}_{V,t}(\phi^{(k)},\lambda^{(k)})$ in \cref{sec: an example in practical}. 
%%%%%%%%%%%%%%%%%%%%%%%%%%%%%%%%%%%%%%%%%%%%%%%%%%%%%%
%%%%%%%%%%%%%%%%%%%%%%%%%%%%%%%%%%%%%%%%%%%%%%%%%%%%%%
%%%%%%%%%%%%%%%%%%%%%%%%%%%%%%%%%%%%%%%%%%%%%%%%%%%%%%
\subsection{Function approximation in MA}
\label{sec: func approx in MA}
In the policy improvement step of \cref{alg: alg 1}, updating $\pi_{t+1}(\cdot\mid s)$ for an infinite number of states $s$ is computationally impossible, as $Q_{P_t}^{\pi_t}$ can only be evaluated when $\pi_t(\cdot\mid s)$ is known for all $s$. {Although Monte Carlo estimation may be utilized in evaluating $Q_{P_t}^{\pi_t}$ in \cref{equ: policy update}, the computational complexity would grow exponentially with the number of iterations $T$, resulting in computational inefficiency. Notably, this issue is also present in Algorithm 1 of \cite{xie2021bellman}. In contrast, our practical algorithm exhibits \textit{polynomial} dependence on $T$, which is computationally efficient. A detailed justification for this claim is provided in Appendix \ref{append: discussions}.}

By the definition of $D(\pi_t(\cdot \mid s),p)$, the objective function in the policy improvement step of \cref{alg: alg 1} is equivalent to  
$\langle Q_{P_t}^{\pi_t}(s, \cdot),p\rangle+\frac{1}{\eta_t}\left\langle \nabla\omega\left(\pi_t(\cdot\mid s)\right),p\right\rangle-\frac{1}{\eta_t} \omega(p)$. We define the \textit{augmented} action-value function as $\Tilde{\mathcal Q}_{\omega,t}(s,p):=\sum_{i=1}^m \Tilde{Q}_{\omega,t}(s,A_i)p_i = \left\langle \Tilde{Q}_{\omega,t}(s,\cdot), p\right\rangle$, where $\Tilde{Q}_{\omega,t}(s,A_i):=Q_{P_t}^{\pi_t}(s, A_i)+\frac{1}{\eta_t}\nabla\omega(\pi_t(\cdot\mid s))_i$ and $\Tilde{Q}_{\omega,t}(s,\cdot)$ is a vector with its $i$-th element as $\Tilde{Q}_{\omega,t}(s,A_i)$.
Following \citet{lan2022policyB}, we approximate $\Tilde{\mathcal Q}_{\omega,t}(s,p)$ by a parametric function $f_t(s,p;\beta_t)\in\mathcal{F}_t$ such that
$f_t(s,p;\beta_t^*)\approx \Tilde{\mathcal Q}_{\omega,t}(s,p)$ for some $\beta_t^*$, which is sufficient to approximate $\Tilde{Q}_{\omega,t}(s,A_i)$ for each $A_i$ according to $\Tilde{\mathcal Q}_{\omega,t}(s,p)= \left\langle \Tilde{Q}_{\omega,t}(s,\cdot), p\right\rangle$. To this end, for each $i=1,...,m$, we introduce $f_{t,i}(s;\beta_{t,i})\in\mathcal{F}_{t,i}$ to approximate $\Tilde{Q}_{\omega,t}(s,A_i)$, and thus
$f_t(s,p;\beta_t)=\sum_{i=1}^m f_{t,i}(s;\beta_{t,i})p_i := \langle f_t(s;\beta_t), p\rangle.$ Here, $\mathcal{F}_{t,i}$ can be chosen as e.g. reproducing kernel Hilbert spaces (RKHS) or neural networks.

For each $i=1,...,m$, the optimal parameter $\beta_{t,i}^*$ can be obtained as follows, 
\begin{equation}
\label{choice of beta}
\beta_{t,i}^* \in \operatorname{argmin}_{\beta_{t,i}} \mathbb{E}_{s \sim d_{P_t}^{\pi_t}}\left[\left(\Tilde{Q}_{\omega,t}(s,A_i)-f_{t,i}(s;\beta_{t,i})\right)^2\right].
\end{equation}

Specifically, we can generate $\{s_j\}_{j=1}^N\sim d_{P_t}^{\pi_t}$, and then minimize the empirical version of \eqref{choice of beta}:
\begin{equation}
\label{beta hat}
\widehat{\beta}_{t,i} \in \operatorname{argmin}_{\beta_{t,i}} \frac{1}{N}\sum_{j=1}^N\left[\left(\Tilde{Q}_{\omega,t}(s_j,A_i)-f_{t,i}(s_j;\beta_{t,i})\right)^2\right],
\end{equation}
where $\{\Tilde{Q}_{\omega,t}(s_j,A_i)\}_{j=1}^N$ are output from 
\cref{alg: alg 3} (see Appendix \ref{append: algo}). The computable $\widehat{\beta}_{t,i}$ satisfies the property that $f_{t,i}(s;\widehat{\beta}_{t,i})\approx f_{t,i}(s;\beta_{t,i}^*)\approx \Tilde{ Q}_{\omega,t}(s,A_i)$  for each $i=1,...,m$.

With the obtained $\{f_{t,i}(s;\widehat{\beta}_{t,i})\}_{i=1}^m$, the update rule in the policy improvement step can be written as below and solved by standard optimization algorithms, 
\begin{equation}
\label{formula: update beta}
\pi_{t+1}(\cdot\mid s)=\underset{p \in \Delta(\mathcal{A})}{\arg \max }\left\{\sum_{i=1}^m f_{t,i}(s;\widehat{\beta}_{t,i})p_i-\frac{1}{\eta_t} \omega(p)\right\}, \forall s \in \mathcal{S}.
\end{equation}

Such a design of function approximation enjoys several benefits. 1) {The objective function in \eqref{formula: update beta} is concave which can be solved by standard first-order optimization methods.} 2) Compared to commonly-used parametric policy classes, our policy class is unrestricted, ensuring it includes the optimal policy. 3) The function approximation error in \cref{choice of beta} can be made arbitrarily small by enlarging the function classes $\mathcal{F}_{t,i}$. 

%%%%%%%%%%%%%%%%%%%%%%%%%%%%%%%%%%%%%%%%%%%%%%%%%%%%%%
%%%%%%%%%%%%%%%%%%%%%%%%%%%%%%%%%%%%%%%%%%%%%%%%%%%%%%
%%%%%%%%%%%%%%%%%%%%%%%%%%%%%%%%%%%%%%%%%%%%%%%%%%%%%%
\subsection{An example}
\label{sec: an example in practical}
We provide a concrete example to illustrate the implementation procedure of the proposed practical \cref{alg: alg 2}, though our framework is general and different settings can be considered if desired.

\paragraph{Policy evaluation step.} We consider the following empirical loss function $\widehat{L}_n$ for transition models: $\widehat{L}_n(P_\phi) = -\frac{1}{n}\sum_{i=1}^n\log P_\phi(s_i'\mid s_i,a_i).$
Then $\widehat{P} = P_{\widehat{\phi}}$ is exactly the MLE, and $\mathcal{E}_n(P_\phi) = \frac{1}{n}\sum_{i=1}^n\log \frac{P_{\widehat{\phi}}(s_i'\mid s_i,a_i)}{P_\phi(s_i'\mid s_i,a_i)}.$
The gradient of the Lagrangian function is:
$\nabla_\phi \mathcal{L}_{V,t}(\phi, \lambda) = \nabla_\phi V_{P_\phi}^{\pi_t}+\lambda \nabla_\phi\mathcal{E}_n(P_\phi).$
Here the model gradient $\nabla_\phi V_{P_\phi}^{\pi_t}$ can be calculated using the proposition 2 of \citet{rigter2022rambo}. Specifically, let $M_\phi(s',s,a,\pi_t):=\left(r(s,a)+\gamma V_\phi^{\pi_t}\left(s^{\prime}\right)\right) \times \nabla_\phi \log P_\phi\left(s^{\prime}\mid s, a\right)$, then $\nabla_\phi V_{P_\phi}^{\pi_t}=\mathbb{E}_{s,a \sim d_{P_\phi}^{\pi_t},,s^{\prime} \sim P_\phi(\cdot\mid s,a)} M_\phi(s',s,a,\pi_t).$
For $\nabla_\phi\mathcal{E}_n(P_\phi)$, we have $\nabla_\phi\mathcal{E}_n(P_\phi)=-\frac{1}{n}\sum_{i=1}^n \nabla_\phi \log P_\phi(s_i'\mid s_i,a_i).$
Then the expressions of $\nabla_\phi \mathcal{L}_{V,t}(\phi, \lambda)$, $\nabla_\phi V_{P_\phi}^{\pi_t}$, and $\nabla_\phi\mathcal{E}_n(P_\phi)$ can be plugged into (\ref{algo: PD}) and output a $P_t:=P_{\phi^{(K)}}$.

\paragraph{Policy improvement step.}
We consider $\omega(p):=\sum_{i=1}^m p_i\log p_i$ and introduce a multi-layer neural network $f_{t,i}(s;\beta_{t,i})$ for approximating $\Tilde{ Q}_{\omega,t}(s,A_i)$. For each $i=1,...,m$, in order to find the best $\beta_{t,i}$, we first sample $\{s_j\}_{j=1}^N$ i.i.d. from $d_{P_t}^{\pi_t}$,  then run any policy evaluation procedure such as Monte Carlo (\cref{alg: alg 3} in Appendix \ref{append: algo}) for $\Tilde{ Q}_{\omega,t}(s_j,A_i)$ for each $j=1,...,N$. Using training data $(s_j, \Tilde{ Q}_{\omega,t}(s_j,A_i))_{j=1}^N$, we can obtain $\widehat{\beta}_{t,i}$ by standard neural network (NN) training procedure. In addition, thanks to the form of $\omega$, we have a closed form solution to (\ref{formula: update beta}) which is the update rule 
$\pi_{t+1}\left(A_i \mid s\right)\propto \exp(\eta_t f_{t,i}(s;\widehat{\beta}_{t,i}))$ for $i=1,\ldots,m$.
Besides NN for $f_{t,i}(s;\beta_{t,i})$ for each $i=1, \dots, m$, alternative general function classes, such as infinite-dimensional RKHS, can also be employed for function approximations. 

%%%%%%%%%%%%%%%%%%%%%%%%%%%%%%%%%%%%%%%%%%%%%%%%%%%%%%
%%%%%%%%%%%%%%%%%%%%%%%%%%%%%%%%%%%%%%%%%%%%%%%%%%%%%%
%%%%%%%%%%%%%%%%%%%%%%%%%%%%%%%%%%%%%%%%%%%%%%%%%%%%%%
%%%%%%%%%%%%%%%%%%%%%%%%%%%%%%%%%%%%%%%%%%%%%%%%%%%%%%
\section{Theoretical analysis}
\label{section: theoretical analysis}
In this section, we present the upper bound on the suboptimality of the learned policy $\widehat{\pi}$ in Algorithm \ref{alg: alg 2} in terms of sample size, number of iterations and all key parameters. All proofs are presented in Appendix \ref{appendix: proofs of main results}. We first present the following assumptions.

\begin{assumption} The following conditions hold.
\label{Assump: theory}\\  
(a) (Data generation). The dataset $\mathcal{D}= (s_i,a_i,r_i,s_i')_{i=1}^n$ satisfies  $(s_i,a_i)\overset{i.i.d.}{\sim}\rho$ with $s_i'\sim P^*(\cdot \mid s_i,a_i)$, where $\rho$ denotes the offline distribution induced by the behavior policy under $P^*$.\\
(b) (Coverage of any comparator policy $\pi^\dagger$). $C_{\pi^\dagger} := \sup_{s,a} \frac{d_{P^*}^{\pi^\dagger}(s,a)}{\rho(s,a)}<\infty$.\\
(c) (Realizability). $P^*\in\mathcal{P}.$\\
(d) There exist $\frac{c_1}{n}\le\alpha_n=o(1),\frac{c_2}{\sqrt{n}}\le \delta_n=o(1)$ such that with high probabilities $P_*\in \mathcal{P}_{n,\alpha_n}$ and 
\begin{equation}
 \varepsilon_{est}:=\sup_{P\in \mathcal{P}_{n,\alpha_n}} \mathbb{E}_{(s,a)\sim\rho}[\|P(\cdot\mid s,a)-P^*(\cdot\mid s,a)\|_1]\le \delta_n. 
\end{equation}
\end{assumption} 

Assumption \ref{Assump: theory}(a) is related to offline data generation, common in offline RL theoretical literature. Assumption \ref{Assump: theory}(b) essentially requires the partial coverage of the offline distribution. The \textit{concentrability coefficient} $C_{\pi^\dagger}$ measures the distribution mismatch between the offline distribution and the occupancy measure induced by $\pi^\dagger$. Assumption \ref{Assump: theory}(c) requires that the model class $\mathcal{P}$ is sufficiently large such that there is no model misspecification error. Assumption \ref{Assump: theory}(d) is needed to provide a fast statistical rate uniformly over the confidence set. We remark that assumption \ref{Assump: theory}(d) is a mild condition. For example, commonly-used empirical risk functions (e.g. negative log-likelihood) satisfy it. See \cref{thm: controlled estimation error}, \cref{cor: mle} for more details in Appendix \ref{append: additional theory}.

Now we provide the suboptimality upper bound for \cref{alg: alg 2} in the following theorem, assuming an access to a computational oracle for solving the contained minimization problem (\ref{equa: constrained}) in the conservative policy evaluation step. Theoretical results for \cref{alg: alg 1}, in which we assume no function approximation, are summarized in Theorem \ref{main thm of alg 1} in Appendix \ref{append: additional theory}. 
\begin{theorem}
\label{main them of alg 2}
Under Assumption \ref{Assump: theory}, if $\eta_t=(1-\gamma)\sqrt{\frac{2\log(|\mathcal
A|)}{T}}$ for every fixed $T$, then we have 
\begin{equation*}
\begin{split}
V^{\pi^\dagger}_{P^*}-V_{P^*}^{\widehat{\pi}}\lesssim & \underbrace{(\frac{\gamma}{(1-\gamma)^2}+\frac{\gamma}{(1-\gamma)^3})C_{\pi^\dagger}\ \varepsilon_{est}}_{\mathrm{model\ error}}+ \underbrace{\frac{1}{(1-\gamma)^2}\frac{1}{\sqrt{T}}}_{\mathrm{policy\ optimization\ error}}\\
&+\underbrace{\frac{1}{(1-\gamma)^{\frac{3}{2}}}|\mathcal{A}| \sqrt{\sup _s \frac{d_{P^*}^{\pi^{\dagger}}(s)}{\mu_0(s)}} (\varepsilon_{\text {approx }}+\frac{\sqrt{\max_{t,i}|\mathcal{F}_{t,i}|}}{\sqrt{N}})}_{\mathrm{function\ approximation\ error}}\\
\end{split}
\end{equation*}
with high probability. Here $\widehat{\pi}\sim\operatorname{Unif}(\pi_0,\pi_1,...,\pi_{T-1})$ where $\{\pi_t\}_{t=0}^{T-1}$ are output by Algorithm \ref{alg: alg 2}, and $\varepsilon_{approx}$ is defined in Definition \ref{def: approx}.
\end{theorem}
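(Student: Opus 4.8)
The plan is to bound the expected suboptimality $V^{\pi^\dagger}_{P^*}-V^{\widehat{\pi}}_{P^*}$, where the expectation is taken over the uniform draw of $\widehat{\pi}$ from $\{\pi_0,\dots,\pi_{T-1}\}$, so that it suffices to control the iterate average $\frac{1}{T}\sum_{t=0}^{T-1}\bigl(V^{\pi^\dagger}_{P^*}-V^{\pi_t}_{P^*}\bigr)$. For each $t$ I would insert the pessimistic model $P_t$ and split
\[
V^{\pi^\dagger}_{P^*}-V^{\pi_t}_{P^*} = \underbrace{\bigl(V^{\pi^\dagger}_{P^*}-V^{\pi^\dagger}_{P_t}\bigr)}_{\mathrm{(I)}} + \underbrace{\bigl(V^{\pi^\dagger}_{P_t}-V^{\pi_t}_{P_t}\bigr)}_{\mathrm{(II)}} + \underbrace{\bigl(V^{\pi_t}_{P_t}-V^{\pi_t}_{P^*}\bigr)}_{\mathrm{(III)}}.
\]
Term $\mathrm{(III)}$ is nonpositive: since $P_t$ minimizes $V_P^{\pi_t}$ over $\mathcal{P}_{n,\alpha_n}$ and $P^*\in\mathcal{P}_{n,\alpha_n}$ with high probability by Assumption \ref{Assump: theory}(d), pessimism forces $V^{\pi_t}_{P_t}\le V^{\pi_t}_{P^*}$. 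This is precisely the step where partial coverage replaces full coverage, so it can be discarded.

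For term $\mathrm{(I)}$ I would apply a value-difference (simulation) lemma between the models $P^*$ and $P_t$ at the fixed policy $\pi^\dagger$. Using $\|V^{\pi^\dagger}_{P_t}\|_\infty\le 1/(1-\gamma)$ and $r\in[0,1]$, this bounds $\mathrm{(I)}$ by $\frac{\gamma}{(1-\gamma)^2}\,\mathbb{E}_{(s,a)\sim d^{\pi^\dagger}_{P^*}}\|P^*(\cdot\mid s,a)-P_t(\cdot\mid s,a)\|_1$. Changing measure from $d^{\pi^\dagger}_{P^*}$ to $\rho$ introduces the concentrability coefficient $C_{\pi^\dagger}$ of Assumption \ref{Assump: theory}(b), while $P_t\in\mathcal{P}_{n,\alpha_n}$ together with Assumption \ref{Assump: theory}(d) replaces the $\ell_1$-error by $\varepsilon_{est}$, giving the first model-error piece $\frac{\gamma}{(1-\gamma)^2}C_{\pi^\dagger}\varepsilon_{est}$.

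Term $\mathrm{(II)}$ carries the optimization and approximation content. I would first invoke the performance difference lemma under the model $P_t$, $V^{\pi^\dagger}_{P_t}-V^{\pi_t}_{P_t}=\frac{1}{1-\gamma}\mathbb{E}_{s\sim d^{\pi^\dagger}_{P_t}}\langle Q^{\pi_t}_{P_t}(s,\cdot),\pi^\dagger(\cdot\mid s)-\pi_t(\cdot\mid s)\rangle$, and then apply the three-point descent lemma for the mirror-ascent update \eqref{equ: policy update}, which holds pointwise in $s$ and bounds $\langle Q^{\pi_t}_{P_t}(s,\cdot),\pi^\dagger(\cdot\mid s)-\pi_{t+1}(\cdot\mid s)\rangle$ by the telescoping Bregman terms $\frac{1}{\eta_t}[D(\pi_t,\pi^\dagger)-D(\pi_{t+1},\pi^\dagger)]$, the residual $\langle Q^{\pi_t}_{P_t},\pi_{t+1}-\pi_t\rangle$ being absorbed via strong convexity and $\|Q^{\pi_t}_{P_t}\|_\infty\le 1/(1-\gamma)$. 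With $\eta_t=(1-\gamma)\sqrt{2\log|\mathcal{A}|/T}$ and $D(\pi_0,\pi^\dagger)\le\log|\mathcal{A}|$, summing over $t$ yields the $\frac{1}{(1-\gamma)^2}\frac{1}{\sqrt{T}}$ optimization rate. Two corrections remain: replacing the exact $Q^{\pi_t}_{P_t}$ by the fitted $f_{t,i}(\cdot;\widehat{\beta}_{t,i})$ injects a per-action error whose $L^2(d^{\pi_t}_{P_t})$ norm is $\varepsilon_{approx}+\sqrt{|\mathcal{F}_{t,i}|}/\sqrt{N}$; pushing this through the $1/(1-\gamma)$ factor, summing over the $|\mathcal{A}|$ actions, and changing measure from $d^{\pi^\dagger}_{P_t}$ to $d^{\pi_t}_{P_t}$ by Cauchy--Schwarz using $d^{\pi_t}_{P_t}\ge(1-\gamma)\mu_0$ produces the function approximation error with its $|\mathcal{A}|$, $(1-\gamma)^{-3/2}$, and $\sqrt{\sup_s d^{\pi^\dagger}_{P^*}(s)/\mu_0(s)}$ factors.

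The main obstacle is that the reference occupancy $d^{\pi^\dagger}_{P_t}$ appearing in the performance difference lemma changes with $t$, which destroys the telescoping of $D(\pi_t,\pi^\dagger)-D(\pi_{t+1},\pi^\dagger)$ underlying the mirror-ascent rate, since the standard analysis presumes a fixed MDP and hence a fixed $d^{\pi^\dagger}$. To overcome this I would convert every $\mathbb{E}_{d^{\pi^\dagger}_{P_t}}$ to the common, model-independent measure $\mathbb{E}_{d^{\pi^\dagger}_{P^*}}$; the discrepancy is controlled by the total variation between the occupancy measures under $P_t$ and $P^*$, which a simulation-lemma argument bounds by $\frac{\gamma}{1-\gamma}\mathbb{E}_{d^{\pi^\dagger}_{P^*}}\|P_t-P^*\|_1$. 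Multiplying by the $1/(1-\gamma)$ from the performance difference lemma and by the $O(1/(1-\gamma))$ bound on the integrand $\langle Q^{\pi_t}_{P_t},\pi^\dagger-\pi_t\rangle$, and then invoking $C_{\pi^\dagger}$ and Assumption \ref{Assump: theory}(d) once more, produces exactly the second, higher-order model-error piece $\frac{\gamma}{(1-\gamma)^3}C_{\pi^\dagger}\varepsilon_{est}$ and also aligns the approximation-error measure with $d^{\pi^\dagger}_{P^*}$. Because $\varepsilon_{est}\le\delta_n$ holds uniformly over the confidence set, these per-iteration corrections remain bounded after averaging, and collecting $\mathrm{(I)}$, $\mathrm{(II)}$, $\mathrm{(III)}$ gives the stated bound.
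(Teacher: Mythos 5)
Your proposal is correct and follows essentially the same route as the paper's proof: the identical three-term decomposition with pessimism killing term (III), the simulation lemma for term (I), and for term (II) the performance difference lemma followed by splitting the expectation onto the fixed measure $d_{P^*}^{\pi^\dagger}$ (so the Bregman terms telescope), with the measure-change cost bounded by the modified-MDP simulation-lemma argument yielding the $\frac{\gamma}{(1-\gamma)^3}C_{\pi^\dagger}\varepsilon_{est}$ piece, and the fitted-$Q$ errors converted from $d_{P_t}^{\pi_t}$ via Cauchy--Schwarz and $d_{P_t}^{\pi_t}\ge(1-\gamma)\mu_0$. The obstacle you flag (iteration-dependent occupancy $d_{P_t}^{\pi^\dagger}$ breaking the telescoping) and your fix are precisely the mechanism used in the paper's argument.
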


The upper bound in Theorem \ref{main them of alg 2} includes three terms: a model error (depending on fixed $n$) coming from using offline data for estimation of the transition model, an optimization error (depending on iteration $T$) from the policy improvement, and a function approximation error coming from using Monte Carlo samples approximating the augmented $Q$ function. The model error is a finite-sample term that cannot be reduced under the offline setting, while the optimization error can be reduced when the number of iterations $T$ increases. Typically, we have $\varepsilon_{est}=O_P(1/\sqrt{n})$. The function approximation error involves an approximation error $\varepsilon_{approx}$ that decreases as the function class is enlarged $\max_{t,i}|\mathcal{F}_{t,i}|\to\infty$, an estimation error that scales with $O_P(1/\sqrt{N})$, and a distribution mismatch $\sup_s d_{P^*}^{\pi^{\dagger}}(s)/\mu_0(s)$ between the initial distribution and the occupancy measure induced by a single $\pi^{\dagger}$ under $P^*$. Indeed, if $\max_{t,i}|\mathcal{F}_{t,i}|\to\infty$, then $\varepsilon_{approx}\to 0$ by Definition \ref{def: approx}. Consequently, the function approximation error can converge to $0$ as long as $\max_{t,i}|\mathcal{F}_{t,i}|\to\infty$ and $N\to\infty$ at the same speed based on its expression in Theorem \ref{main them of alg 2}. The function approximation error and the model error share similar intuitive interpretations. In the model error, $C_{\pi^{\dagger}}$ measures the transfer of $\varepsilon_{est}$ changing from the offline distribution to the target distribution $d_{P^*}^{\pi^{\dagger}}$. Analogously, $\sup_s d_{P^*}^{\pi^{\dagger}}(s)/\mu_0(s)$ in the function approximation error measures the transfer of $\varepsilon_{\text {approx }}+\frac{\sqrt{\max_{t,i}|\mathcal{F}_{t,i}|}}{\sqrt{N}}$ from the initial distribution to the target distribution $d_{P^*}^{\pi^{\dagger}}$.

%%%%%%%%%%%%%%%%%%%%%%%%%%%%%%%%%%%%%%%%%%%%%%%%%%%%%%
%%%%%%%%%%%%%%%%%%%%%%%%%%%%%%%%%%%%%%%%%%%%%%%%%%%%%%
%%%%%%%%%%%%%%%%%%%%%%%%%%%%%%%%%%%%%%%%%%%%%%%%%%%%%%
%%%%%%%%%%%%%%%%%%%%%%%%%%%%%%%%%%%%%%%%%%%%%%%%%%%%%%

%\input{experiment.tex}
\section{Numerical studies}
\label{sec: simulation}
We perform numerical studies on both an illustrative synthetic dataset and MuJoCo \citep{todorov2012mujoco} benchmark datasets, where we extend MoMA to the  continuous-action setting. 
\subsection{Synthetic dataset: an illustration}
\label{sec: synthetic}
We design a test environment based on a modified random walk with terminal goal states to generate data with partial coverage and understand how pessimism helps MoMA avoid common pitfalls faced by model-based offline RL methods.

%get intuition for MoMA by answering the following questions: 1) can MoMA compete against model-free and model-based baselines on a dataset with partial coverage? 2) how does pessimism help MoMA avoid common pitfalls faced by model-based offline RL methods? 

%We begin by detailing the test environment and the offline data generating mechanism, followed by the results for our numerical studies.
%2) Can we generalize the MoMA framework with deep architectures to further increase its flexibility and applicability? Specifically, to affirmatively answer the first question we showcase MoMA on a simple yet illustrative environment based on a modified random walk in \Cref{sec: randomWalk}; 
%in \Cref{sec: Mujoco} we address the second question by extending MoMA to incorporate deep networks for the actor, the critic, and the transition models, and we test the deep-MoMA algorithm on popular MUJoCo \citep{todorov2012mujoco} benchmark experiments and compare to state-of-the-art model-based offline RL baselines. 
%\subsection{Modified Random Walk: An Illustrative Example}
%\label{sec: randomWalk}
\paragraph{Environment and offline dataset}
%To get an intuition for MoMA, we consider a modified random walk task with terminal goal states: 
For each episode that starts with an initial state $s_0 \sim \mathcal{U}(-2,2)$, at time $n$ a particle undergoes a random walk and transits according to a mixture of Gaussian dynamics: $s_{n+1}-s_{n}=:\Delta s \sim \psi_{a}\mathcal{N}(\mu_{1,a},0.1)+(1-\psi_a)\mathcal{N}(\mu_{2,a},0.1)$, where the discrete action $a\in \{-1,0,1\}$ corresponds to Left, Stay, and Right, respectively.
%We choose the random walk steps $\mu_{1,-1}=-2,\mu_{2,-1}=0$, $\mu_{1,0}=\mu_{1,1}=0$, $\mu_{2,0}=\mu_{2,1}=2$ as known parameters, and $\psi_{-1}=\psi_{0}=0.6$, $\psi_{1}=0.4$ as the ground truth unknown model parameters that we estimate with expectation maximization (EM). 
We generate a partially covered offline dataset collected by a biased (to the left) behavioral policy $\beta$ that penalizes over-exploitation of the MLE. The full details for the environment and the behavioral policy are given in \Cref{sec: append_synthetic}.
% %and define a goal-reaching reward function, respectively given by: 
% \begin{equation*}
%   \beta(a|s) =
%   \begin{cases}
%   0.05  & a= 1,\\
%  0.05  & a= 0,\\
%   0.9  & a=- 1,\\
%   \end{cases}
%     r(s') =
%   \begin{cases}
%     -2 & -3\leq s'\leq 0,\\
%  -1.8  & 3>s'> 0,\\
%   %0 \, & s'\leq -3 \text{ or } \geq 3.\\
%     0  & s'< -3,\\
%    0  & s'\geq 3,\\
%   \end{cases}
% \end{equation*}
% for all $s \in \mathbb{R}$. 
% Thus, the particle is encouraged to reach either the positive or negative terminal state with the shortest path possible, with a slight favor towards the positive end if the particle starts off near $0$. The offline dataset contains $50$ episodes, which are sufficient for an accurate estimation of $\psi_{-1}$ but may lead to misetimation of $\psi_{0}$ and $\psi_{1}$. Indeed, for our particular dataset, while the MLE $\hat{\psi}_{-1}$ is accurate, $\hat{\psi}_{0}$ is underestimated and $\hat{\psi}_{1}$ is overestimated, which could make over-exploitation of the MLE a problem.

%We linearly approximate the Q function by $Q(s,a)=\mathbf{w}\cdot \mathbf{x}(s,a)$, where the feature mapping $\mathbf{x}(\cdot,\cdot)$

\paragraph{MoMA performance}
The implementation follows \Cref{alg: alg 2}, with details in \Cref{sec: append_synthetic}. We compare with 1) model-based NPG, which can be seen as a simplified version of MoMA without the conservative policy evaluation; 2) model-free neural fitted Q-iteration (NFQ) \citep{riedmiller2005neural}; and 3) a uniformly random policy. All algorithms are offline trained to convergence, and then put into the environment for $1000$ online evaluation episodes. We choose the number of episode steps as the metric for this shortest path problem, and report the means and standard deviations for the scores of all $4$ algorithms in \Cref{tab: numerical2}. MoMA has significantly superior performance compared to the baselines, while the model-based peer NPG achieves the second best performance. This is not surprising since a learned dynamics model in general helps generalization in tasks with continuous state spaces, and NFQ's lack of generalization ability is exacerbated in this partial coverage setting. 
%To better understand how pessimism helps and answer Question 2), we look deep into the training process.   

\begin{table}[ht]
\caption{Average episode length (± std.) over $1000$ online evaluation episodes; shorter is better.   } \label{tab: numerical2}
\begin{center}
\small
\begin{tabular}{llll}
\textbf{MoMA} & \textbf{NPG} &\textbf{NFQ} & \textbf{Uniform}\\
\hline
 \textbf{2.63 ± 1.61} & 3.20 ± 2.33  & 4.39 ± 2.96 & 6.13 ± 5.07 \\

\end{tabular}
\end{center}
\vspace{-8pt}
\end{table}

\paragraph{Contribution from pessimism}
To understand pessimism empirically, we zoom into the state $s=0.1$, where the optimal policy is consecutive Right, the data-supported suboptimal policy is consecutive Left, and the faulty policy centers on Stay.  Due to inaccurate MLE, a model-based algorithm without pessimism over exploits the model and converges to the faulty action Stay. In contrast, pessimism allows MoMA to trust the model on Left which has high coverage, while cautiously modifying the model such that Stay does not lead to substantial Right movement, i.e., $\hat{\psi}_0$ increases (see \Cref{fig: psi} in Appendix \ref{appendix: simulation}). This behavior is clearly captured during the training process: shown in the right plot of \Cref{fig: moma}, while the weight of the faulty action Stay monotonically increases for NPG, it decreases from the $10\textsuperscript{th}$ epoch for MoMA. As a result, the suboptimal action weight (shown middle) eventually dominates for MoMA but vanishes for NPG, and NPG's value function $V(0.1)$ under the true dynamics (shown left) decreases due to the over exploitation of the learned dynamics model.
\begin{figure}[!ht]
    \centering
    \includegraphics[width=1\linewidth]{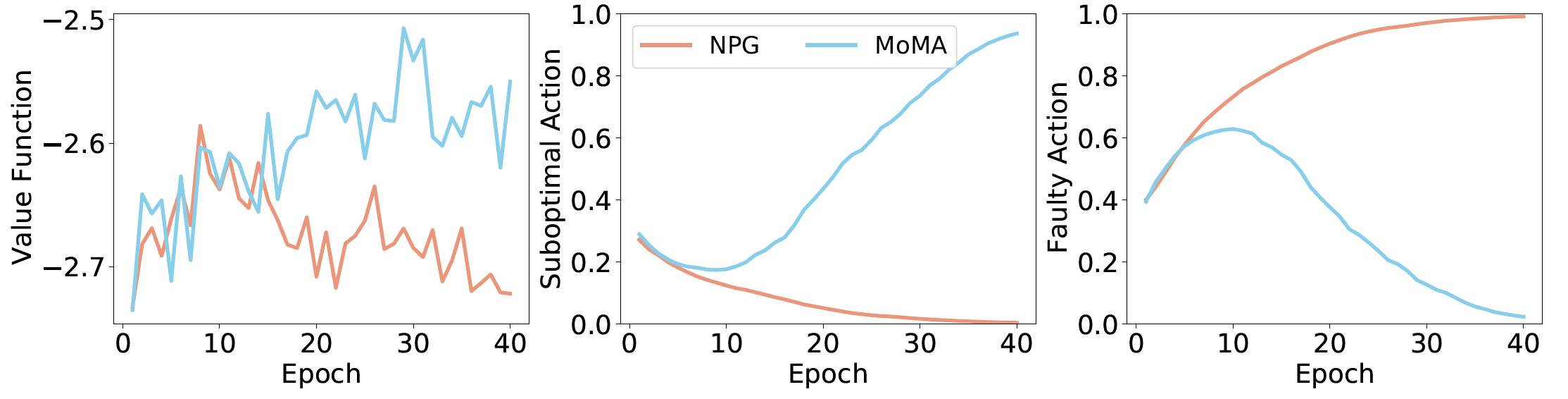}
    \vspace{-8pt}
    \caption{MoMA vs NPG training behavior, zoomed in at state $0.1$. Left: the value function $V(0.1)$; middle: the data-supported suboptimal action weight; right: the model-mislead faulty action weight.}
    \label{fig: moma}
\end{figure}

% \label{appendix:plot}
% \begin{figure}[!ht]
%     \centering
%     \includegraphics[width=0.5\linewidth]{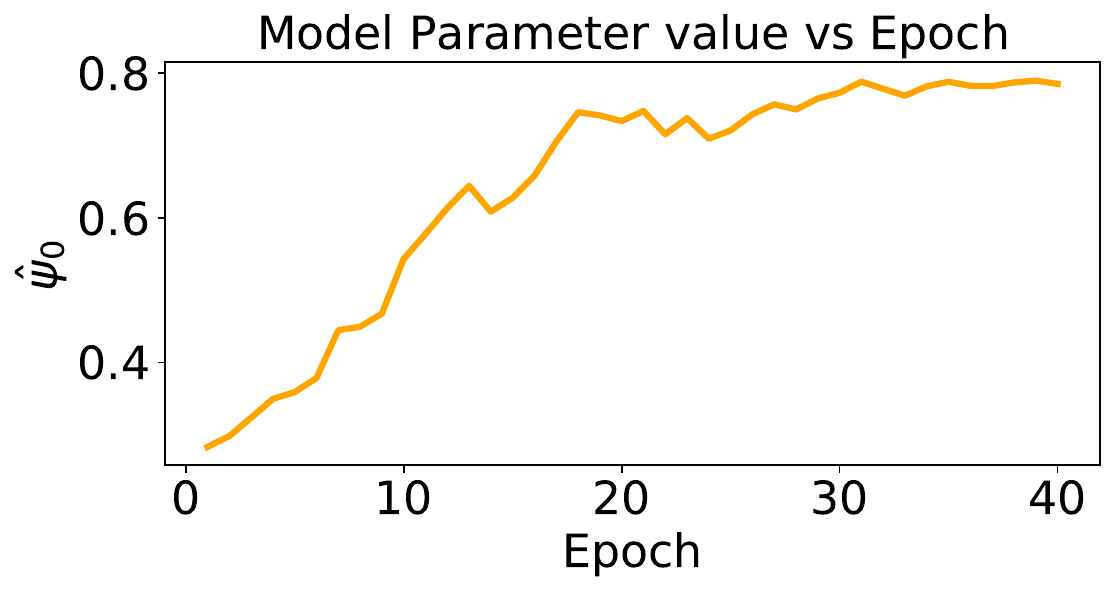}
%     \caption{The estimated model parameter $\hat{\phi}_0$ modified by the pessimism updates. As a result, MoMA policy shifts away from the faulty action Stay.}
%     \label{fig: psi}
% \end{figure}

\subsection{Continuous action D4RL benchmark experiments}
% We now extend MoMA to handle complex RL tasks with nonlinear dynamics and continuous action spaces. To this end, it is crucial to realize that the primary difference between the finite-action and continuous-action settings lies in the approximation of $L_t(s,a)$. In finite action scenarios, $L_t(s,a)$ (in section 5.2 \textcolor{red}{double check this!}) is expressed as $L_t(s,p)=\langle Q_{P_t}^{\pi_t}(s,\cdot),p\rangle+\frac{1}{\eta_t}\left\langle\nabla \omega\left(\pi_t(s)\right), p\right\rangle$, which is linear in $p$. We proposed approximating $Q_{P_t}^{\pi_t}(s,\cdot)+\frac{1}{\eta_t}\nabla \omega\left(\pi_t(s)\right)$ by $\Tilde{L}_t(s;\beta_{t})$, and then $\langle \Tilde{L}_t(s;\beta_{t}),p\rangle$ can be used to approximate $L_t(s,p)$. However, when considering continuous action spaces, $L_t(s,a)=Q_{P_t}^{\pi_t}(s,a)+\frac{1}{\eta_t}\left\langle\nabla \omega\left(\pi_t(s)\right), a\right\rangle$ is not necessarily linear in $a$. To mitigate this non-linearity, we now propose separately approximating $Q_{P_t}^{\pi_t}(s,a)$ with $\Tilde{Q}_t(s,a;\beta_t)$ and $\nabla \omega\left(\pi_t(s)\right)$ with $\Tilde{G}_t(s;\alpha_t)$. Consequently, we have $\Tilde{L}_t(s,a;\beta_t,\alpha_t)=\Tilde{Q}_t(s,a;\beta_t)+\frac{1}{\eta_t}\langle \Tilde{G}_t(s;\alpha_t),a\rangle$. With such an expression of $\Tilde{L}_t(s,a;\beta_t,\alpha_t)$, an accelerated gradient descent method can be employed for the policy computation in each iteration. Going this route, we adjust the implementation of MoMA accordingly and evaluate on D4RL \citep{fu2020d4rl} MuJoCo benchmark datasets. 
We now extend MoMA to handle complex RL tasks with nonlinear dynamics and continuous action spaces by approximating $\Tilde{Q}_{\omega,t}(s,a)$, $\forall a\in$ continuous $\mathcal{A}$ rather than approximating $\Tilde{Q}_{\omega,t}(s,A_i)$ for discrete actions. See \cref{append: extend to continuous action} for more implementation details. We adjust the implmentation accordingly, and evaluate on D4RL \citep{fu2020d4rl} MuJoCo benchmark datasets.

\par We consider the medium, medium-replay, and medium-expert datasets for the Hopper, HalfCheetah, and Walker2D tasks (all v0), respectively. We compare against SOTA model-based baseline algorithms MOPO \citep{yu2020mopo} and RAMBO \citep{rigter2022rambo}, and model-free baseline algorithms CQL \citep{kumar2020conservative} and IQL \citep{kostrikov2021offline}. With the exception of RAMBO for which we cite the results reported in \citet{rigter2022rambo}, we train all algorithms for 1E6 steps with early stopping and with 5 different random seeds; additional experimental details are given in \Cref{sec: additional}. We summarize the scores (average returns of 10 evaluation episodes) in \Cref{tab: mujoco_full}. MoMA consistently demonstrates performance that is at least comparable to state-of-the-art (SOTA) algorithms. Notably, in 4 out of 9 cases, our algorithm outperforms the other three model-based RL algorithms, achieving the highest performance.

%Our algorithm is generally at least comparable with all SOTA algorithms, and in 4 out of 9 cases our algorithm achieves the best performance among the three model-based RL algorithms.
\begin{table}[ht]
% $(*)$: We take reported results directly from previous literature (the paper ``RAMBO-RL: Robust Adversarial Model-Based Offline Reinforcement Learning") due to limited time of the rebuttal period and the long runtime of RAMBO. We will replace them with our running-in-progress reproduction results in the final version.
\caption{D4RL benchmark averaged performance over $5$ random seeds (± std.).} \label{tab: mujoco_full}
\begin{center}
\small
\begin{tabular}{l| l|l|l|l|l}
 & Ours &\multicolumn{2}{c|}{Model-based}&\multicolumn{2}{c}{Model-free}\\
\hline
& \textbf{MoMA}&\textbf{MOPO}&\textbf{RAMBO} & \textbf{IQL} & \textbf{CQL}\\
\hline 
 Hopper, medium & 42.9 ± 12.9 & 58.2 ± 15.2 &92.8 ± 6.0  &101.1 ± 0.5 & 100.6 ± 1.0 \\
 Hopper, medium-replay & 102.2 ± 0.8 & 101.2 ± 0.9 &96.6 ± 7.0 &   66.0 ± 16.2 &  89.2 ± 9.3 \\
 Hopper, medium-expert &102.5 ± 3.4 & 46.3 ± 17.1 &83.3 ± 9.1& 106.7 ± 7.1 &82.8 ± 19.0 \\
 \hline 
  HalfCheetah,medium & 44.2 ± 0.8 &  26.5 ± 10.1 &77.6 ± 1.5&  42.5 ± 0.1 &  41.3 ± 0.3\\
 HalfCheetah, medium-replay &  55.4 ± 1.0 &  53.0 ± 2.0&68.9 ± 2.3 & 42.6 ± 0.1 & 45.9 ± 0.1\\
 HalfCheetah, medium-expert & 98.4 ± 12.6 &  95.5 ± 10.6&93.7 ± 10.5  &96.9 ± 1.8 & 85.3 ± 7.9 \\
 \hline 
  Walker2D medium& 36.8 ± 20.4 &12.8 ± 7.1&86.9 ± 2.7 & 58.8 ± 4.4& 83.1 ± 0.8\\
  Walker2D, medium-replay&  28.7 ± 5.8 &  67.9 ± 5.8&85.0 ± 15.0  &22.5 ± 11.1 &28.3 ± 9.3 \\
 Walker2D, medium-expert &  98.5 ± 5.5 &  94.6 ± 16.0 &68.3 ± 20.6&108.4 ± 1.8 & 106.3 ± 15.5\\
  \hline
\end{tabular}
\end{center}
\vspace{-8pt}
\end{table}
%%%%%%%%%%%%%%%%%%%%%%%%%%%%%%%%%%%%%%%%%%%%%%%%%%%%%%
%%%%%%%%%%%%%%%%%%%%%%%%%%%%%%%%%%%%%%%%%%%%%%%%%%%%%%
%%%%%%%%%%%%%%%%%%%%%%%%%%%%%%%%%%%%%%%%%%%%%%%%%%%%%%
%%%%%%%%%%%%%%%%%%%%%%%%%%%%%%%%%%%%%%%%%%%%%%%%%%%%%%

\section{Conclusion}
\label{sec: conclusion}
\par We developed MoMA, a model-based mirror ascent algorithm for offline RL, with general function approximation under the assumption of partial coverage. A key strength of MoMA is its ability to fully exploit the potential of an unrestricted policy space, a characteristic advantage of model-based methods. This has been achieved through a combination of our theoretical framework and the application of mirror ascent techniques. Additionally, we have developed a practical, implementable version of MoMA that serves as an approximation of the theoretical algorithm. The efficacy of this practical algorithm has been demonstrated through a series of numerical experiments. 

There are several intriguing avenues for future research. One promising area involves the development of algorithms that can theoretically address the computational challenges posed by the conservative evaluation step, particularly the need for a computational oracle. Another direction involves the integration of prior domain-specific knowledge, such as physical principles in mechanical systems or clinical insights in medical applications, into the transition model. Incorporating Bayesian estimators within our framework could provide a robust means of estimating transition models in these contexts. Lastly, we can consider model misspecification, e.g.,  $P^*\notin\mathcal{P}$, analyze both the estimation error and approximation error, and investigate how they affect the suboptimality gap.

\bibliography{main_tmlr}

\begin{thebibliography}{}

\bibitem[Antos et~al., 2008]{antos2008learning}
Antos, A., Szepesv{\'a}ri, C., and Munos, R. (2008).
\newblock Learning near-optimal policies with bellman-residual minimization
  based fitted policy iteration and a single sample path.
\newblock {\em Machine Learning}, 71:89--129.

\bibitem[Beck and Teboulle, 2003]{beck2003mirror}
Beck, A. and Teboulle, M. (2003).
\newblock Mirror descent and nonlinear projected subgradient methods for convex
  optimization.
\newblock {\em Operations Research Letters}, 31(3):167--175.

\bibitem[Bhardwaj et~al., 2023]{bhardwaj2023adversarial}
Bhardwaj, M., Xie, T., Boots, B., Jiang, N., and Cheng, C.-A. (2023).
\newblock Adversarial model for offline reinforcement learning.
\newblock {\em arXiv preprint arXiv:2302.11048}.

\bibitem[Chang et~al., 2021]{chang2021mitigating}
Chang, J., Uehara, M., Sreenivas, D., Kidambi, R., and Sun, W. (2021).
\newblock Mitigating covariate shift in imitation learning via offline data
  with partial coverage.
\newblock {\em Advances in Neural Information Processing Systems}, 34:965--979.

\bibitem[Chen and Jiang, 2019]{chen2019information}
Chen, J. and Jiang, N. (2019).
\newblock Information-theoretic considerations in batch reinforcement learning.
\newblock In {\em International Conference on Machine Learning}, pages
  1042--1051. PMLR.

\bibitem[Cheng et~al., 2022]{cheng2022adversarially}
Cheng, C.-A., Xie, T., Jiang, N., and Agarwal, A. (2022).
\newblock Adversarially trained actor critic for offline reinforcement
  learning.
\newblock {\em arXiv preprint arXiv:2202.02446}.

\bibitem[Farahmand et~al., 2010]{farahmand2010error}
Farahmand, A.-m., Szepesv{\'a}ri, C., and Munos, R. (2010).
\newblock Error propagation for approximate policy and value iteration.
\newblock {\em Advances in Neural Information Processing Systems}, 23.

\bibitem[Fu et~al., 2020]{fu2020d4rl}
Fu, J., Kumar, A., Nachum, O., Tucker, G., and Levine, S. (2020).
\newblock D4rl: Datasets for deep data-driven reinforcement learning.
\newblock {\em arXiv preprint arXiv:2004.07219}.

\bibitem[Fujimoto and Gu, 2021]{fujimoto2021minimalist}
Fujimoto, S. and Gu, S.~S. (2021).
\newblock A minimalist approach to offline reinforcement learning.
\newblock {\em Advances in neural information processing systems},
  34:20132--20145.

\bibitem[Fujimoto et~al., 2019]{fujimoto2019off}
Fujimoto, S., Meger, D., and Precup, D. (2019).
\newblock Off-policy deep reinforcement learning without exploration.
\newblock In {\em International conference on machine learning}, pages
  2052--2062. PMLR.

\bibitem[Ghosal and Van~der Vaart, 2017]{ghosal2017fundamentals}
Ghosal, S. and Van~der Vaart, A. (2017).
\newblock {\em Fundamentals of nonparametric Bayesian inference}, volume~44.
\newblock Cambridge University Press.

\bibitem[Gottesman et~al., 2019]{gottesman2019guidelines}
Gottesman, O., Johansson, F., Komorowski, M., Faisal, A., Sontag, D.,
  Doshi-Velez, F., and Celi, L.~A. (2019).
\newblock Guidelines for reinforcement learning in healthcare.
\newblock {\em Nature medicine}, 25(1):16--18.

\bibitem[Guo et~al., 2022]{guo2022model}
Guo, K., Yunfeng, S., and Geng, Y. (2022).
\newblock Model-based offline reinforcement learning with pessimism-modulated
  dynamics belief.
\newblock {\em Advances in Neural Information Processing Systems}, 35:449--461.

\bibitem[Jin et~al., 2021]{jin2021pessimism}
Jin, Y., Yang, Z., and Wang, Z. (2021).
\newblock Is pessimism provably efficient for offline rl?
\newblock In {\em International Conference on Machine Learning}, pages
  5084--5096. PMLR.

\bibitem[Kakade, 2001]{kakade2001natural}
Kakade, S.~M. (2001).
\newblock A natural policy gradient.
\newblock {\em Advances in neural information processing systems}, 14.

\bibitem[Khodadadian et~al., 2021]{khodadadian2021finite}
Khodadadian, S., Chen, Z., and Maguluri, S.~T. (2021).
\newblock Finite-sample analysis of off-policy natural actor-critic algorithm.
\newblock In {\em International Conference on Machine Learning}, pages
  5420--5431. PMLR.

\bibitem[Kidambi et~al., 2020]{kidambi2020morel}
Kidambi, R., Rajeswaran, A., Netrapalli, P., and Joachims, T. (2020).
\newblock Morel: Model-based offline reinforcement learning.
\newblock {\em Advances in neural information processing systems},
  33:21810--21823.

\bibitem[Kober et~al., 2013]{kober2013reinforcement}
Kober, J., Bagnell, J.~A., and Peters, J. (2013).
\newblock Reinforcement learning in robotics: A survey.
\newblock {\em The International Journal of Robotics Research},
  32(11):1238--1274.

\bibitem[Kostrikov et~al., 2021]{kostrikov2021offline}
Kostrikov, I., Fergus, R., Tompson, J., and Nachum, O. (2021).
\newblock Offline reinforcement learning with fisher divergence critic
  regularization.
\newblock In {\em International Conference on Machine Learning}, pages
  5774--5783. PMLR.

\bibitem[Kumar et~al., 2019]{kumar2019stabilizing}
Kumar, A., Fu, J., Soh, M., Tucker, G., and Levine, S. (2019).
\newblock Stabilizing off-policy q-learning via bootstrapping error reduction.
\newblock {\em Advances in Neural Information Processing Systems}, 32.

\bibitem[Kumar et~al., 2020]{kumar2020conservative}
Kumar, A., Zhou, A., Tucker, G., and Levine, S. (2020).
\newblock Conservative q-learning for offline reinforcement learning.
\newblock {\em Advances in Neural Information Processing Systems},
  33:1179--1191.

\bibitem[Lan, 2022]{lan2022policyB}
Lan, G. (2022).
\newblock Policy optimization over general state and action spaces.
\newblock {\em arXiv preprint arXiv:2211.16715}.

\bibitem[Lange et~al., 2012]{lange2012batch}
Lange, S., Gabel, T., and Riedmiller, M. (2012).
\newblock Batch reinforcement learning.
\newblock In {\em Reinforcement learning: State-of-the-art}, pages 45--73.
  Springer.

\bibitem[Laroche et~al., 2019]{laroche2019safe}
Laroche, R., Trichelair, P., and Des~Combes, R.~T. (2019).
\newblock Safe policy improvement with baseline bootstrapping.
\newblock In {\em International conference on machine learning}, pages
  3652--3661. PMLR.

\bibitem[Levine et~al., 2020]{levine2020offline}
Levine, S., Kumar, A., Tucker, G., and Fu, J. (2020).
\newblock Offline reinforcement learning: Tutorial, review, and perspectives on
  open problems.
\newblock {\em arXiv preprint arXiv:2005.01643}.

\bibitem[Liu et~al., 2019]{liu2019off}
Liu, Y., Swaminathan, A., Agarwal, A., and Brunskill, E. (2019).
\newblock Off-policy policy gradient with state distribution correction.
\newblock {\em arXiv preprint arXiv:1904.08473}.

\bibitem[Liu et~al., 2020]{liu2020provably}
Liu, Y., Swaminathan, A., Agarwal, A., and Brunskill, E. (2020).
\newblock Provably good batch off-policy reinforcement learning without great
  exploration.
\newblock {\em Advances in neural information processing systems},
  33:1264--1274.

\bibitem[M{\"u}ller, 1997]{muller1997integral}
M{\"u}ller, A. (1997).
\newblock Integral probability metrics and their generating classes of
  functions.
\newblock {\em Advances in Applied Probability}, 29(2):429--443.

\bibitem[Munos, 2003]{munos2003error}
Munos, R. (2003).
\newblock Error bounds for approximate policy iteration.
\newblock In {\em ICML}, volume~3, pages 560--567. Citeseer.

\bibitem[Munos and Szepesv{\'a}ri, 2008]{munos2008finite}
Munos, R. and Szepesv{\'a}ri, C. (2008).
\newblock Finite-time bounds for fitted value iteration.
\newblock {\em Journal of Machine Learning Research}, 9(5).

\bibitem[Nachum et~al., 2019]{nachum2019algaedice}
Nachum, O., Dai, B., Kostrikov, I., Chow, Y., Li, L., and Schuurmans, D.
  (2019).
\newblock Algaedice: Policy gradient from arbitrary experience.
\newblock {\em arXiv preprint arXiv:1912.02074}.

\bibitem[Rashidinejad et~al., 2021]{rashidinejad2021bridging}
Rashidinejad, P., Zhu, B., Ma, C., Jiao, J., and Russell, S. (2021).
\newblock Bridging offline reinforcement learning and imitation learning: A
  tale of pessimism.
\newblock {\em Advances in Neural Information Processing Systems},
  34:11702--11716.

\bibitem[Rashidinejad et~al., 2022]{rashidinejad2022optimal}
Rashidinejad, P., Zhu, H., Yang, K., Russell, S., and Jiao, J. (2022).
\newblock Optimal conservative offline rl with general function approximation
  via augmented lagrangian.
\newblock {\em arXiv preprint arXiv:2211.00716}.

\bibitem[Riedmiller, 2005]{riedmiller2005neural}
Riedmiller, M. (2005).
\newblock Neural fitted q iteration--first experiences with a data efficient
  neural reinforcement learning method.
\newblock In {\em Machine Learning: ECML 2005: 16th European Conference on
  Machine Learning, Porto, Portugal, October 3-7, 2005. Proceedings 16}, pages
  317--328. Springer.

\bibitem[Rigter et~al., 2022]{rigter2022rambo}
Rigter, M., Lacerda, B., and Hawes, N. (2022).
\newblock Rambo-rl: Robust adversarial model-based offline reinforcement
  learning.
\newblock {\em arXiv preprint arXiv:2204.12581}.

\bibitem[Ross and Bagnell, 2012]{ross2012agnostic}
Ross, S. and Bagnell, J.~A. (2012).
\newblock Agnostic system identification for model-based reinforcement
  learning.
\newblock {\em arXiv preprint arXiv:1203.1007}.

\bibitem[Shalev-Shwartz et~al., 2016]{shalev2016safe}
Shalev-Shwartz, S., Shammah, S., and Shashua, A. (2016).
\newblock Safe, multi-agent, reinforcement learning for autonomous driving.
\newblock {\em arXiv preprint arXiv:1610.03295}.

\bibitem[Shi et~al., 2022]{shi2022pessimistic}
Shi, L., Li, G., Wei, Y., Chen, Y., and Chi, Y. (2022).
\newblock Pessimistic q-learning for offline reinforcement learning: Towards
  optimal sample complexity.
\newblock In {\em International Conference on Machine Learning}, pages
  19967--20025. PMLR.

\bibitem[Siegel et~al., 2020]{siegel2020keep}
Siegel, N.~Y., Springenberg, J.~T., Berkenkamp, F., Abdolmaleki, A., Neunert,
  M., Lampe, T., Hafner, R., Heess, N., and Riedmiller, M. (2020).
\newblock Keep doing what worked: Behavioral modelling priors for offline
  reinforcement learning.
\newblock {\em arXiv preprint arXiv:2002.08396}.

\bibitem[Todorov et~al., 2012]{todorov2012mujoco}
Todorov, E., Erez, T., and Tassa, Y. (2012).
\newblock Mujoco: A physics engine for model-based control.
\newblock In {\em 2012 IEEE/RSJ international conference on intelligent robots
  and systems}, pages 5026--5033. IEEE.

\bibitem[Torrado et~al., 2018]{torrado2018deep}
Torrado, R.~R., Bontrager, P., Togelius, J., Liu, J., and Perez-Liebana, D.
  (2018).
\newblock Deep reinforcement learning for general video game ai.
\newblock In {\em 2018 IEEE Conference on Computational Intelligence and Games
  (CIG)}, pages 1--8. IEEE.

\bibitem[Uehara et~al., 2020]{uehara2020minimax}
Uehara, M., Huang, J., and Jiang, N. (2020).
\newblock Minimax weight and q-function learning for off-policy evaluation.
\newblock In {\em International Conference on Machine Learning}, pages
  9659--9668. PMLR.

\bibitem[Uehara and Sun, 2021]{uehara2021pessimistic}
Uehara, M. and Sun, W. (2021).
\newblock Pessimistic model-based offline reinforcement learning under partial
  coverage.
\newblock In {\em International Conference on Learning Representations}.

\bibitem[Van~de Geer and van~de Geer, 2000]{van2000empirical}
Van~de Geer, S.~A. and van~de Geer, S. (2000).
\newblock {\em Empirical Processes in M-estimation}, volume~6.
\newblock Cambridge university press.

\bibitem[Wainwright, 2019]{wainwright2019high}
Wainwright, M.~J. (2019).
\newblock {\em High-dimensional statistics: A non-asymptotic viewpoint},
  volume~48.
\newblock Cambridge university press.

\bibitem[Wu et~al., 2019]{wu2019behavior}
Wu, Y., Tucker, G., and Nachum, O. (2019).
\newblock Behavior regularized offline reinforcement learning.
\newblock {\em arXiv preprint arXiv:1911.11361}.

\bibitem[Xie et~al., 2021]{xie2021bellman}
Xie, T., Cheng, C.-A., Jiang, N., Mineiro, P., and Agarwal, A. (2021).
\newblock Bellman-consistent pessimism for offline reinforcement learning.
\newblock {\em Advances in neural information processing systems},
  34:6683--6694.

\bibitem[Xie and Jiang, 2020]{xie2020q}
Xie, T. and Jiang, N. (2020).
\newblock Q* approximation schemes for batch reinforcement learning: A
  theoretical comparison.
\newblock In {\em Conference on Uncertainty in Artificial Intelligence}, pages
  550--559. PMLR.

\bibitem[Xie and Jiang, 2021]{xie2021batch}
Xie, T. and Jiang, N. (2021).
\newblock Batch value-function approximation with only realizability.
\newblock In {\em International Conference on Machine Learning}, pages
  11404--11413. PMLR.

\bibitem[Yin and Wang, 2021]{yin2021towards}
Yin, M. and Wang, Y.-X. (2021).
\newblock Towards instance-optimal offline reinforcement learning with
  pessimism.
\newblock {\em Advances in neural information processing systems},
  34:4065--4078.

\bibitem[Yu et~al., 2021]{yu2021combo}
Yu, T., Kumar, A., Rafailov, R., Rajeswaran, A., Levine, S., and Finn, C.
  (2021).
\newblock Combo: Conservative offline model-based policy optimization.
\newblock {\em Advances in neural information processing systems},
  34:28954--28967.

\bibitem[Yu et~al., 2020]{yu2020mopo}
Yu, T., Thomas, G., Yu, L., Ermon, S., Zou, J.~Y., Levine, S., Finn, C., and
  Ma, T. (2020).
\newblock Mopo: Model-based offline policy optimization.
\newblock {\em Advances in Neural Information Processing Systems},
  33:14129--14142.

\bibitem[Zanette et~al., 2021]{zanette2021provable}
Zanette, A., Wainwright, M.~J., and Brunskill, E. (2021).
\newblock Provable benefits of actor-critic methods for offline reinforcement
  learning.
\newblock {\em Advances in neural information processing systems}, 34.

\bibitem[Zhang et~al., 2022]{zhang2022corruption}
Zhang, X., Chen, Y., Zhu, X., and Sun, W. (2022).
\newblock Corruption-robust offline reinforcement learning.
\newblock In {\em International Conference on Artificial Intelligence and
  Statistics}, pages 5757--5773. PMLR.

\end{thebibliography}
\bibliographystyle{apalike}

%%%%%%%%%%%%%%%%%%%%%%%%%%%%%%%%%%%%%%%%%%%%%%%%%%%%%%
%%%%%%%%%%%%%%%%%%%%%%%%%%%%%%%%%%%%%%%%%%%%%%%%%%%%%%
%%%%%%%%%%%%%%%%%%%%%%%%%%%%%%%%%%%%%%%%%%%%%%%%%%%%%%
%%%%%%%%%%%%%%%%%%%%%%%%%%%%%%%%%%%%%%%%%%%%%%%%%%%%%%

\newpage
\section*{Appendix}
\addcontentsline{toc}{section}{Appendix}
\appendix

\section{Further discussions}
\label{append: discussions}
\subsection{Comparisons with existing works: the policy evaluation step}
In Section \ref{sec: policy eval}, we mentioned two fundamental advantages that can be attributed to the MoMA's design in the policy evaluation phase: 1) better expressiveness of the policy class, and 2) more flexibility of function approximations. We add more explanations about these two points here. First, we elaborate on better expressiveness of the policy class and more flexibility of the value function class. Indeed, these two advantages mainly stem from the construction of a confidence set as well as the separation of estimation and optimization under our model-based framework. Specifically, the offline dataset is only used to infer the transition model rather than directly infer the value function (which also depends on a policy). Thanks to this model-based feature and the framework for the proposed algorithm, neither the size of the value function class nor that of the policy class is limited by the size of the offline dataset. In fact, the policy class in our settings can be taken large enough to contain the optimal policy, and the size of the value function class can keep growing until it contains the true value as long as we run Algorithm \ref{alg: alg 3} enough times to generate sufficient Monte Carlo samples. These features result in an optimal rate of $O(1/\sqrt{n})$ as shown in Theorem \ref{main them of alg 2}, which outperforms the existing work \citep{xie2021bellman}.  
% Firstly, no requirement of Bellman completeness is a direct result of the intrinsic structure of model-based Reinforcement Learning (RL), \textcolor{red}{can we expand on this / be more explicit about this? otherwise this sentence does not add anything more and is just repeating the beginning sentence} contrasting with model-free methods which heavily depend on Bellman completeness \citep{zanette2021provable,xie2021bellman}. The second advantage arises from our approach to policy updates. Specifically, we consider a general policy class instead of a restrictive parameterized policy class which may not contain the optimal policy \citep{xie2021bellman,cheng2022adversarially} \textcolor{red}{Same for this, after ``specifically" it pretty much repeats the point instead of explaining the point}.
% \HM{Indeed, MoMA allows for a general function approximation that is not limited by the size of offline samples $n$, which results in an optimal rate of $O(1/\sqrt{n})$ as shown in Theorem \ref{main them of alg 2}.}
% Lastly, the third advantage comes from the MoMA framework, allowing for a general function approximation that is not limited by the size of offline samples. 
To illustrate this, we compare Corollary 5 of Section 4.1 in \citet{xie2021bellman} with Theorem 6.11 from our work, both discussing the suboptimality gaps under general function approximation. Corollary 5 in \citet{xie2021bellman} presented a convergence rate relative to the offline sample size $n$ as $O(1/n^{1/5})$, while Theorem 6.11 established a rate of $O(1/\sqrt{n})$. Therefore, MoMA enjoys the benefits of possessing more general policy classes and value function classes, while making no sacrifice on the data efficiency.

\subsection{Comparisons with existing works: the policy improvement step}
For the policy improvement step, we have developed the first computationally efficient algorithm under general function approximations (rather than linear approximations) with a theoretical guarantee. Existing literature is only computationally efficient either under linear approximation settings \citep{zanette2021provable,xie2021bellman}, or without a theoretical guarantee for the policy improvement step \citep{cheng2022adversarially}. We give detailed comparisons below.

% \textcolor{red}{Existing literature is only computationally efficient either under linear approximation settings, or without a theoretical guarantee for the policy improvement step} We elaborate on this point below \textcolor{red}{We give detailed comparisons below}. 

Though Algorithm 1 of \citet{xie2021bellman} employs a mirror ascent method, it is not efficiently implementable when $|S|=\infty$, since it is impossible to enumerate every $s$ in a continuous state space to update the policy when the $f_t$ in Algorithm 1 of \citet{xie2021bellman} actually needs the access to $\pi_t(\cdot\mid s)$ for every $s$. Even if finitely many $\pi_t(s)$ are employed for obtaining $f_t$ via Monte Carlo methods, it still incurs an exponential complexity of at least $\Omega\left(C^T\right)$. To clearly show the difference, we exhibit our proposed algorithm and the one in \citet{xie2021bellman}:

\begin{itemize}
    \item Our proposed update rule when $\omega(p)=\sum_{i=1}^m p_i \log p_i$:
$$\pi_{t+1}\left(A_i \mid s\right) \propto \exp \left(\eta_t f_{t,i}(s;\widehat{\beta}_{t,i})\right)$$ for each $i=1, \ldots, m$.
\item The update rule in Algorithm 1 in \citet{xie2021bellman}:
$$\pi_{t+1}\left(A_i \mid s\right) \propto \exp \left(\eta_t \tilde{f}_t\left(s ; \widehat{\beta}_{t, i}\right)\right) \pi_t\left(A_i \mid s\right)$$ for each $i=1, \ldots, m$.

\end{itemize}

In the update rule of \citet{xie2021bellman}, $\pi_{t+1}\left(A_i \mid s\right)$ does not obtain a closed form without iteratively calling the previous iteration. In the continuous state space, this procedure is computationally inefficient. Moreover, assuming that the number of calls of $ \pi_t(\cdot)$ used to approximate $\tilde{f}_t\left(\cdot ; \widehat{\beta}_{t, i}\right)$ is $C$, then at least $\Omega\left(C^T\right)$ number of operations related to policies are needed. (See \ref{subsec: efficient update}). In contrast, our algorithm's cost related to evaluating policies is $O\left(T\left(\frac{K L^2}{1-\gamma}+\frac{N L}{(1-\gamma)^2}\right)\right)$, evidently more computationally efficient.

Additionally, while \citet{cheng2022adversarially} considers a parametric policy class, they do not provide a theoretical analysis for the actor step. Further, \citet{zanette2021provable} only focuses on linear approximations, which may not be applicable in more general settings. 

In summary, MoMA can be utilized when policy classes and value function classes of greater generality are needed, with no sacrifice on computational efficiency.

% In summary, we design a policy improvement step with theoretical guarantees, computational efficiency, unconstrained policy class, and general function approximation (rather than linear) \textcolor{red}{Delete this last sentence? Serves no real purpose. Or, rephrase it as a comparison with previous works: in contrast, we provide full analysis, and consider more general approximations, etc}.

\subsection{Efficient policy update}
\label{subsec: efficient update}
We provide a detailed discussion of the computational complexity for each step in our MoMA algorithm here. We first consider the case $\omega(p)=\sum_{i=1}^m p_i \log p_i$, which leads to a closed function form of $\pi_{t+1}\left(A_i \mid s\right) \propto \exp \left(\eta_t f_{t,i}\left(s ; \widehat{\beta}_{t, i}\right)\right)$ for each $i=1, \ldots, m$.
In the policy evaluation step $t$, given $\widehat{\beta}_{t-1}$ which is the output from the $(t-1)$-th iteration, we can count the number of calls of $\pi_t(s)$ from $t$ to $t+1$ as by realizing we need $\pi_t$ in the Monte Carlo evaluation of $V_{P_k}^{\pi_t}$ for $k=1, \ldots, K$
and the sampling from $d_P^{\pi_t}$ in the policy evaluation step. Specifically, for each sampling or Monte Carlo evaluation, the effective numbers of using $\pi_t$ is $\frac{1}{1-\gamma}$, which is the effective trajectory length in an infinite-horizon discounted MDP. Therefore, for each $t$, in the policy evaluation step, we need to use $\pi_t$ for a total of $O\left(\frac{K L^2}{1-\gamma}\right)$ times, where $L$ denotes the number of Monte Carlo trajectories.
In the policy improvement step, we need $\pi_t$ in the sampling of $\left(s_j, A_i\right)_{j=1}^N$ and Monte Carlo evaluation of $\Tilde{Q}_{\omega,t}(s_j,A_i)$ for each $j, i$. Therefore, we need $O\left(\frac{N L}{(1-\gamma)^2}\right)$  Monte Carlo trajectories starting from $\left(s_j, p_j\right)$ for each $j=1, \ldots, N$.
Therefore, collectively at each $t$ in \cref{alg: alg 2}, we need to evaluate the function $\pi_t\left(A_i \mid \cdot\right)=\exp \left(f_{t-1,i}\left(\cdot; \widehat{\beta}_{t-1, i}\right)\right) / C$ approximately $O\left(\frac{K L^2}{1-\gamma}+\frac{N L}{(1-\gamma)^2}\right)$ times, which is independent of $t$. 
More generally, when $\omega$ does not induce an explicit solution to \eqref{formula: update beta}), then $I$ more steps for gradient descent of \eqref{formula: update beta} may be needed. In that case, running \cref{alg: alg 2} costs $O\left(IT\left(\frac{K L^2}{1-\gamma}+\frac{N L}{(1-\gamma)^2}\right)\right)$ operations related to policy updates, which is polynomial on all the key parameters.

\section{Extension to continuous action space}
\label{append: extend to continuous action}
We now extend MoMA to handle complex RL tasks with nonlinear dynamics and continuous action spaces. Instead of considering $p\in \Delta(\mathcal{A})$ introduced in section \ref{sec: policy improvement} where $\mathcal{A}$ is assumed to be finite, we consider a continuous action space $\mathcal{A}\subset \mathbb R^{d_\mathcal{A}}$ in this section. Here $d_\mathcal{A}$ is the dimension of the action space and $\mathcal{A}$ is assumed to be a compact convex set. In the continuous-action case, we consider the deterministic policy $\pi:\mathcal{S}\to\mathcal{A}$, i.e. $\pi(s)\in \mathcal{A}$ is a feasible action for each state $s\in\mathcal{S}$. 

In this case, a proposed update rule is 
\begin{equation}
\label{equ: continuous action update}
    \pi_{t+1}(s)=\underset{a \in \mathcal{A}}{\arg \max }\left\{ Q_{P_t}^{\pi_t}(s, a)-\frac{1}{\eta_t} D\left(\pi_t(s), a\right)\right\} \quad \forall s .
\end{equation}

Still, since the update rule (\ref{equ: continuous action update}) is computationally infeasible for infinitely many $s$, we propose a version with function approximation that is similar to section \ref{sec: func approx in MA}. Specifically, by expanding $D(\pi_t(s),a)$, the objective function in (\ref{equ: continuous action update}) is equivalent to  
$Q_{P_t}^{\pi_t}(s, a)+\frac{1}{\eta_t}\left\langle \nabla\omega\left(\pi_t( s)\right),a\right\rangle-\frac{1}{\eta_t} \omega(a)$. We also define $\Tilde{\mathcal Q}_{\omega,t}(s,a):=  Q_{P_t}^{\pi_t}(s, a)+\frac{1}{\eta_t}\left\langle \nabla\omega\left(\pi_t( s)\right),a\right\rangle$ as the \textit{augmented action-value function}. We then approximate $\Tilde{\mathcal Q}_{\omega,t}(s,a)$ by a parametric function $f_t(s,a;\beta_t)\in\mathcal{F}_t$ such that
$f_t(s,a;\beta_t^*)\approx \Tilde{\mathcal Q}_{\omega,t}(s,a)$ for some $\beta_t^*$. Here $\mathcal{F}_t$ can be RKHS or Neural Networks.

In particular, the optimal parameter $\beta_{t}^*$ can be obtained as follows, 
\begin{equation}
\label{choice of beta: continuous}
\beta_{t}^* \in \operatorname{argmin}_{\beta_{t}} \mathbb{E}_{(s,a) \sim d_{P_t}^{\pi_t}}\left[\left(\Tilde{Q}_{\omega,t}(s,a)-f_{t}(s,a;\beta_{t})\right)^2\right].
\end{equation}

Specifically, we can generate $\{s_j,a_j\}_{j=1}^N\sim d_{P_t}^{\pi_t}$, and then minimize the empirical version of \eqref{choice of beta}:
\begin{equation}
\label{beta hat: continuous}
\widehat{\beta}_{t} \in \operatorname{argmin}_{\beta_{t}} \frac{1}{N}\sum_{j=1}^N\left[\left(\Tilde{Q}_{\omega,t}(s_j,a_j)-f_{t}(s_j,a_j;\beta_{t})\right)^2\right],
\end{equation}
where $\{\Tilde{Q}_{\omega,t}(s_j,a_j)\}_{j=1}^N$ are output from 
\cref{alg: alg 4} (see Appendix \ref{append: algo}). The computable $\widehat{\beta}_{t}$ satisfies the property that $f_{t}(s,a;\widehat{\beta}_{t})\approx f_{t}(s,a;\beta_{t}^*)\approx \Tilde{ Q}_{\omega,t}(s,a)$. 

Finally, the update rule involving function approximation can be written as

\begin{equation}
\label{formula: update beta - continuous}
\pi_{t+1}(s)=\underset{a \in \mathcal{A}}{\arg \max }\left\{ f_{t}(s,a;\widehat{\beta}_{t})-\frac{1}{\eta_t} \omega(a)\right\}, \forall s \in \mathcal{S}.
\end{equation}
A standard optimization procedure such as accelerated gradient descent method can be employed to solve (\ref{formula: update beta - continuous}).

For completeness, we summarize the whole algorithm for the continuous-action case in \cref{alg: alg 5}.

\begin{algorithm}[tb]
   \caption{MoMA: A Practical Algorithm in the continuous-action case}
   \label{alg: alg 5}
\begin{algorithmic}
   \STATE {\bfseries Input:} The learning rate $\eta_t$, $\mathcal{P}_{n,\alpha_n}$.

   \STATE {\bfseries Initialization:} Initialize $\pi_0=\mathrm{Unif}(\mathcal{A})$.
   \FOR{$t=1$ {\bfseries to} $T$}
   \STATE Conservative policy evaluation: \\
   Compute $P_t:= P_{\phi^{(K)}}$, where $\phi^{(K)}$ is the output from \cref{algo: PD}.
   \STATE Policy improvement: \\
   \STATE Sample $\{s_j,a_j\}_{j=1}^N$ from $d_{P_t}^{\pi_t}$.\\
       \FOR{$j=1$ {\bfseries to} $N$}
           \STATE Input $f_{t-1}(s,a;\widehat{\beta}_{t-1})$ and $(s_j,a_j)$ into \cref{alg: alg 4}, and output $\Tilde{Q}_{\omega,t}(s_j,a_j)$.         
       \ENDFOR
    \STATE Find $\widehat{\beta}_{t,i}$ that solves \cref{beta hat: continuous}.
    \STATE Save the parametric function $f_{t}(s,a;\widehat{\beta}_{t})$ as an input for iteration $t+1$.
   \ENDFOR
\end{algorithmic}
\end{algorithm}

% To this end, it is crucial to realize that the primary difference between the finite-action and continuous-action settings lies in the approximation of $\Tilde{Q}_{\omega,t}(s,a)$. In finite action scenarios, $L_t(s,a)$ (in section 5.2 \textcolor{red}{double check this!}) is expressed as $L_t(s,p)=\langle Q_{P_t}^{\pi_t}(s,\cdot),p\rangle+\frac{1}{\eta_t}\left\langle\nabla \omega\left(\pi_t(s)\right), p\right\rangle$, which is linear in $p$. We proposed approximating $Q_{P_t}^{\pi_t}(s,\cdot)+\frac{1}{\eta_t}\nabla \omega\left(\pi_t(s)\right)$ by $\Tilde{L}_t(s;\beta_{t})$, and then $\langle \Tilde{L}_t(s;\beta_{t}),p\rangle$ can be used to approximate $L_t(s,p)$. However, when considering continuous action spaces, $L_t(s,a)=Q_{P_t}^{\pi_t}(s,a)+\frac{1}{\eta_t}\left\langle\nabla \omega\left(\pi_t(s)\right), a\right\rangle$ is not necessarily linear in $a$. To mitigate this non-linearity, we now propose separately approximating $Q_{P_t}^{\pi_t}(s,a)$ with $\Tilde{Q}_t(s,a;\beta_t)$ and $\nabla \omega\left(\pi_t(s)\right)$ with $\Tilde{G}_t(s;\alpha_t)$. Consequently, we have $\Tilde{L}_t(s,a;\beta_t,\alpha_t)=\Tilde{Q}_t(s,a;\beta_t)+\frac{1}{\eta_t}\langle \Tilde{G}_t(s;\alpha_t),a\rangle$. With such an expression of $\Tilde{L}_t(s,a;\beta_t,\alpha_t)$, an accelerated gradient descent method can be employed for the policy computation in each iteration.

%%%%%%%%%%%%%%%%%%%%%%%%%%%%%%%%%%%%%%%%%%%%%%%%%%%%%%%%%%%%%%%%%%%%%%
\section{Notations and definitions}
\label{appendix: notations and def}
\begin{definition}[Integral Probability Metric (IPM)\citep{muller1997integral}]
\label{def: IPM}
$d_{\mathcal{F}}$ is an IPM defined by $\mathcal{F}$ if
$$
d_{\mathcal{F}}(P, Q):=\sup _{f \in \mathcal{F}}\left|\underset{x \sim P}{\mathbb{E}}\left[f\left(x\right)\right]-\underset{x\sim Q}{\mathbb{E}}\left[f\left(x\right)\right]\right|
$$
where $P$ and $Q$ are two probability measures.
\end{definition}
By considering different function class $\mathcal{F}$, we have the relationships between IPM and several popular measures of distance, such as $\operatorname{TV}$ and Wasserstein distance. 

\begin{definition}[Slater's condition]
 \label{def: slater}   

The problem satisfies Slater's condition if it is strictly feasible, that is:
$$
\exists x_0 \in \mathcal{D}: f_i\left(x_0\right)<0, \quad i=1, \ldots, m, \quad h_i\left(x_0\right)=0, \quad i=1, \ldots, p
$$
\end{definition}

The next definition essentially measures the size of the function classes $\mathcal F_{t,i}$'s. If $\max_{t,i}|F_{t,i}|$ is sufficiently large, then $\varepsilon_{\operatorname{approx}} \approx 0$.
\begin{definition}[Approximation error]
\label{def: approx}   
 $$\varepsilon_{\operatorname{approx}}:=\sup_{P,\pi,t,i} \inf_{f_{t,i}\in \mathcal F_{t,i}}\| \Tilde{Q}_{\omega,t}(s,A_i)-f_{t,i}(s;\beta_{t,i}) \|_{2,d_P^\pi}.$$ 
\end{definition}

\begin{definition}[Localized Population Rademacher Complexity]\citep[chap. 14]{wainwright2019high}.
\label{def: Radamacher}
For a given radius $\delta>0$ and function class $\mathcal{F}$, a localized population Rademacher complexity is defined as
$$
\Bar{\mathcal{R}}_n(\delta ; \mathcal{F})=\mathbb{E}_{\varepsilon, x}\left[\sup _{\substack{f \in \mathcal{F} \\\|f\|_2 \leq \delta}}\left|\frac{1}{n} \sum_{i=1}^n \varepsilon_i f\left(x_i\right)\right|\right],
$$
where $\left\{x_i\right\}_{i=1}^n$ are i.i.d. samples from some underlying distribution $\mathbb{P}$, and $\left\{\varepsilon_i\right\}_{i=1}^n$ are i.i.d. Rademacher variables taking values in $\{-1,+1\}$ equiprobably, independent of the sequence $\left\{x_i\right\}_{i=1}^n$.

\end{definition}

\begin{definition}[Star-shaped function class]\citep[chap. 14]{wainwright2019high}.
\label{def: star shaped}
A function class $\mathcal{F}$ is star-shaped around origin if
for any $f \in \mathcal{F}$ and scalar $\alpha \in[0,1]$, the function $\alpha f$ also belongs to $\mathcal{F}$.
\end{definition}

\begin{definition}[Strongly convexity]
\label{def: strongly convex}
$f$ is strongly convex with modulus $\mu$ if the following holds:
\begin{equation}
f(y) \geq f(x)+\nabla f(x)^T(y-x)+\frac{\mu}{2}\|y-x\|^2.
\end{equation}
\end{definition}
%%%%%%%%%%%%%%%%%%%%%%%%%%%%%%%%%%%%%%%%%%%%%%%%%%%%%%%%%%%%%%%%%%%%%%%%%%%%%%%
%%%%%%%%%%%%%%%%%%%%%%%%%%%%%%%%%%%%%%%%%%%%%%%%%%%%%%%%%%%%%%%%%%%%%%%%%%%%%%%
%%%%%%%%%%%%%%%%%%%%%%%%%%%%%%%%%%%%%%%%%%%%%%%%%%%%%%%%%%%%%%%%%%%%%%%%%%%%%%%
\section{Additional theoretical results}
\label{append: additional theory}

The following proposition shows that the commonly-used empirical risk functions satisfy Assumption \ref{Assump: theory}(d).

\begin{proposition}
\label{thm: controlled estimation error}
Consider a uniformly bounded function class $\mathcal{L}(\mathcal{P}):=\{(s',s,a)\mapsto l(P,s',s,a),P\in\mathcal{P}\}$ that is star-shaped (defined in Appendix \ref{appendix: notations and def}) around the true $P^*$. Suppose $\delta^2_n\ge\frac{c_1}{n}$ is a solution to the inequality \(\Bar{\mathcal{R}}_n(\delta;\mathcal{L}(\mathcal{P}))\le \delta^2\)
where $\Bar{\mathcal{R}}_n(\delta;\mathcal{L}(\mathcal{P}))$ is the localized Rademacher complexity (defined in \cref{appendix: notations and def}) of the function class.  Assume $l(P,s',s,a)$ is $l_0$-Lipschitz w.r.t. $P$ , i.e.,
$$l(P_1,s',s,a) - l(P_2,s',s,a)\le l_{0} |P_1(s'|s,a)-P_2(s'|s,a)|.$$ Assume further that $l(P,s',s,a)$ is also strongly convex w.r.t. $P(s',s,a)$ under the norm $\|\cdot\|_{L^2,P^*}$. Suppose $H^2(P^*(\cdot\mid s,a),P(\cdot \mid s,a))\le c_3\mathbb{E}_{s'\sim P^*(\cdot|\mid s,a)}l(P,s',s,a) - c_3\mathbb{E}_{s'\sim P^*}l(P^*,s',s,a)$.
Let $\alpha = c_1\delta_n^2$, then with high probabilities, we have
$P^*\in \mathcal{P}_{n,\alpha_n}$ and
$$\sup_{P\in \mathcal{P}_{n,\alpha_n}} \mathbb{E}_{(s,a)\sim\rho}[\|P(\cdot\mid s,a)-P^*(\cdot\mid s,a)\|_1]\le c_2\delta_n.$$
\end{proposition}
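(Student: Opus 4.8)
The plan is to prove the two conclusions separately: first the containment $P^*\in\mathcal{P}_{n,\alpha_n}$, and then the uniform $L^1$ bound on the confidence set. For the containment, recall that $\mathcal{E}_n(P^*)=\widehat{L}_n(P^*)-\widehat{L}_n(\widehat{P})$, where $\widehat{P}$ minimizes $\widehat{L}_n$; this difference is nonnegative but must be shown to be at most $\alpha_n=c_1\delta_n^2$ with high probability. The key tool here is the localized Rademacher complexity condition $\bar{\mathcal{R}}_n(\delta;\mathcal{L}(\mathcal{P}))\le\delta^2$ with $\delta_n^2\ge c_1/n$. The idea is to use a standard basic inequality: since $\widehat{P}$ minimizes the empirical loss, $\widehat{L}_n(\widehat{P})\le\widehat{L}_n(P^*)$, so the empirical excess loss is nonnegative, and I would bound the fluctuation of the centered empirical process $(\widehat{L}_n-L)(P)-(\widehat{L}_n-L)(P^*)$ uniformly over $P$ using the star-shaped structure and the critical radius $\delta_n$. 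The uniform boundedness and star-shapedness of $\mathcal{L}(\mathcal{P})$ around $P^*$ let me invoke the machinery of Chapter 14 of \citet{wainwright2019high} to control this empirical process by $\delta_n^2$ with high probability.

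For the second conclusion, the strategy is to pass through the squared Hellinger distance as an intermediate quantity. First I would convert the empirical excess loss into a population statement: on the high-probability event from the critical-radius argument, any $P\in\mathcal{P}_{n,\alpha_n}$ satisfies a population excess-risk bound of order $\delta_n^2$, i.e., $\mathbb{E}_{(s,a)\sim\rho}\big[\mathbb{E}_{s'\sim P^*}l(P,s',s,a)-\mathbb{E}_{s'\sim P^*}l(P^*,s',s,a)\big]\lesssim\delta_n^2$. This is again a localization argument, using strong convexity of $l$ w.r.t. $P$ to relate the excess risk to the squared $\|\cdot\|_{L^2,P^*}$ distance, and the Lipschitz condition to transfer empirical-process control to the population excess risk. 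Then the assumed inequality $H^2(P^*,P)\le c_3\big(\mathbb{E}_{s'\sim P^*}l(P,\cdot)-\mathbb{E}_{s'\sim P^*}l(P^*,\cdot)\big)$ converts the population excess risk directly into a bound on the averaged squared Hellinger distance, yielding $\mathbb{E}_{(s,a)\sim\rho}[H^2(P^*(\cdot\mid s,a),P(\cdot\mid s,a))]\lesssim\delta_n^2$.

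The final step is to move from squared Hellinger distance to $L^1$ (total variation) distance. Using the standard inequality $\|P(\cdot\mid s,a)-P^*(\cdot\mid s,a)\|_1\le 2\sqrt{2}\,H(P^*(\cdot\mid s,a),P(\cdot\mid s,a))$, together with Jensen's inequality to pull the square root outside the expectation over $(s,a)\sim\rho$, I obtain
\begin{equation*}
\mathbb{E}_{(s,a)\sim\rho}[\|P(\cdot\mid s,a)-P^*(\cdot\mid s,a)\|_1]\le 2\sqrt{2}\,\sqrt{\mathbb{E}_{(s,a)\sim\rho}[H^2(P^*(\cdot\mid s,a),P(\cdot\mid s,a))]}\lesssim\delta_n,
\end{equation*}
which gives the claimed $O(\delta_n)$ rate after taking the supremum over $P\in\mathcal{P}_{n,\alpha_n}$, since the high-probability event is uniform in $P$.

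I expect the main obstacle to be the localization argument connecting the empirical excess loss to the population excess risk uniformly over the confidence set $\mathcal{P}_{n,\alpha_n}$. The delicate point is that $\mathcal{P}_{n,\alpha_n}$ is a random (data-dependent) set, so one must establish a \emph{uniform} high-probability bound over all of $\mathcal{P}$ (not just a fixed $P$) that simultaneously validates the containment of $P^*$ and the excess-risk control for every element of the confidence set. This requires carefully combining the star-shaped/critical-radius localization with the strong-convexity-to-excess-risk and Lipschitz transfer arguments, ensuring the constants $c_1,c_2,c_3$ compose correctly and the same high-probability event supports all the inequalities. The Hellinger-to-$L^1$ conversion and the Jensen step at the end are routine by comparison.
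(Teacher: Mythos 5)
Your proposal follows essentially the same route as the paper's proof: the containment $P^*\in\mathcal{P}_{n,\alpha_n}$ via the decomposition of $\widehat{L}_n(P^*)-\widehat{L}_n(\widehat{P})$ into a centered empirical process term (controlled by the critical radius $\delta_n$ and the Lipschitz/strong-convexity machinery of Chapter 14 of \citet{wainwright2019high}) plus a nonpositive population excess risk, and then the uniform $L^1$ bound by converting membership in the confidence set into a population excess-risk bound of order $\delta_n^2$, invoking the assumed Hellinger-vs-excess-risk inequality, and finishing with the Hellinger-to-total-variation inequality plus Jensen. The only notable difference is cosmetic: you make the final Jensen step explicit, which the paper's proof leaves implicit when passing from $\mathbb{E}_{\rho}[H^2]\lesssim\delta_n^2$ to $\mathbb{E}_{\rho}[\|P-P^*\|_1]\lesssim\delta_n$.
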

\cref{thm: controlled estimation error} covers the commonly-used negative likelihood function classes, in which the empirical risk minimizers are exactly MLEs. We show such a construction satisfies assumption \ref{Assump: theory}(d) in the following corollary.

\begin{corollary} 
\label{cor: mle}
Consider $l(P,s',s,a):=-\log P(s'\mid s, a)$ and $\widehat{L}_n(P)=-\frac{1}{n}\log P$. Assume there exist $b>0,\nu>0$ such that
\begin{equation}
\begin{aligned}
\sup_{P\in\mathcal{P}}\sup_{s',s,a}P(s'|s,a)<b, and,\inf_{s',s,a}P^*(s'|s,a)\ge \nu.
\end{aligned}
\end{equation}
If $\delta_n^2\ge (1+\frac{b}{\nu})\frac{1}{n}$ solves the following inequality for the local Rademacher complexity of $\mathcal{P}$:
$$\Bar{\mathcal{R}}_n(\delta;\mathcal{P})\le \frac{\delta^2}{\sqrt{b+\nu}},$$
then assumption \ref{Assump: theory}(d) holds with $\alpha_n=c_1\delta_n^2$ and $\epsilon_{est}=c_2 \delta_n$ for some constants $c_1,c_2$.

\end{corollary}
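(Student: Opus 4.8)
The plan is to derive \cref{cor: mle} as a direct specialization of \cref{thm: controlled estimation error}: once I verify that the negative log-likelihood loss $l(P,s',s,a)=-\log P(s'\mid s,a)$ satisfies every structural hypothesis of the proposition, the conclusions $P^*\in\mathcal{P}_{n,\alpha_n}$ and $\sup_{P\in\mathcal{P}_{n,\alpha_n}}\mathbb{E}_{(s,a)\sim\rho}[\|P(\cdot\mid s,a)-P^*(\cdot\mid s,a)\|_1]\le c_2\delta_n$ transfer verbatim, and the identifications $\alpha_n=c_1\delta_n^2$, $\epsilon_{est}=c_2\delta_n$ are read off. Thus the entire argument is a sequence of hypothesis checks, each translating the two scalar bounds $b$ (uniform upper bound on every $P\in\mathcal{P}$) and $\nu$ (lower bound on $P^*$) into the constants required by \cref{thm: controlled estimation error}.

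First I would dispatch the elementary analytic conditions. For \emph{boundedness and star-shapedness}, the excess loss $l(P,\cdot)-l(P^*,\cdot)=\log\bigl(P^*(s'\mid s,a)/P(s'\mid s,a)\bigr)$ is uniformly bounded once the densities stay in a fixed interval, which $P<b$ and $P^*\ge\nu$ supply on the effective support of $P^*$. For \emph{Lipschitz continuity in $P$}, the mean value theorem gives $|{-\log P_1}-({-\log P_2})|=|P_1-P_2|/\xi$ for some $\xi$ between $P_1,P_2$, so on the region where the densities are bounded below this produces the constant $l_0$ demanded by the proposition. For \emph{strong convexity}, since $\tfrac{d^2}{dP^2}(-\log P)=1/P^2\ge 1/b^2$ on $\{P\le b\}$, the map $P\mapsto-\log P$ is strongly convex with modulus $1/b^2$, which upgrades to strong convexity of $l$ in the $\|\cdot\|_{L^2,P^*}$ norm with modulus controlled by $b$.

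Next I would verify the two less mechanical conditions. The Hellinger comparison $H^2(P^*(\cdot\mid s,a),P(\cdot\mid s,a))\le c_3\bigl(\mathbb{E}_{s'\sim P^*}l(P,s',s,a)-\mathbb{E}_{s'\sim P^*}l(P^*,s',s,a)\bigr)$ is precisely the statement that squared Hellinger distance is dominated by the Kullback--Leibler divergence $\mathrm{KL}(P^*\,\|\,P)$, which equals the expected excess log-loss; this is a standard information-theoretic inequality and gives $c_3=\tfrac12$. Finally I would convert the hypothesis on the localized Rademacher complexity of $\mathcal{P}$, namely $\Bar{\mathcal{R}}_n(\delta;\mathcal{P})\le\delta^2/\sqrt{b+\nu}$, into the form $\Bar{\mathcal{R}}_n(\delta;\mathcal{L}(\mathcal{P}))\le\delta^2$ required by \cref{thm: controlled estimation error}, using the Ledoux--Talagrand contraction principle: composing $\mathcal{P}$ with the $l_0$-Lipschitz map $-\log(\cdot)$ inflates the complexity by at most the Lipschitz factor, and the factor $1/\sqrt{b+\nu}$ is what absorbs this inflation together with the rescaling of the localization radius in passing from the $L^2(P^*)$ geometry on densities to that on losses. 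With all hypotheses in force, \cref{thm: controlled estimation error} applies, yielding the claim with $\alpha_n=c_1\delta_n^2$ and $\epsilon_{est}=c_2\delta_n$; the requirement $\delta_n^2\ge(1+b/\nu)\tfrac1n$ is exactly the minimal-radius condition $\delta_n^2\gtrsim 1/n$ of the proposition, with the offset $1+b/\nu$ dictated by the range and second moment of the excess log-loss, themselves controlled by $b$ and $\nu$.

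The step I expect to be the main obstacle is the Rademacher-complexity translation together with the precise bookkeeping of $\sqrt{b+\nu}$ and $1+b/\nu$: the contraction principle controls complexity only up to the Lipschitz constant and a rescaled localization radius, so matching the exact numerical factors requires care in relating the $L^2(P^*)$ norm on densities to the $L^2$ norm on losses. A secondary subtlety is that the clean Lipschitz and boundedness estimates for $-\log P$ implicitly need the competing densities $P\in\mathcal{P}$ to stay bounded away from zero on the support of $P^*$; I would address this by restricting to the effective support where $P^*\ge\nu$ and using that membership in the confidence set keeps $P$ close to $P^*$ there.
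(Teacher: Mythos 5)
Your plan---deduce the corollary by checking the hypotheses of Proposition \ref{thm: controlled estimation error} for $l(P,s',s,a)=-\log P(s'\mid s,a)$---hits a genuine obstruction at exactly the two conditions you treat as routine: uniform boundedness of the loss class and Lipschitz continuity in $P$. The corollary assumes only $\sup_{P\in\mathcal{P}}\sup_{s',s,a}P(s'\mid s,a)<b$ and $\inf_{s',s,a}P^*(s'\mid s,a)\ge\nu$; there is \emph{no} lower bound on an arbitrary $P\in\mathcal{P}$. Consequently the excess loss $\log\bigl(P^*(s'\mid s,a)/P(s'\mid s,a)\bigr)$ is unbounded above and the derivative of $-\log P$ blows up as $P\downarrow 0$, so neither the ``uniformly bounded, star-shaped class'' requirement nor the $l_0$-Lipschitz requirement of Proposition \ref{thm: controlled estimation error} can be verified. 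The same failure propagates to your Rademacher-complexity translation: the Ledoux--Talagrand contraction principle needs precisely the Lipschitz constant you do not have, so the passage from $\Bar{\mathcal{R}}_n(\delta;\mathcal{P})\le\delta^2/\sqrt{b+\nu}$ to $\Bar{\mathcal{R}}_n(\delta;\mathcal{L}(\mathcal{P}))\le\delta^2$ also breaks. Your proposed patch---restrict to the effective support of $P^*$ and use that ``membership in the confidence set keeps $P$ close to $P^*$''---does not close the gap: to prove $P^*\in\mathcal{P}_{n,\alpha_n}$ one must control $\widehat{L}_n(P^*)-\widehat{L}_n(\widehat{P})$ where $\widehat{P}$ is the \emph{unconstrained} empirical minimizer over all of $\mathcal{P}$, so restricting to the confidence set is unavailable there; and in any case membership in $\mathcal{P}_{n,\alpha_n}$ gives closeness in empirical loss, not a pointwise lower bound on the density---converting the former into the latter is essentially the statement being proven, so the argument is circular.

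This is exactly why the paper does not prove the corollary by invoking Proposition \ref{thm: controlled estimation error}. Instead it re-runs the loss-decomposition argument directly with MLE-specific tools whose hypotheses match the corollary's asymmetric assumptions (all densities bounded above, only $P^*$ bounded below): the uniform law of Theorem \ref{thm: wainwright} controls the empirical-process terms $\widehat{L}_n-L$ appearing in the decompositions of $\widehat{L}_n(P^*)-\widehat{L}_n(\widehat{P})$ and of $L(P)-L(P^*)$ for $P\in\mathcal{P}_{n,\alpha_n}$; Lemma \ref{lemma: convergence of mle} (Wainwright's Corollary 14.22, tailored to maximum likelihood and internally circumventing the unboundedness of the log-loss) supplies the Hellinger rate for $\widehat{P}$; and Lemma \ref{lemma: KL and hellinger} converts between KL and Hellinger distances. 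The parts of your outline that are sound---strong convexity from $\tfrac{d^2}{dP^2}(-\log P)=1/P^2\ge 1/b^2$ on $\{P\le b\}$, and the standard comparison $H^2\lesssim \mathrm{KL}$---reappear in that argument, but the bridge you rely on (generic bounded-Lipschitz ERM theory) is precisely what the negative log-likelihood fails to support under these assumptions.
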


In the following, we prove a suboptimality upper bound for the policy returned by \cref{alg: alg 1}. 
\begin{theorem}
\label{main thm of alg 1}
Under assumption \ref{Assump: theory}, if $\eta_t =(1-\gamma)\sqrt{\frac{2\log(|\mathcal{A}|)}{T}}$ for every fixed $T$, then we have
\begin{equation*}
\begin{split}
V^{\pi^\dagger}_{P^*}-V_{P^*}^{\widehat{\pi}}\le & \underbrace{(\frac{\gamma}{(1-\gamma)^2}+\frac{\gamma}{(1-\gamma)^3})C_{\pi^\dagger}\ \varepsilon_{est}}_{\mathrm{statistical\ error}}+ \underbrace{\frac{1}{(1-\gamma)^2}\sqrt{\frac{2\log(|\mathcal{A}|)}{T}}}_{\mathrm{policy\ optimization\ error}}\\
\end{split}
\end{equation*}
with high probability, where $\widehat{\pi}\sim\operatorname{Unif}(\pi_0,\pi_1,...,\pi_{T-1})$.
\end{theorem}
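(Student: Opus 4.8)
The plan is to decompose the suboptimality gap $V^{\pi^\dagger}_{P^*}-V^{\widehat\pi}_{P^*}$ into two pieces that I can control separately: a statistical piece arising from replacing the true dynamics $P^*$ by the pessimistic model $P_t$ at each iteration, and an optimization piece arising from the mirror-ascent updates. Since $\widehat\pi\sim\operatorname{Unif}(\pi_0,\dots,\pi_{T-1})$, the expected gap is $\frac1T\sum_{t=0}^{T-1}\bigl(V^{\pi^\dagger}_{P^*}-V^{\pi_t}_{P^*}\bigr)$, so it suffices to bound this average. First I would insert and subtract the value under the pessimistic model, writing for each $t$
\begin{equation*}
V^{\pi^\dagger}_{P^*}-V^{\pi_t}_{P^*}=\underbrace{\bigl(V^{\pi^\dagger}_{P^*}-V^{\pi^\dagger}_{P_t}\bigr)}_{(\mathrm{I})}+\underbrace{\bigl(V^{\pi^\dagger}_{P_t}-V^{\pi_t}_{P_t}\bigr)}_{(\mathrm{II})}+\underbrace{\bigl(V^{\pi_t}_{P_t}-V^{\pi_t}_{P^*}\bigr)}_{(\mathrm{III})}.
\end{equation*}

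For the statistical terms (I) and (III), the key is a simulation-lemma / value-difference argument that bounds $|V^{\pi}_{P}-V^{\pi}_{P^*}|$ by an expected $\ell_1$ distance between $P$ and $P^*$ along the relevant occupancy measure. Because $P_t\in\mathcal{P}_{n,\alpha_n}$ by construction, Assumption \ref{Assump: theory}(d) controls $\mathbb{E}_{(s,a)\sim\rho}\|P_t(\cdot\mid s,a)-P^*(\cdot\mid s,a)\|_1\le\varepsilon_{est}=\delta_n$. To convert this from the offline distribution $\rho$ to the occupancy measure $d^{\pi^\dagger}_{P^*}$ that the simulation lemma produces, I would use the partial-coverage concentrability coefficient $C_{\pi^\dagger}$ from Assumption \ref{Assump: theory}(b). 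Term (III) is the pessimism payoff: since $P_t$ minimizes $V^{\pi_t}_P$ over the confidence set and $P^*\in\mathcal{P}_{n,\alpha_n}$ with high probability, we get $V^{\pi_t}_{P_t}\le V^{\pi_t}_{P^*}$, making (III) nonpositive and hence harmless; this is precisely where pessimism removes the need for full coverage. The factors $\gamma/(1-\gamma)^2$ and $\gamma/(1-\gamma)^3$ in the final bound should emerge from the two value-difference expansions (one for $V$, one involving an extra horizon factor through the $Q$-function estimate).

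For the optimization term (II), I would invoke the standard policy-mirror-ascent regret analysis adapted from \citet{lan2022policyB}. Summing the per-step mirror-ascent three-point inequality over $t=0,\dots,T-1$ and telescoping the Bregman terms against the KL-type performance-difference identity yields a bound of the form $\frac1T\sum_t(\mathrm{II})\lesssim \frac{1}{(1-\gamma)^2}\frac{\log|\mathcal{A}|}{\eta_t T}+\frac{\eta_t}{(1-\gamma)}$, balanced by the stated choice $\eta_t=(1-\gamma)\sqrt{2\log|\mathcal{A}|/T}$, which produces the $\frac{1}{(1-\gamma)^2}\sqrt{2\log|\mathcal{A}|/T}$ rate. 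Here the performance-difference lemma relates $V^{\pi^\dagger}_{P_t}-V^{\pi_t}_{P_t}$ to the inner product $\langle Q^{\pi_t}_{P_t}(s,\cdot),\pi^\dagger(\cdot\mid s)-\pi_t(\cdot\mid s)\rangle$ weighted by $d^{\pi^\dagger}_{P_t}$, which is exactly the quantity the mirror-ascent update controls.

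The main obstacle I anticipate is handling the \emph{shifting-dynamics} subtlety in the optimization piece: the standard policy-mirror-ascent regret bound is derived under a \emph{fixed} MDP, whereas here the evaluation model $P_t$ changes every iteration. I would therefore need to verify that the performance-difference identity applied at each $t$ uses $d^{\pi^\dagger}_{P_t}$ (not a common fixed occupancy measure), and argue that the subsequent coverage transfer from $d^{\pi^\dagger}_{P_t}$ back to $\rho$ still goes through $C_{\pi^\dagger}$ — which is only defined relative to $P^*$. Reconciling this likely requires folding the model mismatch between $d^{\pi^\dagger}_{P_t}$ and $d^{\pi^\dagger}_{P^*}$ into the statistical terms (I)/(III) rather than the optimization term, so that (II) can be analyzed as if on a fixed MDP. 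Getting this bookkeeping exactly right, and tracking the precise powers of $(1-\gamma)^{-1}$ through the telescoping sum, is the delicate part; the rest is routine once the decomposition and the pessimism inequality are in place.
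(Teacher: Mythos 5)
Your proposal follows essentially the same route as the paper: the identical three-term decomposition, the simulation lemma combined with $C_{\pi^\dagger}$ and $\varepsilon_{est}$ for term (I), pessimism ($P^*\in\mathcal{P}_{n,\alpha_n}$ w.h.p. plus the minimizing choice of $P_t$) making term (III) nonpositive, and a performance-difference/mirror-ascent three-point-inequality telescoping argument for term (II). The shifting-dynamics subtlety you flag is resolved in the paper exactly as you anticipate: term (II) is split as $\mathbb{E}_{d_{P_t}^{\pi^\dagger}}[A_{P_t}^{\pi_t}]=\mathbb{E}_{d_{P^*}^{\pi^\dagger}}[A_{P_t}^{\pi_t}]+\bigl(\mathbb{E}_{d_{P_t}^{\pi^\dagger}}-\mathbb{E}_{d_{P^*}^{\pi^\dagger}}\bigr)[A_{P_t}^{\pi_t}]$, the mirror-ascent regret is run against the fixed measure $d_{P^*}^{\pi^\dagger}$, and the mismatch term is bounded by the simulation lemma applied to a modified MDP whose reward is $A_{P_t}^{\pi_t}$, which is precisely where the $\frac{\gamma}{(1-\gamma)^3}C_{\pi^\dagger}\,\varepsilon_{est}$ contribution arises.
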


The upper bound in Theorem \ref{main thm of alg 1} includes two terms: a statistical error (depending on fixed $n$) coming from using offline data for estimation, and an optimization error (depending on iteration $T$) coming from the policy improvement. Different from Theorem \ref{main them of alg 2}, function approximation is not involved in this case. The sacrifice is that the update rule is computationally infeasible.
%%%%%%%%%%%%%%%%%%%%%%%%%%%%%%%%%%%%%%%%%%%%%%%%%%%%%%%%%%%%%%%%%%%%%%%%%%%%%%%%%%%%%%%%%%%%%%%%%%%%%%%%%%%%%%%%%%%%%%%%%%%%%%%%%%%%%%%%%%%%%%%%%%%%%%%%%%%%%%%%%%%%%%%%%%%%%%%%%%%%%%%%%%%%%%%%%%%%%%%%%%%%%%%%%%%%%%%%%%%%%%%%%%%%%%%%%%%%%
\section{Technical proofs}
\label{appendix: proofs of main results}

In this section, we present all the technical proofs of the main theoretical results. We first prove \cref{main thm of alg 1} in \cref{append: proof for algo 1}, which is a simplified version of \cref{main them of alg 2}. Then we prove \cref{main them of alg 2} in \cref{append: proof for algo 2} by further analyzing the effect of function approximation.

%%%%%%%%%%%%%%%%%%%%%%%%%%%%%%%%%%%%%%%%%%%%
\subsection{Proofs of \texorpdfstring{\cref{main thm of alg 1}}{Theorem 1}}
\label{append: proof for algo 1}
\begin{proof}[Proof of \cref{main thm of alg 1}]
In this proof, we let $\Delta_m$=$\Delta(\mathcal{A})$ and $\pi(s)=\pi(\cdot\mid s)$ for clarity. We first split the average regret into the sum of three parts. We will deal with the three parts separately.
\begin{equation}
\label{thm 1 proof eq 1}
V_{P^*}^{\pi^\dagger} - \frac{1}{T} \sum_{t=0}^{T-1}V_{P^*}^{\pi_t} = \frac{1}{T}\sum_{t=0}^{T-1} \left( V_{P^*}^{\pi^\dagger}-V_{P_t}^{\pi^\dagger} \right) + \frac{1}{T}\sum_{t=0}^{T-1} \left( V_{P_t}^{\pi^\dagger}-V_{P_t}^{\pi_t} \right) + \frac{1}{T}\sum_{t=0}^{T-1} \left( V_{P_t}^{\pi_t}-V_{P^*}^{\pi_t} \right).
\end{equation}

For the first term, we can upper bound $V_{P^*}^{\pi^\dagger}-V_{P_t}^{\pi^\dagger}$ for each $t$. By the simulation lemma (\ref{simu lemma}),
\begin{equation}
\label{thm 1 proof eq 2}
\begin{split}
V_{P^*}^{\pi^\dagger}-V_{P_t}^{\pi^\dagger} &= \frac{\gamma}{1-\gamma} \mathbb{E}_{(s,a)\sim d_{P^*}^{\pi^\dagger}}\left[ \mathbb{E}_{s'\sim P^*(\cdot|s,a)} V_{P_t}^{\pi^\dagger}(s') - \mathbb{E}_{s'\sim P_t(\cdot|s,a)} V_{P_t}^{\pi^\dagger}(s') \right]\\
&\le \frac{\gamma}{1-\gamma} \mathbb{E}_{(s,a)\sim d_{P^*}^{\pi^\dagger}}\left[ \left\| V_{P_t}^{\pi^\dagger} \right\|_\infty \|P^*(\cdot|s,a)-P_t(\cdot|s,a)\|_1 \right]\\
&\le \frac{\gamma}{(1-\gamma)^2} \mathbb{E}_{(s,a)\sim d_{P^*}^{\pi^\dagger}} \|P^*(\cdot|s,a)-P_t(\cdot|s,a)\|_1\\
&\le \frac{\gamma}{(1-\gamma)^2} C_{\pi^\dagger} \mathbb{E}_{(s,a)\sim \rho} \left\|P^*(\cdot|s,a) - P_t(\cdot|s,a)\right\|_1\\
&\le \frac{\gamma}{(1-\gamma)^2} C_{\pi^\dagger} \varepsilon_\text{est}
\end{split}
\end{equation}
where we used the definitions of $C_{\pi^\dagger}$ and $\varepsilon_\text{est}$ (see assumption \ref{Assump: theory}).

The third term in (\ref{thm 1 proof eq 1}) is negative with high probability: by assumption \ref{Assump: theory}(d), $P^* \in \mathcal{P}_{n,\alpha_n}$ with high probability. Recall the updating rule in (\ref{equa: constrained}), $P_t= \text{argmin}_{P\in \mathcal{P}_{n,\alpha_n}} V_P^{\pi_t}$. So $V_{P_t}^{\pi_t}\le V_{P^*}^{\pi_t}$ for all $t$ with high probability. Then the following holds with high probability:
\begin{equation}
\label{thm 1 proof eq 3}
\frac{1}{T}\sum_{t=0}^{T-1} \left( V_{P_t}^{\pi_t}-V_{P^*}^{\pi_t} \right) \le 0.
\end{equation}

Then it remains to upper bound $V_{P_t}^{\pi_\dagger}-V_{P_t}^{\pi_t}$. By performance difference lemma,
\begin{equation}
\label{thm 1 proof eq 4}
\begin{split}
\frac{1}{T}\sum_{t=0}^{T-1} \left( V_{P_t}^{\pi^\dagger}-V_{P_t}^{\pi_t} \right) =&\ \frac{1}{T(1-\gamma)} \sum_{t=0}^{T-1} \mathbb{E}_{(s,a)\sim d_{P_t}^{\pi^\dagger}} \left[ A_{P_t}^{\pi_t}(s,a) \right]\\
=&\ \frac{1}{T(1-\gamma)} \sum_{t=0}^{T-1} \mathbb{E}_{(s,a)\sim d_{P^*}^{\pi^\dagger}} \left[ A_{P_t}^{\pi_t}(s,a) \right] \\
&+ \frac{1}{T(1-\gamma)} \sum_{t=0}^{T-1} (\mathbb{E}_{(s,a)\sim d_{P_t}^{\pi^\dagger}} - \mathbb{E}_{(s,a)\sim d_{P^*}^{\pi^\dagger}}) \left[ A_{P_t}^{\pi_t}(s,a) \right].
\end{split}
\end{equation}
To further deal with the term above, we will first establish an upper bound for the advantage function $A_{P_t}^{\pi_t}(s,a)$.

Recall the policy update rule in (\ref{equ: policy update}), 
$$\pi_{t+1}(s) =\underset{p \in \Delta_m}{\arg \max }\left\{ \langle Q_{P_t}^{\pi_t}(s, \cdot),p\rangle-\frac{1}{\eta_t} D\left(\pi_t(s), p\right)\right\}$$
or equivalently,
\begin{equation}
\label{thm 1 proof eq 5}
\pi_{t+1}(s) =\underset{p \in \Delta_m}{\arg \min }\left\{ -\langle Q_{P_t}^{\pi_t}(s, \cdot),p\rangle+\frac{1}{\eta_t} D\left(\pi_t(s), p\right)\right\}.
\end{equation}
By the optimality condition of (\ref{thm 1 proof eq 5}), we have for any $p\in \Delta_m$,
$$\left\langle -Q_{P_t}^{\pi_t}(s, \cdot) +\frac{1}{\eta_t} \nabla_p D\left(\pi_t(s), p=\pi_{t+1}(s)\right), p - \pi_{t+1}(s)
 \right\rangle \ge 0.$$
Note that $D(\pi_t,p)= \omega(p)- \omega(\pi_t) - \langle \nabla\omega(\pi_t), p-\pi_t \rangle$. We can explicitly write out the gradient term in the inequality above, then we get
\begin{equation}
\label{thm 1 proof eq 6}
\left\langle -Q_{P_t}^{\pi_t}(s, \cdot) +\frac{1}{\eta_t} \Bigl(\nabla\omega(\pi_{t+1}(s)) - \nabla\omega(\pi_{t}(s))\Bigr), p - \pi_{t+1}(s)
 \right\rangle \ge 0.
\end{equation}
By definition of $D(\cdot,\cdot)$, we can derive that
$$D(\pi_t(s), p) - D(\pi_t(s), \pi_{t+1}(s)) - D(\pi_{t+1}(s), p) = \left\langle \nabla\omega(\pi_{t+1}(s)) - \nabla\omega(\pi_{t}(s)), p-\pi_{t+1}(s)
 \right\rangle.$$
So (\ref{thm 1 proof eq 6}) becomes
$$ \left\langle Q_{P_t}^{\pi_t}(s, \cdot), p - \pi_{t+1}(s)
 \right\rangle\le \frac{1}{\eta_t} \Bigl(D(\pi_t(s), p) - D(\pi_t(s), \pi_{t+1}(s)) - D(\pi_{t+1}(s), p)\Bigr).$$

We can rewrite it in terms of advantage function:
\begin{align*}
&\left\langle A_{P_t}^{\pi_t}(s, \cdot), p \right\rangle \\
&= \left\langle Q_{P_t}^{\pi_t}(s, \cdot), p \right\rangle - V_{P_t}^{\pi_t}(s)\\
&\le \left\langle Q_{P_t}^{\pi_t}(s, \cdot), \pi_{t+1}(s) \right\rangle - V_{P_t}^{\pi_t}(s) + \frac{1}{\eta_t} \Bigl(D(\pi_t(s), p) - D(\pi_t(s), \pi_{t+1}(s)) - D(\pi_{t+1}(s), p)\Bigr)\\
&= \left\langle Q_{P_t}^{\pi_t}(s, \cdot), \pi_{t+1}(s) -\pi_t(s) \right\rangle + \frac{1}{\eta_t} \Bigl(D(\pi_t(s), p) - D(\pi_t(s), \pi_{t+1}(s)) - D(\pi_{t+1}(s), p)\Bigr).
\end{align*}
Let $p= \pi^\dagger(s)$:
\begin{equation}
\label{thm 1 proof eq 7}
\begin{split}
&\left\langle A_{P_t}^{\pi_t}(s, \cdot), \pi^\dagger(s) \right\rangle\\
\le& \left\langle Q_{P_t}^{\pi_t}(s, \cdot), \pi_{t+1}(s) -\pi_t(s) \right\rangle + \frac{1}{\eta_t} \Bigl(D(\pi_t(s), \pi^\dagger(s)) - D(\pi_t(s), \pi_{t+1}(s)) - D(\pi_{t+1}(s), \pi^\dagger(s))\Bigr)\\
\le& \left\langle Q_{P_t}^{\pi_t}(s, \cdot), \pi_{t+1}(s) -\pi_t(s) \right\rangle - \frac{1}{2\eta_t} \|\pi_{t+1}(s) - \pi_{t}(s)\|_1^2 \\
&+ \frac{1}{\eta_t} \Bigl(D(\pi_t(s), \pi^\dagger(s)) - D(\pi_{t+1}(s), \pi^\dagger(s))\Bigr)\\
\le&\ \|Q_{P_t}^{\pi_t}(s, \cdot)\|_\infty \|\pi_{t+1}(s) - \pi_{t}(s)\|_1 - \frac{1}{2\eta_t} \|\pi_{t+1}(s) - \pi_{t}(s)\|_1^2 \\
&+ \frac{1}{\eta_t} \Bigl(D(\pi_t(s), \pi^\dagger(s)) - D(\pi_{t+1}(s), \pi^\dagger(s))\Bigr)\\
\le&\ \frac{\eta_t}{2} \|Q_{P_t}^{\pi_t}(s, \cdot)\|_\infty^2 + \frac{1}{\eta_t} \Bigl(D(\pi_t(s), \pi^\dagger(s)) - D(\pi_{t+1}(s), \pi^\dagger(s))\Bigr)\\
\le&\ \frac{\eta_t}{2(1-\gamma)^2} + \frac{1}{\eta_t} \Bigl(D(\pi_t(s), \pi^\dagger(s)) - D(\pi_{t+1}(s), \pi^\dagger(s))\Bigr)
\end{split}
\end{equation}
where the third line above is because $D(p',p)\ge \frac{1}{2}\|p-p'\|^2$ (see section \ref{sec: policy improvement}). For simplicity, we assume the norm is $L_1$-norm here. Even in the general case, recall that this norm $\|\cdot\|$ is defined on $\mathbb{R}^m$, and by a well known result in functional analysis, all norms on a finite dimension linear space are equivalent. So we can still establish a step similar to the third line in (\ref{thm 1 proof eq 7}), replacing the second term by $-\frac{C}{2\eta_t} \|\pi_{t+1}(s) - \pi_{t}(s)\|_1^2$ for some constant $C$. Then in the last line in (\ref{thm 1 proof eq 7}), the first term changes to $\frac{\eta_t}{2C} \|Q\|_{\infty}^2$ and the remaining are the same. We will see this difference does not affect the general form of the theorem, while it only changes some constant.

Then we can use (\ref{thm 1 proof eq 7}) to upper bound the first term in (\ref{thm 1 proof eq 4}):
\begin{equation}
\label{thm 1 proof eq 8}
\begin{split}
&\sum_{t=0}^{T-1} \mathbb{E}_{(s,a)\sim d_{P^*}^{\pi^\dagger}} \left[ A_{P_t}^{\pi_t}(s,a) \right]\\
&= \sum_{t=0}^{T-1} \mathbb{E}_{s\sim d_{P^*}^{\pi^\dagger}} \left\langle A_{P_t}^{\pi_t}(s, \cdot), \pi^\dagger(s) \right\rangle \\
&\le \sum_{t=0}^{T-1} \mathbb{E}_{s\sim d_{P^*}^{\pi^\dagger}} \left[ \frac{\eta_t}{2(1-\gamma)^2} + \frac{1}{\eta_t} \Bigl(D(\pi_t(s), \pi^\dagger(s)) - D(\pi_{t+1}(s), \pi^\dagger(s))\Bigr) \right]\\
&= \frac{1}{2(1-\gamma)^2} \sum_{t=0}^{T-1}\eta_t + \mathbb{E}_{s\sim d_{P^*}^{\pi^\dagger}} \sum_{t=0}^{T-1} \Bigl(D(\pi_t(s), \pi^\dagger(s)) - D(\pi_{t+1}(s), \pi^\dagger(s))\Bigr).
\end{split}
\end{equation}

The second term in (\ref{thm 1 proof eq 8}) can be bounded by the following telescoping technique. By assumption, $\{\eta_t\}$ is non-decreasing. Also note that the Bregman divergence is non-negative, so we have
\begin{equation}
\label{same as thm 1 eq 1}
\begin{split}
&\sum_{t=0}^{T-1} \left(\frac{1} {\eta_t}D\left(\pi_t(s), \pi^\dagger(s)\right)-\frac{1}{\eta_t}D(\pi_{t+1}(s),\pi^\dagger(s))\right)\\
&= \frac{1}{\eta_0}D\left(\pi_0(s), \pi^\dagger(s)\right) + \sum_{t=1}^{T-1} (\frac{1}{\eta_t} - \frac{1}{\eta_{t-1}})D\left(\pi_t(s), \pi^\dagger(s)\right) - \frac{1}{\eta_{T-1}} D(\pi_{T}(s),\pi^\dagger(s))\\
&\le \frac{1}{\eta_0}D\left(\pi_0(s), \pi^\dagger(s)\right)\\
&\le \frac{1}{\eta_0}D_0.
\end{split}
\end{equation}
Then (\ref{thm 1 proof eq 8}) becomes
\begin{equation}
\label{thm 1 proof eq 9}
\sum_{t=0}^{T-1} \mathbb{E}_{(s,a)\sim d_{P^*}^{\pi^\dagger}} \left[ A_{P_t}^{\pi_t}(s,a) \right] \le \frac{1}{2(1-\gamma)^2} \sum_{t=0}^{T-1}\eta_t + \frac{D_0}{\eta_0}.
\end{equation}

The second term in (\ref{thm 1 proof eq 4}) can be handled with simulation lemma: Let $\Tilde{r}(s,a)= A_{P_t}^{\pi_t}(s,a)$. Consider two modified MDPs, $\widetilde{M}_t = (\mathcal{S},\mathcal{A},P_t, \Tilde{r}, \gamma)$ and $\widetilde{M}^* = (\mathcal{S},\mathcal{A},P^*, \Tilde{r}, \gamma)$. We still focus on the policy $\pi^\dagger$ and evaluate it under both modified MDPs. Since the visitation measure only depends on the transition probabilities and the discounting factor, we can rewrite the expectation of $\Tilde{r}$ under visitation measure as the value function of modified MDP. Then directly apply simulation lemma:
\begin{align*}
&\frac{1}{1-\gamma}\Bigl(\mathbb{E}_{(s,a)\sim d_{P_t}^{\pi^\dagger}}-\mathbb{E}_{(s,a)\sim d_{P^*}^{\pi^\dagger}}\Bigr) A_{P_t}^{\pi_t}(s,a) \\
&= V_{\widetilde{M}_t}^{\pi^\dagger} - V_{\widetilde{M}^*}^{\pi^\dagger}\\
&=\frac{\gamma}{1-\gamma} \mathbb{E}_{(s,a)\sim d_{P^*}^{\pi^\dagger}} \left[\mathbb{E}_{s'\sim P_t(\cdot|s,a)} V_{\widetilde{M}_t}^{\pi^\dagger}(s') - \mathbb{E}_{s'\sim P^*(\cdot|s,a)} V_{\widetilde{M}_t}^{\pi^\dagger}(s') \right].
\end{align*}

Note that the original reward function satisfies $r\in [0,1]$, so both $Q_{P_t}^{\pi_t}(\cdot,\cdot)$ and $V_{P_t}^{\pi_t}(\cdot)$ are bounded in $[0, \frac{1}{1-\gamma}]$. Then $|\Tilde{r}| \le \frac{1}{1-\gamma}$, $|V_{\widetilde{M}_t}^{\pi^\dagger}(s')|\le \frac{1}{(1-\gamma)^2}$. So
\begin{equation}
\label{thm 1 proof eq 10}
\begin{split}
&\frac{1}{1-\gamma} \Bigl(\mathbb{E}_{(s,a)\sim d_{P_t}^{\pi^\dagger}}-\mathbb{E}_{(s,a)\sim d_{P^*}^{\pi^\dagger}}\Bigr) A_{P_t}^{\pi_t}(s,a) \\
&=\frac{\gamma}{1-\gamma} \mathbb{E}_{(s,a)\sim d_{P^*}^{\pi^\dagger}} \left[\mathbb{E}_{s'\sim P_t(\cdot|s,a)} V_{\widetilde{M}_t}^{\pi^\dagger}(s') - \mathbb{E}_{s'\sim P^*(\cdot|s,a)} V_{\widetilde{M}_t}^{\pi^\dagger}(s') \right]\\
& \le \frac{\gamma}{1-\gamma} \mathbb{E}_{(s,a)\sim d_{P^*}^{\pi^\dagger}} \left[\left\|V_{\widetilde{M}_t}^{\pi^\dagger}\right\|_{\infty} \left\|P_t(\cdot|s,a) - P^*(\cdot|s,a)\right\|_1 \right]\\
&\le \frac{\gamma}{(1-\gamma)^3} \mathbb{E}_{(s,a)\sim d_{P^*}^{\pi^\dagger}} \left\|P_t(\cdot|s,a) - P^*(\cdot|s,a)\right\|_1 \\
&\le \frac{\gamma}{(1-\gamma)^3} C_{\pi^\dagger} \mathbb{E}_{(s,a)\sim \rho} \left\|P_t(\cdot|s,a) - P^*(\cdot|s,a)\right\|_1\\
&\le \frac{\gamma}{(1-\gamma)^3} C_{\pi^\dagger} \varepsilon_\text{est}
\end{split}
\end{equation}
where the penultimate line comes from assumption \ref{Assump: theory}(b), and the last step is by the definition of $\varepsilon_\text{est}$ (see assumption \ref{Assump: theory}(d)).

So now we can use (\ref{thm 1 proof eq 9}) and (\ref{thm 1 proof eq 10}) to control the two terms in (\ref{thm 1 proof eq 4}):
\begin{equation}
\label{thm 1 proof eq 11}
\begin{split}
&\frac{1}{T}\sum_{t=0}^{T-1} \left( V_{P_t}^{\pi^\dagger}-V_{P_t}^{\pi_t} \right) \\
&= \frac{1}{T(1-\gamma)} \sum_{t=0}^{T-1} \mathbb{E}_{(s,a)\sim d_{P^*}^{\pi^\dagger}} \left[ A_{P_t}^{\pi_t}(s,a) \right] + \frac{1}{T(1-\gamma)} \sum_{t=0}^{T-1} (\mathbb{E}_{(s,a)\sim d_{P_t}^{\pi^\dagger}} - \mathbb{E}_{(s,a)\sim d_{P^*}^{\pi^\dagger}}) \left[ A_{P_t}^{\pi_t}(s,a) \right] \\
&\le \frac{1}{2T(1-\gamma)^3} \sum_{t=0}^{T-1}\eta_t + \frac{D_0}{T(1-\gamma)\eta_0} + \frac{\gamma}{(1-\gamma)^3} C_{\pi^\dagger} \varepsilon_\text{est}.
\end{split}
\end{equation}

Finally, we use (\ref{thm 1 proof eq 2}), (\ref{thm 1 proof eq 3}), (\ref{thm 1 proof eq 11}) to upper bound the three terms in (\ref{thm 1 proof eq 1}):
\begin{align*}
&V_{P^*}^{\pi^\dagger} - \frac{1}{T} \sum_{t=0}^{T-1}V_{P^*}^{\pi_t} \\
&= \frac{1}{T}\sum_{t=0}^{T-1} \left( V_{P^*}^{\pi^\dagger}-V_{P_t}^{\pi^\dagger} \right) + \frac{1}{T}\sum_{t=0}^{T-1} \left( V_{P_t}^{\pi^\dagger}-V_{P_t}^{\pi_t} \right) + \frac{1}{T}\sum_{t=0}^{T-1} \left( V_{P_t}^{\pi_t}-V_{P^*}^{\pi_t} \right)\\
&\le \frac{\gamma}{(1-\gamma)^2} C_{\pi^\dagger} \varepsilon_\text{est} + \frac{1}{2T(1-\gamma)^3} \sum_{t=0}^{T-1}\eta_t + \frac{D_0}{T(1-\gamma)\eta_0} + \frac{\gamma}{(1-\gamma)^3} C_{\pi^\dagger} \varepsilon_\text{est} + 0\\
&\le c_1 C_{\pi^\dagger} \varepsilon_\text{est} + \frac{1}{2T(1-\gamma)^3} \sum_{t=0}^{T-1}\eta_t + \frac{D_0}{T(1-\gamma)\eta_0}.
\end{align*}

\end{proof}

%%%%%%%%%%%%%%%%%%%%%%%%%%%%%%%%%%%%%%%%%%%%%%%%%%%%%%%%%%%%%%%%%%%%
\subsection{Proofs for \texorpdfstring{\cref{main them of alg 2}}{Theorem 2}}
\label{append: proof for algo 2}
\begin{proof}[Proof of \cref{main them of alg 2}]
We first focus on $V_{P_t}^{\pi_\dagger}-V_{P_t}^{\pi_t}$. Let $\mathcal{A}=\{A_1,A_2,...,A_m\}$, and the goal is to find an optimal randomized policy in the probability simplex $\Delta_m:=\left\{p \in \mathbb{R}^m: \sum_{i=1}^m p_i=1, p_i \geq 0, i=1, \ldots, m\right\}$. Then, for a given $\pi_t(s)\in\Delta_m$, we use 
\begin{equation}
\begin{aligned}
& Q_{P_t}^{\pi_t}\left(s, A_i\right):=R\left(s, A_i\right)+\gamma \int  V_{P_t}^{\pi_t}\left(s^{\prime}\right)P\left(s^{\prime} \mid s, A_i\right)ds', i=1, \ldots, m, \\
& \Tilde{\mathcal Q}_{\omega,t}(s,A_i):=Q_{P_t}^{\pi_t}\left(s, A_i\right)+ \frac{1}{\eta_t} \nabla_i \omega\left(\pi_t(s)\right), i=1, \ldots, m
\end{aligned}
\end{equation}
to denote the action value function and the augmented action value function evaluated at the action $A_i$.\\
Then for any $p\in\Delta_m$
\begin{equation}
\begin{aligned}
\Tilde{\mathcal Q}_{\omega,t}(s,p)&=Q_{P_t}^{\pi_t}(s, p)+ \left\langle\nabla \omega\left(\pi_t(s)\right), p\right\rangle / \eta_t\\
& = \sum_{i=1}^m Q_{P_t}^{\pi_t}\left(s, A_i\right)p_i + \left\langle\nabla \omega\left(\pi_t(s)\right), p\right\rangle / \eta_t\\
&:=\left\langle \Tilde{\mathcal Q}_{\omega,t}(s,\cdot), p\right\rangle
\end{aligned}
\end{equation}
where $\Tilde{\mathcal Q}_{\omega,t}(s,\cdot)$ is defined as an $m$-dimensional vector with its $i$-th element as $\Tilde{\mathcal Q}_{\omega,t}(s,A_i)$.\\

That means, if we want to approximate $\Tilde{\mathcal Q}_{\omega,t}(s,p)$ which is a linear function of $\Tilde{\mathcal Q}_{\omega,t}(s,\cdot)$, then we only need to approximate $\Tilde{\mathcal Q}_{\omega,t}(s,A_i)$ for each $i=1,...,m$. \\
For each $i=1,...,m$, we consider the approximation
\begin{equation}
\begin{aligned}
f_{t,i}(s;\beta_{t,i})\approx \Tilde{\mathcal Q}_{\omega,t}(s,A_i)
\end{aligned}
\end{equation}
where $f_{t,i}(s;\beta_{t,i})$ denotes some function class parameterized by $\beta$.

Then the update rule in (\ref{formula: update beta}) is reduced to 
\begin{equation}
\label{formula: finite action update rule}
\pi_{t+1}(s)=\underset{p \in \Delta_m}{\arg \max }\left\{\left\langle f_{t}(s;\beta_{t}), p\right\rangle-\frac{1}{\eta_t} \omega(p)\right\}, \forall s \in \mathcal{S}.
\end{equation}

For simplicity, we consider an equivalent rule of (\ref{formula: finite action update rule}):
\begin{equation}
\label{formula: argmin finite action update rule}
\pi_{t+1}(s)=\underset{p \in \Delta_m}{\arg \min }\left\{-\left\langle f_{t}(s;\beta_{t}), p\right\rangle+\frac{1}{\eta_t} \omega(p)\right\}, \forall s \in \mathcal{S}.
\end{equation}
Here we notice that since $\omega$ is assumed to be strongly convex with modulus $1$, and $\left\langle f_{t}(s;\beta), p\right\rangle$ is a convex function of $p$, we have a strongly convex objective function in (\ref{formula: argmin finite action update rule}) with modulus $\frac{1}{\eta_t}$. Also, $\Delta_m$ is a convex space. Therefore the optimization procedure in (\ref{formula: argmin finite action update rule}) is meaningful.

Then, by the optimality condition of (\ref{formula: argmin finite action update rule}), we have $$\langle \nabla\left\{-\left\langle f_{t}(s;\beta_{t}), \pi_{t+1}(s)\right\rangle+\frac{1}{\eta_t} \omega(\pi_{t+1}(s))\right\},p-\pi_{t+1}(s)\rangle \ge0$$ for any $p\in \Delta_m$. By expanding this inequality, we get
\begin{align*}
\left\langle - f_{t}(s;\beta_{t}), p-\pi_{t+1}(s)\right\rangle &\ge -\frac{1}{\eta_t} \left\langle \nabla\omega(\pi_{t+1}(s)), p-\pi_{t+1}(s)\right\rangle \\
&= \frac{1}{\eta_t}\Bigl(D(\pi_{t+1}(s),p)) - \omega(p)+ \omega(\pi_{t+1}(s)\Bigr).
\end{align*}
We rewrite it as
\begin{equation}
\label{main thm proof eq 1}
    \begin{aligned}
        &-\left\langle \Tilde{\mathcal Q}_{\omega,t}(s,\cdot), p\right\rangle +\left\langle \Tilde{\mathcal Q}_{\omega,t}(s,\cdot)- f_{t}(s;\beta_{t}), p\right\rangle\\
        \ge& -\left\langle \Tilde{\mathcal Q}_{\omega,t}(s,\cdot), \pi_{t+1}(s)\right\rangle +\left\langle \Tilde{\mathcal Q}_{\omega,t}(s,\cdot)- f_{t}(s;\beta_{t}), \pi_{t+1}(s)\right\rangle\\
        &+\frac{1}{\eta_t} \omega(\pi_{t+1}(s))-\frac{1}{\eta_t} \omega(p) +\frac{1}{\eta_t}D(\pi_{t+1}(s),p).\\
    \end{aligned}
\end{equation}

We notice that $\left\langle \Tilde{\mathcal Q}_{\omega,t}(s,\cdot), p\right\rangle$ is exactly $ \Tilde{\mathcal Q}_{\omega,t}(s,p)$. For clarity, we denote
$$\delta_t(s):=\Tilde{\mathcal Q}_{\omega,t}(s,\cdot)- f_{t}(s;\beta_{t})$$
which represents the error in the approximation step in \cref{alg: alg 2}. By previous notations, $\delta_t(s)$ is also an $m$-dimension vector. Plugging $\left\langle \Tilde{\mathcal Q}_{\omega,t}(s,\cdot), p\right\rangle =  \Tilde{\mathcal Q}_{\omega,t}(s,p) = Q_{P_t}^{\pi_t}(s,p)+ \left\langle\nabla \omega\left(\pi_t(s)\right), p\right\rangle / \eta_t$ for any $p$, and $\delta_t(s)=\Tilde{\mathcal Q}_{\omega,t}(s,\cdot) -  f_{t}(s;\beta_{t})$ into (\ref{main thm proof eq 1}), then we have
\begin{equation}
\label{equ 25}
    \begin{aligned}
        & -Q_{P_t}^{\pi_t}(s,p)- \left\langle\nabla \omega\left(\pi_t(s)\right), p\right\rangle / \eta_t +\left\langle \delta_t(s), p\right\rangle\\
        \ge & -Q_{P_t}^{\pi_t}(s,\pi_{t+1}(s))- \left\langle\nabla \omega\left(\pi_{t}(s)\right), \pi_{t+1}(s)\right\rangle / \eta_t +\left\langle \delta_t(s), \pi_{t+1}(s)\right\rangle\\
        &+\frac{1}{\eta_t} \omega(\pi_{t+1}(s))-\frac{1}{\eta_t} \omega(p)+\frac{1}{\eta_t}D(\pi_{t+1}(s),p).\\
    \end{aligned}
\end{equation}
By definition of $D(\cdot,\cdot)$, we have
\begin{equation}
\omega\left(\pi_{t+1}(s)\right)-\omega(p)-\left\langle\nabla \omega\left(\pi_t(s)\right), \pi_{t+1}(s)-p\right\rangle=D\left(\pi_t(s), \pi_{t+1}(s)\right)-D\left(\pi_t(s), p\right).
\end{equation}
Then (\ref{equ 25}) becomes
\begin{equation}
    \begin{aligned}
        & Q_{P_t}^{\pi_t}(s,\pi_{t+1}(s))-Q_{P_t}^{\pi_t}(s,p)\\
        \ge &  \left\langle \delta_t(s), \pi_{t+1}(s)-p\right\rangle+\frac{1}{\eta_t}D(\pi_{t+1}(s),p)+\frac{1}{\eta_t}D\left(\pi_t(s), \pi_{t+1}(s)\right)-\frac{1}{\eta_t}D\left(\pi_t(s), p\right).\\
    \end{aligned}
\end{equation}
Therefore we have
\begin{equation}
    \begin{aligned}        
        &A_{P_t}^{\pi_t}(s,\pi_{t+1}(s)) - A_{P_t}^{\pi_t}(s,p)\\
        =& Q_{P_t}^{\pi_t}(s,\pi_{t+1}(s)) - V_{P_t}^{\pi_t}(s)  - Q_{P_t}^{\pi_t}(s,p) +V_{P_t}^{\pi_t}(s)\\
        =&Q_{P_t}^{\pi_t}(s,\pi_{t+1}(s))-Q_{P_t}^{\pi_t}(s,p)\\
        \ge &  \left\langle \delta_t(s), \pi_{t+1}(s)-p\right\rangle+\frac{1}{\eta_t}D(\pi_{t+1}(s),p)+\frac{1}{\eta_t}D\left(\pi_t(s), \pi_{t+1}(s)\right)-\frac{1}{\eta_t}D\left(\pi_t(s), p\right).
    \end{aligned}
\end{equation}
Reorganize it and we have
\begin{equation}
\label{bounding advantage}
    \begin{aligned}        
        &A_{P_t}^{\pi_t}(s,p)\\
        \le&\ A_{P_t}^{\pi_t}(s,\pi_{t+1}(s)) -\left\langle \delta_t(s), \pi_{t+1}(s)-p\right\rangle +\frac{1}{\eta_t}D\left(\pi_t(s), p\right) -\frac{1}{\eta_t}D(\pi_{t+1}(s),p) \\
        &-\frac{1}{\eta_t}D\left(\pi_t(s), \pi_{t+1}(s)\right)\\
        \le&\ A_{P_t}^{\pi_t}(s,\pi_{t+1}(s)) -\left\langle \delta_t(s), \pi_{t+1}(s)-p\right\rangle + \frac{1}{\eta_t}D\left(\pi_t(s), p\right)-\frac{1}{\eta_t}D(\pi_{t+1}(s),p) \\
        &- \frac{1}{2\eta_t} \|\pi_{t+1}(s) - \pi_{t}(s)\|_1^2\\
        =&\ \langle Q_{P_t}^{\pi_t}(s), \pi_{t+1}(s) - \pi_{t}(s)\rangle - \frac{1}{2\eta_t} \|\pi_{t+1}(s) - \pi_{t}(s)\|_1^2 \\
        &-\left\langle \delta_t(s), \pi_{t+1}(s)-p\right\rangle + \frac{1}{\eta_t}D\left(\pi_t(s), p\right)-\frac{1}{\eta_t}D(\pi_{t+1}(s),p)\\
        \le&\ \|Q\|_{\infty}\|\pi_{t+1}(s) - \pi_{t}(s)\|_1 - \frac{1}{2\eta_t} \|\pi_{t+1}(s) - \pi_{t}(s)\|_1^2 \\
        &-\left\langle \delta_t(s), \pi_{t+1}(s)-p\right\rangle + \frac{1}{\eta_t}D\left(\pi_t(s), p\right)-\frac{1}{\eta_t}D(\pi_{t+1}(s),p)\\
        \le&\ \frac{\eta_t}{2} \|Q\|_{\infty}^2 -\left\langle \delta_t(s), \pi_{t+1}(s)-p\right\rangle + \frac{1}{\eta_t}D\left(\pi_t(s), p\right)-\frac{1}{\eta_t}D(\pi_{t+1}(s),p)
    \end{aligned}
\end{equation}
where the third line above is because $D(p',p)\ge \frac{1}{2}\|p-p'\|^2$ (see section \ref{sec: policy improvement}). The reason we assume $L_1$-norm was already explained in the proof of \cref{main thm of alg 1} (see the remark after (\ref{thm 1 proof eq 7}) in section \cref{append: proof for algo 1}).

Now we return to the analysis for $V^{\pi^\dagger}_{P_t}-V^{\pi_t}_{P_t}$:
\begin{equation}
\label{main thm proof eq 2}
\begin{aligned}
&\frac{1}{T+1}\sum_{t=0}^T (V^{\pi^\dagger}_{P_t}-V^{\pi_t}_{P_t})\\
&= \frac{1}{T+1}\sum_{t=0}^T \frac{1}{1-\gamma}\mathbb{E}_{(s,a)\sim d_{P_t}^{\pi^\dagger}} A_{P_t}^{\pi_t}(s,a)\\
&= \frac{1}{T+1}\sum_{t=0}^T \frac{1}{1-\gamma}\mathbb{E}_{(s,a)\sim d_{P^*}^{\pi^\dagger}} A_{P_t}^{\pi_t}(s,a) + \frac{1}{T+1}\sum_{t=0}^T \frac{1}{1-\gamma}\Bigl(\mathbb{E}_{(s,a)\sim d_{P_t}^{\pi^\dagger}}-\mathbb{E}_{(s,a)\sim d_{P^*}^{\pi^\dagger}}\Bigr) A_{P_t}^{\pi_t}(s,a).
\end{aligned}
\end{equation}
For the first term in (\ref{main thm proof eq 2}), we can let the randomized policy $p$ in (\ref{bounding advantage}) be $\pi^\dagger(s)$:
\begin{equation}
\label{main thm proof eq 3}
\begin{aligned}
&\frac{1}{T+1}\sum_{t=0}^T \frac{1}{1-\gamma}\mathbb{E}_{(s,a)\sim d_{P^*}^{\pi^\dagger}}A_{P_t}^{\pi_t}(s,a)\\
=&\ \frac{1}{T+1}\sum_{t=0}^T \frac{1}{1-\gamma}\mathbb{E}_{s\sim d_{P^*}^{\pi^\dagger}}A_{P_t}^{\pi_t}(s,\pi^\dagger(s))\\
\le&\ \frac{1}{(T+1)(1-\gamma)}\sum_{t=0}^T \mathbb{E}_{s\sim d_{P^*}^{\pi^\dagger}}\biggl[\frac{\eta_t}{2} \|Q\|_{\infty}^2 -\left\langle \delta_t(s), \pi_{t+1}(s)-\pi^\dagger(s)\right\rangle \biggr.\\
&\biggl.+ \frac{1}{\eta_t}D\left(\pi_t(s), \pi^\dagger(s)\right)-\frac{1}{\eta_t}D(\pi_{t+1}(s),\pi^\dagger(s))\biggr]\\
\le&\ \frac{1}{2(T+1)(1-\gamma)^3}\sum_{t=0}^T \eta_t + \frac{1}{(T+1)(1-\gamma)}\sum_{t=0}^T \mathbb{E}_{s\sim d_{P^*}^{\pi^\dagger}} \left\langle \delta_t(s), \pi^\dagger(s)- \pi_{t+1}(s)\right\rangle \\
&+ \frac{1}{(T+1)(1-\gamma)} \mathbb{E}_{s\sim d_{P^*}^{\pi^\dagger}} \sum_{t=0}^T \left( \frac{1}{\eta_t}D\left(\pi_t(s), \pi^\dagger(s)\right)-\frac{1}{\eta_t}D(\pi_{t+1}(s),\pi^\dagger(s)) \right).
\end{aligned}
\end{equation}

The second term in (\ref{main thm proof eq 3}) is bounded by approximation error:
\begin{align*}
&\mathbb{E}_{s\sim d_{P^*}^{\pi^\dagger}} \left\langle \delta_t(s), \pi^\dagger(s)- \pi_{t+1}(s)\right\rangle \\
&= \mathbb{E}_{s\sim d_{P^*}^{\pi^\dagger}} \left\langle \Tilde{\mathcal Q}_{\omega,t}(s,\cdot)- f_{t}(s;\beta_{t}), \pi^\dagger(s)- \pi_{t+1}(s)\right\rangle\\
&\le \mathbb{E}_{s\sim d_{P^*}^{\pi^\dagger}} \left\langle |\Tilde{\mathcal Q}_{\omega,t}(s,\cdot)- f_{t}(s;\beta_{t})|, \pi^\dagger(s)\right\rangle + \mathbb{E}_{s\sim d_{P^*}^{\pi^\dagger}} \left\langle |\Tilde{\mathcal Q}_{\omega,t}(s,\cdot)- f_{t}(s;\beta_{t})|, \pi_{t+1}(s)\right\rangle\\
&\le 2|\mathcal{A}|\max_i \mathbb{E}_{s\sim d_{P^*}^{\pi^\dagger}} \left|   \Tilde{\mathcal Q}_{\omega,t}(s,A_i) - f_{t,i}(s;\beta_{t,i}) \right|\\
&\le 2|\mathcal{A}|\max_i \sqrt{ \mathbb{E}_{s\sim d_{P^*}^{\pi^\dagger}} \left(   \Tilde{\mathcal Q}_{\omega,t}(s,A_i) - f_{t,i}(s;\beta_{t,i}) \right)^2 }\\
&\le 2|\mathcal{A}|\max_i \left\| \frac{d_{P^*}^{\pi^\dagger}(s)}{d_{P_t}^{\pi_t}(s)} \right\|_{\infty}^{\frac{1}{2}} \sqrt{ \mathbb{E}_{s\sim d_{P_t}^{\pi_t}} \left(   \Tilde{\mathcal Q}_{\omega,t}(s,A_i) - f_{t,i}(s;\beta_{t,i}) \right)^2 }\\
&\le \frac{2|\mathcal{A}|\max_i}{\sqrt{1-\gamma}} \left\| \frac{d_{P^*}^{\pi^\dagger}(s)}{\mu_0(s)} \right\|_{\infty}^{\frac{1}{2}} \sqrt{ \mathbb{E}_{s\sim d_{P_t}^{\pi_t}} \left(   \Tilde{\mathcal Q}_{\omega,t}(s,A_i) - f_{t,i}(s;\beta_{t,i}) \right)^2 }.
\end{align*}

By the updating rule in (\ref{beta hat}),
\begin{align*}
&\mathbb{E}_{s\sim d_{P_t}^{\pi_t}} \left(   \Tilde{\mathcal Q}_{\omega,t}(s,A_i) - f_{t,i}(s;\beta_{t,i}) \right)^2 \\
&\le \min_{\beta_{t,i}} \mathbb{E}_{s\sim d_{P_t}^{\pi_t}} \left(   \Tilde{\mathcal Q}_{\omega,t}(s,A_i) - f_{t,i}(s;\beta_{t,i}) \right)^2 + c\frac{|\mathcal{F}_{t,i}|}{N}, w.h.p.\\
&\le \sup_{P,\pi} \inf_{\beta_{t,i}} \mathbb{E}_{s\sim d_{P}^{\pi}} \left(   \Tilde{\mathcal Q}_{\omega,t}(s,A_i) - f_{t,i}(s;\beta_{t,i}) \right)^2 + c\frac{|\mathcal{F}_{t,i}|}{N}, w.h.p.\\
&= \varepsilon_\text{approx}^2 + c\frac{|\mathcal{F}_{t,i}|}{N}, w.h.p.
\end{align*}
where the second step follows from the analysis of standard M-estimator \citep{van2000empirical} and the last step is because of \cref{def: approx}.

Combine the previous two inequalities, then we get
\begin{equation}
\label{main thm proof eq 4}
\mathbb{E}_{s\sim d_{P^*}^{\pi^\dagger}} \left\langle \delta_t(s), \pi^\dagger(s)- \pi_{t+1}(s)\right\rangle \le \frac{2|\mathcal{A}|}{\sqrt{1-\gamma}} \left\| \frac{d_{P^*}^{\pi^\dagger}(s)}{\mu_0(s)} \right\|_{\infty}^{\frac{1}{2}} (\varepsilon_\text{approx}+c\sqrt{\frac{\max_{t,i}|\mathcal{F}_{t,i}|}{N}}).
\end{equation}

The third term in (\ref{main thm proof eq 3}) can be bounded by the same method from \cref{append: proof for algo 1}. (see (\ref{same as thm 1 eq 1})). So we have
\begin{equation}
\label{main thm proof eq 5}
\begin{aligned}
\sum_{t=0}^T \left(\frac{1} {\eta_t}D\left(\pi_t(s), \pi^\dagger(s)\right)-\frac{1}{\eta_t}D(\pi_{t+1}(s),\pi^\dagger(s))\right)\le \frac{1}{\eta_0}D_0.
\end{aligned}
\end{equation}

By (\ref{main thm proof eq 3}), (\ref{main thm proof eq 4}), (\ref{main thm proof eq 5}), we obtain the following inequality, which is an upper bound for the first term in (\ref{main thm proof eq 2}):
\begin{equation}
\label{main thm proof eq 6}
\begin{split}
&\frac{1}{T+1}\sum_{t=0}^T \frac{1}{1-\gamma}\mathbb{E}_{(s,a)\sim d_{P^*}^{\pi^\dagger}}A_{P_t}^{\pi_t}(s,a) \\
\le&\ \frac{1}{2(T+1)(1-\gamma)^3}\sum_{t=0}^T \eta_t + \frac{1}{(T+1)(1-\gamma)}\sum_{t=0}^T \mathbb{E}_{s\sim d_{P^*}^{\pi^\dagger}} \left\langle \delta_t(s), \pi^\dagger(s)- \pi_{t+1}(s)\right\rangle \\
&+ \frac{1}{(T+1)(1-\gamma)} \mathbb{E}_{s\sim d_{P^*}^{\pi^\dagger}} \sum_{t=0}^T \left( \frac{1}{\eta_t}D\left(\pi_t(s), \pi^\dagger(s)\right)-\frac{1}{\eta_t}D(\pi_{t+1}(s),\pi^\dagger(s)) \right)\\
\le&\ \frac{1}{2(T+1)(1-\gamma)^3}\sum_{t=0}^T \eta_t + \frac{1}{1-\gamma} \frac{2|\mathcal{A}|}{\sqrt{1-\gamma}} \left\| \frac{d_{P^*}^{\pi^\dagger}(s)}{\mu_0(s)} \right\|_{\infty}^{\frac{1}{2}} (\varepsilon_\text{approx}+c\sqrt{\frac{\max_{t,i}|\mathcal{F}_{t,i}|}{N}})\\
&+ \frac{D_0}{(T+1)(1-\gamma)\eta_0}.
\end{split}
\end{equation}

The second term in (\ref{main thm proof eq 2}) can be handled by the same method from \cref{append: proof for algo 1}. (see (\ref{thm 1 proof eq 10})). So we have
\begin{equation}
\label{main thm proof eq 7}
\begin{split}
\frac{1}{1-\gamma} \Bigl(\mathbb{E}_{(s,a)\sim d_{P_t}^{\pi^\dagger}}-\mathbb{E}_{(s,a)\sim d_{P^*}^{\pi^\dagger}}\Bigr) A_{P_t}^{\pi_t}(s,a) \le \frac{\gamma}{(1-\gamma)^3} C_{\pi^\dagger} \varepsilon_\text{est}.
\end{split}
\end{equation}

So now we can use (\ref{main thm proof eq 6}) and (\ref{main thm proof eq 7}) to control the two terms in (\ref{main thm proof eq 2}) respectively:
\begin{equation}
\label{main thm proof eq 8}
\begin{split}
&\frac{1}{T}\sum_{t=0}^{T-1} (V^{\pi^\dagger}_{P_t}-V^{\pi_t}_{P_t})\\
=&\ \frac{1}{T}\sum_{t=0}^{T-1} \frac{1}{1-\gamma}\mathbb{E}_{(s,a)\sim d_{P^*}^{\pi^\dagger}} A_{P_t}^{\pi_t}(s,a) + \frac{1}{T}\sum_{t=0}^{T-1} \frac{1}{1-\gamma}\Bigl(\mathbb{E}_{(s,a)\sim d_{P_t}^{\pi^\dagger}}-\mathbb{E}_{(s,a)\sim d_{P^*}^{\pi^\dagger}}\Bigr) A_{P_t}^{\pi_t}(s,a)\\
\le&\ \frac{1}{2T(1-\gamma)^3}\sum_{t=0}^{T-1} \eta_t + \frac{1}{1-\gamma} \frac{2|\mathcal{A}|}{\sqrt{1-\gamma}} \left\| \frac{d_{P^*}^{\pi^\dagger}(s)}{\mu_0(s)} \right\|_{\infty}^{\frac{1}{2}} \varepsilon_\text{approx} + \frac{D_0}{T(1-\gamma)\eta_0}+ \frac{\gamma}{(1-\gamma)^3} C_{\pi^\dagger} \varepsilon_\text{est}.
\end{split}
\end{equation}

Finally, we consider $V_{P^*}^{\pi^\dagger} - V_{P^*}^{\pi_t}$: 
\begin{equation}
\label{main thm proof eq 9}
V_{P^*}^{\pi^\dagger} - \frac{1}{T} \sum_{t=0}^{T-1}V_{P^*}^{\pi_t} = \frac{1}{T}\sum_{t=0}^{T-1} \left( V_{P^*}^{\pi^\dagger}-V_{P_t}^{\pi^\dagger} \right) + \frac{1}{T}\sum_{t=0}^{T-1} \left( V_{P_t}^{\pi^\dagger}-V_{P_t}^{\pi_t} \right) + \frac{1}{T}\sum_{t=0}^{T-1} \left( V_{P_t}^{\pi_t}-V_{P^*}^{\pi_t} \right).
\end{equation}

The second term in (\ref{main thm proof eq 9}) is already upper bounded by (\ref{main thm proof eq 8}).

The third term in (\ref{main thm proof eq 9}) is negative with high probability: by assumption \ref{Assump: theory}(c), $P^* \in \mathcal{P}_{n,\alpha_n}$ with high probability. Recall the updating rule in (\ref{equa: constrained}), $P_t= \text{argmin}_{P\in \mathcal{P}_{n,\alpha_n}} V_P^{\pi_t}$. So $V_{P_t}^{\pi_t}\le V_{P^*}^{\pi_t}$ for all $t$ with high probability. Then the following holds with high probability:
\begin{equation}
\label{main thm proof eq 10}
\frac{1}{T}\sum_{t=0}^{T-1} \left( V_{P_t}^{\pi_t}-V_{P^*}^{\pi_t} \right) \le 0.
\end{equation}

The first term in (\ref{main thm proof eq 9}) can be dealt by simulation lemma, which is same to (\ref{thm 1 proof eq 2}) in \cref{append: proof for algo 1}:
\begin{equation}
\label{main thm proof eq 11}
\begin{split}
V_{P^*}^{\pi^\dagger}-V_{P_t}^{\pi^\dagger} \le \frac{\gamma}{(1-\gamma)^2} C_{\pi^\dagger} \varepsilon_\text{est}.
\end{split}
\end{equation}

By (\ref{main thm proof eq 8}), (\ref{main thm proof eq 9}), (\ref{main thm proof eq 10}), (\ref{main thm proof eq 11}), 
\begin{align*}
V_{P^*}^{\pi^\dagger} - \frac{1}{T} \sum_{t=0}^{T-1}V_{P^*}^{\pi_t} \le&\  \frac{\gamma}{(1-\gamma)^2} C_{\pi^\dagger} \varepsilon_\text{est} + \frac{1}{2T(1-\gamma)^3}\sum_{t=0}^{T-1} \eta_t \\
&+ \frac{1}{1-\gamma} \frac{2|\mathcal{A}|}{\sqrt{1-\gamma}} \left\| \frac{d_{P^*}^{\pi^\dagger}(s)}{\mu_0(s)} \right\|_{\infty}^{\frac{1}{2}} (\varepsilon_\text{approx}+c\sqrt{\frac{\max_{t,i}|\mathcal{F}_{t,i}|}{N}})\\
&+ \frac{D_0}{T(1-\gamma)\eta_0}+ \frac{\gamma}{(1-\gamma)^3} C_{\pi^\dagger} \varepsilon_\text{est} + 0\\
\le&\ c_1 C_{\pi^\dagger} \varepsilon_\text{est} + c_2 |\mathcal{A}| \sqrt{\sup_s \frac{d_{P^*}^{\pi^\dagger}(s)}{\mu_0(s)}} (\varepsilon_\text{approx}+c\sqrt{\frac{\max_{t,i}|\mathcal{F}_{t,i}|}{N}})\\
&+ \frac{1}{2T(1-\gamma)^3}\sum_{t=0}^{T-1} \eta_t + \frac{D_0}{T(1-\gamma)\eta_0}.
\end{align*}
\end{proof}
%%%%%%%%%%%%%%%%%%%%%%%%%%%%%%%%%%%%%%%%%%%%%%%%%%%%%%%%%%%%%%%%%
\subsection{Proofs of \texorpdfstring{\cref{thm: controlled estimation error}}{estimation error}}
\label{append: proofs of ERM}
\begin{proof}
Here we adapt a result from \cref{thm: wainwright}. Specifically, given the conditions in \cref{thm: controlled estimation error}, we have 
$$\|\widehat{P}-P^*\|_2\le c_1\delta_n$$
and 
$$\mathbb{E}(l(\hat{P})-l(P^*))\le c_2\delta_n^2.$$
Now we prove the first result in \cref{thm: controlled estimation error}, i.e., $P^*\in \mathcal{P}_{n,\alpha_n}$ with high probability.
Consider $\widehat{L}_n(P^*)-\widehat{L}_n(\widehat{P})$ where $\widehat{P}$ minimize $\widehat{L}_n(P)$, and $\widehat{L}_n(P)=\frac{1}{n}\sum_{i=1}^n l(P)(s_i,a_i,s_i')$. We also use the notion $L(P):=\mathbb{E}l(P)$ which is an population counterpart of $\widehat{L}_n$.  Then we have
\begin{equation}
    \begin{aligned}
&\widehat{L}_n(P^*)-\widehat{L}_n(\widehat{P})\\
=& \widehat{L}_n(P^*) - L(P^*) + L(P^*) - L(\widehat{P}) + L(\widehat{P}) - \widehat{L}_n(\widehat{P})\\
=& (\widehat{L}_n(P^*) - \widehat{L}_n(\widehat{P})+L(\widehat{P})-L(P^*))+L(P^*)-L(\widehat{P})\\
\le &|(\widehat{L}_n(P^*) - \widehat{L}_n(\widehat{P})+L(\widehat{P})-L(P^*))|.
    \end{aligned}
\end{equation}
as the third term is less than $0$.

By (b) of \cref{thm: wainwright}, we have 
$$\frac{|(\widehat{L}_n(P^*) - \widehat{L}_n(\widehat{P})+L(\widehat{P})-L(P^*))|}{\|\widehat{P}-P^*\|_2}\le \sup_{P}\frac{|(\widehat{L}_n(P^*) - \widehat{L}_n(P)+L(P)-L(P^*))|}{\|P-P^*\|_2}\le c_3 \delta_n.$$
Then we get
$$|(\widehat{L}_n(P^*) - \widehat{L}_n(\widehat{P})+L(\widehat{P})-L(P^*))|\le c_3\delta_n \|\widehat{P}-P^*\|_2\le c_4\delta_n^2.$$
Therefore we get $\widehat{L}_n(P^*)-\widehat{L}_n(\widehat{P})\le c_4\delta_n^2$, and it implies 
$$P^*\in \mathcal{P}_{n,\alpha_n}$$
where $\alpha$ is set to be $c\delta_n^2$.

Next, we show that $\mathbb{E}_{s,a\sim\rho}\|P(\cdot|s,a)-P^*(\cdot|s,a)\|_1\le c\delta_n$ for every $P\in\mathcal{P}_{n,\alpha_n}$. Here we incorporate an assumption that $\mathbb{E}_{s,a\sim \rho}H^2(P,P^*)\le L(P)-L(P^*)$ for every $P.$ Then we have
\begin{equation}
    \begin{aligned}
&\mathbb{E}_{s,a\sim \rho} H^2(P,P^*)\\
&\le L(P)-L(P^*)\\
&=L(P)-\widehat{L}_n(P) + \widehat{L}_n(P) -  \widehat{L}_n(P^*)+\widehat{L}_n(P^*) - L(P^*)\\
&\le |L(P)-\widehat{L}_n(P)+\widehat{L}_n(P^*) - L(P^*)|+|\widehat{L}_n(P) -  \widehat{L}_n(P^*)|\\
&\le |L(P)-\widehat{L}_n(P)+\widehat{L}_n(P^*) - L(P^*)|+|\widehat{L}_n(P) -  \widehat{L}_n(\widehat{P})|+|\widehat{L}_n(\widehat{P})-\widehat{L}_n(P^*)|\\
&\le c_1\delta_n^2+c_2\alpha_n+c_2\alpha_n\\
&\le c\delta_n^2.
    \end{aligned}
\end{equation}
Since $H^2$ is an upper bound for TV distance, we have
$$\sup_{P\in\mathcal{P}_{n,\alpha_n}}\mathbb{E}_{s,a\sim \rho}\|P(\cdot|s,a)-P^*(\cdot|s,a)\|_1\le c\delta_n.$$
And the proof is done.
\end{proof}

%%%%%%%%%%%%%%%%%%%%%%%%%%%%%%%%%%%%%%%%%%%%%%%%%%%%%%%
\subsection{Proofs of \texorpdfstring{\cref{cor: mle}}{mle}}
\label{append: proofs of MLE}
\begin{proof}
Let $$\widehat{L}_n(P)=\frac{1}{n}\sum_{i=1}^n \frac{P^*(s_i'\mid s_i,a_i)}{P(s_i'\mid s_i,a_i)}$$ and its population counterpart: $${L}(P)=\mathbb{E}_{(s,a)\sim\rho,s'\sim P^*(\cdot\mid s,a)}\frac{P^*(s'\mid s,a)}{P(s'\mid s,a)}=D_{\mathrm{KL}}(P^* \| P).$$
We prove $P^*\in \mathcal{P}_{n,\alpha_n}$ first. Consider $\widehat{L}_n(P^*)-\widehat{L}_n(\widehat{P})$ where $\widehat{P}$ minimize $\widehat{L}_n(P)$. Then we have
\begin{equation}
    \begin{aligned}
&\widehat{L}_n(P^*)-\widehat{L}_n(\widehat{P})\\
=& \widehat{L}_n(P^*) - L(P^*) + L(P^*) - L(\widehat{P}) + L(\widehat{P}) - \widehat{L}_n(\widehat{P})\\
=&(a) + (b) +(c) 
    \end{aligned}
\end{equation}
Terms (a)(c) can be bounded by 
$$\sup_{P\in\mathcal{P}}|(\widehat{L}_n-L)(P)|.$$
Again, we use \cref{thm: wainwright} to show that $\sup_{P\in\mathcal{P}}|(\widehat{L}_n-L)(P)|\le \delta_n^2$.

For term $(b)$, we notice that $L(P^*) - L(\widehat{P}) = D_{\mathrm{KL}}(P^* \| \widehat{P})$. Combining \cref{lemma: convergence of mle} which shows convergence rate of MLE under Hellinger distance and \cref{lemma: KL and hellinger}, which upper bounds KL divergence by Hellinger distance when $P$ has a lower bound, then we have
$$(b) = D_{\mathrm{KL}}(P^* \| \widehat{P})\le \delta_n^2$$
with high probability.

Then we have shown that $$\widehat{L}_n(P^*)-\widehat{L}_n(\widehat{P})\le c\delta_n^2=\alpha$$ with probability at least $1-\delta$. This implies that $P^*\in \mathcal{P_\alpha}$ with probability at least $1-\delta$.

For the second part, we show 
\begin{equation}
\sup_{P\in \mathcal{P}_{n,\alpha_n}} (\mathbb{E}_{(s,a)\sim\rho}[d_f(P(\cdot\mid s,a),P^*(\cdot\mid s,a))^2])^\frac{1}{2}\le c_2 \delta_n  .
\end{equation}
To see this, we bound the Hellinger distance by KL divergence (\cref{lemma: KL and hellinger}), specifically, for any $P\in\mathcal{P}_{n,\alpha_n}$
we have
\begin{equation}
    \begin{aligned}
H^2(P,P^*)&\le KL(P\| P^*)\\
&= L(P) - L(P^*)\\
&= L(P) - \widehat{L}_n(P) + \widehat{L}_n(P)-\widehat{L}_n(\widehat{P}) + \widehat{L}_n(\widehat{P}) - L(\widehat{P}) + L(\widehat{P}) - L(P^*).
    \end{aligned}
\end{equation}
Again, the first and the third terms are bounded by 
$$\sup_{P\in\mathcal{P}}|(\widehat{L}_n-L)(P)|\le \delta_n^2.$$
The second term is bounded by $\alpha =c\delta_n^2$ because $P\in\mathcal{P}_{n,\alpha_n}$. The fourth term is equal to $KL(\widehat{P}\| P^*)\le c_3\delta_n^2$ by consistency of MLE in KL-divergence.
And the proof is done. 
\end{proof}

\section{Supporting lemmas}
\label{append: supporting lemmas}

\begin{lemma}[A generalization of simulation lemma]
\label{simu lemma}
Suppose $\mathcal{S}$, $\mathcal{A}$, $r$, $\gamma$, $\mu_0$ are all fixed. Here $\mathcal{S}$ and $\mathcal{A}$ can be infinite sets, and $r:\mathcal{S}\to \mathbb{R}$ can be any real value function. For two arbitrary transition models $P$ and $\widehat{P}$, and any policy $\pi: \mathcal{S}\to \Delta(\mathcal{A})$, we have

$$V_{P}^\pi-V_{\widehat{P}}^\pi = \frac{\gamma}{1-\gamma} \mathbb{E}_{(s,a) \sim d_{P}^\pi}\left[\mathbb{E}_{s'\sim P(\cdot|s,a)}\left[V_{\widehat{P}}^\pi(s')\right]- \mathbb{E}_{s' \sim \widehat{P}(\cdot|s,a)}\left[V_{\widehat{P}}^\pi(s')\right]\right].$$

If $V_{\widehat{P}}^\pi(s)$ is bounded, i.e. $-C\le V_{\widehat{P}}^\pi(s)\le C, \ \forall s\in \mathcal{S}$, then we further have

$$\left|V_{P}^\pi-V_{\widehat{P}}^\pi \right| \le 2C \frac{\gamma}{1-\gamma} \mathbb{E}_{(s,a) \sim d_{P}^\pi}\left[\operatorname{TV}(P(\cdot|s,a), \widehat{P}(\cdot|s,a))\right].$$

If $V_{\widehat{P}}^\pi(s)$ is positive and bounded, i.e. $0\le V_{\widehat{P}}^\pi(s)\le C, \ \forall s\in \mathcal{S}$, then

$$\left|V_{P}^\pi-V_{\widehat{P}}^\pi \right| \le C \frac{\gamma}{1-\gamma} \mathbb{E}_{(s,a) \sim d_{P}^\pi}\left[\operatorname{TV}(P(\cdot|s,a), \widehat{P}(\cdot|s,a))\right].$$

\end{lemma}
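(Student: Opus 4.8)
The plan is to establish the exact identity first, by a one-step Bellman expansion followed by unrolling, and then read off the two inequalities as pointwise consequences. Write $r^\pi(s):=\mathbb{E}_{a\sim\pi(\cdot\mid s)}[r(s,a)]$ and let $(\mathcal{T}_P^\pi f)(s):=\mathbb{E}_{a\sim\pi(\cdot\mid s),\,s'\sim P(\cdot\mid s,a)}[f(s')]$ be the one-step transition-expectation operator of $P$ and $\pi$. Both value functions satisfy their Bellman equations, $V_P^\pi(s)=r^\pi(s)+\gamma(\mathcal{T}_P^\pi V_P^\pi)(s)$ and likewise for $\widehat P$. Subtracting them and adding and subtracting $\gamma(\mathcal{T}_P^\pi V_{\widehat P}^\pi)(s)$ cancels the common reward term and produces the fixed-point recursion $g(s)=\gamma(\mathcal{T}_P^\pi g)(s)+\gamma\,\tilde h(s)$ for $g:=V_P^\pi-V_{\widehat P}^\pi$, where $\tilde h(s):=\mathbb{E}_{a\sim\pi(\cdot\mid s)}[w(s,a)]$ and $w(s,a):=\mathbb{E}_{s'\sim P(\cdot\mid s,a)}[V_{\widehat P}^\pi(s')]-\mathbb{E}_{s'\sim\widehat P(\cdot\mid s,a)}[V_{\widehat P}^\pi(s')]$. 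The structural point to keep in view is that the residual $\tilde h$ is evaluated against the fixed model's value $V_{\widehat P}^\pi$, while the propagating operator is $\mathcal{T}_P^\pi$, i.e.\ the model whose occupancy measure will ultimately appear.

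Next I would unroll this recursion. Since $r\in[0,1]$ and $\gamma<1$, both value functions are bounded by $1/(1-\gamma)$, so $g$ is bounded and the Neumann-type expansion $g=\sum_{k\ge0}\gamma^{k+1}(\mathcal{T}_P^\pi)^k\tilde h$ converges absolutely; boundedness also permits interchanging the infinite sum with the $\mu_0$-expectation by dominated convergence, even on an infinite $\mathcal{S}$. Taking $\mathbb{E}_{s_0\sim\mu_0}$ and noting $\mathbb{E}_{\mu_0}[(\mathcal{T}_P^\pi)^k\tilde h]=\mathbb{E}[w(s_k,a_k)]$ for the trajectory generated by $P$ and $\pi$ from $\mu_0$, I would invoke the definition of the occupancy measure, $\sum_{k\ge0}\gamma^k\Pr_{P,\pi}(s_k=s,a_k=a)=\tfrac{1}{1-\gamma}d_P^\pi(s,a)$. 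This collapses the series into $V_P^\pi-V_{\widehat P}^\pi=\tfrac{\gamma}{1-\gamma}\mathbb{E}_{(s,a)\sim d_P^\pi}[w(s,a)]$, which is exactly the claimed identity once $w(s,a)$ is written out in full.

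For the two inequalities I would bound $|w(s,a)|$ pointwise and pass the absolute value through the expectation. The key elementary fact is that for any measures $P,\widehat P$ and bounded $f$, $\bigl|\mathbb{E}_P f-\mathbb{E}_{\widehat P} f\bigr|=\bigl|\int (f-c)\,d(P-\widehat P)\bigr|\le\|f-c\|_\infty\cdot 2\,\operatorname{TV}(P,\widehat P)$ for every constant $c$, since $\int d(P-\widehat P)=0$ and $\operatorname{TV}$ is half the total $L^1$ mass of $P-\widehat P$. Taking $f=V_{\widehat P}^\pi$ and $c=0$ with $\|f\|_\infty\le C$ gives $|w(s,a)|\le 2C\,\operatorname{TV}(P(\cdot\mid s,a),\widehat P(\cdot\mid s,a))$, hence the first bound; in the nonnegative case $0\le V_{\widehat P}^\pi\le C$ I would instead center at $c=C/2$, so that $\|f-C/2\|_\infty\le C/2$ and $|w(s,a)|\le C\,\operatorname{TV}$, yielding the sharper second bound.

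The main obstacle is bookkeeping rather than depth: one must keep the asymmetry straight (expectation under $d_P^\pi$ but value function $V_{\widehat P}^\pi$), and justify both the unrolling and the sum--expectation interchange on a possibly infinite state space, which are handled cleanly by the uniform boundedness coming from $r\in[0,1]$ and $\gamma<1$. The only genuinely clever step is the centering choice $c=C/2$, which halves the constant in the nonnegative case.
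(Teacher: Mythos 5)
Your proof is correct, but it takes a genuinely different route from the paper's in both halves of the argument. For the identity, the paper works at the level of $Q$-functions: it writes $Q_P^\pi(s_0,a_0)$ as an occupancy-measure average of the reward, observes that this representation holds for an \emph{arbitrary} real-valued ``reward'' function, and then substitutes $Q_{\widehat P}^\pi$ itself in place of $r$; adding the two resulting displays yields the $Q$-difference, which is finally integrated over $\mu_0$. You instead treat $g = V_P^\pi - V_{\widehat P}^\pi$ as the solution of the affine fixed-point equation $g = \gamma\,\mathcal{T}_P^\pi g + \gamma\,\tilde h$ and unroll it as a Neumann series, collapsing the series back into $d_P^\pi$ at the end. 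The two routes are equivalent in content --- the occupancy measure \emph{is} the geometric series --- but yours makes that mechanism explicit and therefore requires the dominated-convergence and non-expansiveness justifications you supply, while the paper's reward-substitution trick hides the series inside the already-established occupancy identity and avoids any explicit limiting argument. For the sharper constant in the nonnegative case, you center $V_{\widehat P}^\pi$ at $C/2$, exploiting that adding a constant leaves a difference of expectations unchanged, so the effective sup-norm drops to $C/2$; the paper instead splits the signed integral over the sets where $q_1 > q_2$ and $q_1 \le q_2$, notes both pieces are nonnegative, and bounds their difference by the maximum of the two, invoking the positive-part characterization of total variation. Your centering argument is shorter and arguably cleaner; both yield the same factor $C$, so nothing is lost either way.
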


\begin{proof}
We first prove the first part of the lemma.

Let $d_P^\pi (\cdot,\cdot|s_0,a_0)$ denote the visitation measure over $(s,a)$ conditioning on $(S_0=s_0, A_0=a_0)$ under transition model $P$, i.e. $d_P^\pi (\cdot,\cdot|s_0,a_0)= (1-\gamma) \sum_{t=0}^\infty \gamma^t P^\pi (S_t=\cdot \ , A_t=\cdot \ |s_0,a_0).$

Then we have for any $(s_0,a_0),$
\begin{equation}
\label{simu lemma eq 1}
Q_P^\pi(s_0, a_0)= \frac{1}{1-\gamma} \mathbb{E}_{(s,a)\sim d_P^\pi(\cdot,\cdot|s_0,a_0)}[r(s,a)].
\end{equation}

By Bellman equation, for any $(s,a)$, 
\begin{equation}
\label{simu lemma eq 2}
Q_P^\pi(s, a)= r(s,a)+ \gamma \mathbb{E}_{s'\sim P(\cdot|s,a), a'\sim \pi(\cdot|s')} \left[Q_P^\pi(s', a')\right].
\end{equation}

\begin{equation}
\label{simu lemma eq 3}
Q_{\widehat{P}}^\pi(s, a)= r(s,a)+ \gamma \mathbb{E}_{s'\sim \widehat{P}(\cdot|s,a), a'\sim \pi(\cdot|s')} \left[Q_{\widehat{P}}^\pi(s', a')\right].
\end{equation}

Substitute the $r(s,a)$ in (\ref{simu lemma eq 1}) by the $r(s,a)$ in (\ref{simu lemma eq 3}):

\begin{equation}
\label{simu lemma eq 4}
Q_P^\pi(s_0, a_0)= \frac{1}{1-\gamma} \mathbb{E}_{(s,a)\sim d_P^\pi(\cdot,\cdot|s_0,a_0)}\left[ Q_{\widehat{P}}^\pi(s, a)- \gamma \mathbb{E}_{s'\sim \widehat{P}(\cdot|s,a), a'\sim \pi(\cdot|s')} Q_{\widehat{P}}^\pi(s', a') \right].
\end{equation}

By (\ref{simu lemma eq 1}) and (\ref{simu lemma eq 2}), we first apply (\ref{simu lemma eq 2}) to the $Q_P^\pi(s_0, a_0)$ in (\ref{simu lemma eq 1}), then apply (\ref{simu lemma eq 1}) iteratively:

\begin{align*}
&\frac{1}{1-\gamma} \mathbb{E}_{(s,a)\sim d_P^\pi(\cdot,\cdot|s_0,a_0)}[r(s,a)]\\
&= Q_P^\pi(s_0, a_0)\\
&= r(s_0,a_0)+ \gamma \mathbb{E}_{s\sim P(\cdot|s_0,a_0), a\sim \pi(\cdot|s)} \left[Q_P^\pi(s, a)\right]\\
&= r(s_0,a_0)+ \gamma \mathbb{E}_{s\sim P(\cdot|s_0,a_0), a\sim \pi(\cdot|s)} \left[ \frac{1}{1-\gamma} \mathbb{E}_{(s',a')\sim d_P^\pi(\cdot,\cdot|s,a)}[r(s',a')] \right].
\end{align*}

Rearrange it as
\begin{align*}
-r(s_0,a_0)=&\ \frac{\gamma}{1-\gamma} \mathbb{E}_{s\sim P(\cdot|s_0,a_0), a\sim \pi(\cdot|s)} \left[\mathbb{E}_{(s',a')\sim d_P^\pi(\cdot,\cdot|s,a)}[r(s',a')] \right]\\
&- \frac{1}{1-\gamma} \mathbb{E}_{(s,a)\sim d_P^\pi(\cdot,\cdot|s_0,a_0)}[r(s,a)].
\end{align*}

Note that the equation above holds for any real function $r:\mathcal{S}\times\mathcal{A}\to \mathbb{R}$, so we can replace $r(\cdot,\cdot)$ by $Q_{\widehat{P}}^\pi(\cdot,\cdot)$

\begin{equation}
\label{simu lemma eq 5}
\begin{split}
-Q_{\widehat{P}}^\pi(s_0,a_0)=&\ \frac{\gamma}{1-\gamma} \mathbb{E}_{s\sim P(\cdot|s_0,a_0), a\sim \pi(\cdot|s)} \left[\mathbb{E}_{(s',a')\sim d_P^\pi(\cdot,\cdot|s,a)}[Q_{\widehat{P}}^\pi(s',a')] \right]\\
&- \frac{1}{1-\gamma} \mathbb{E}_{(s,a)\sim d_P^\pi(\cdot,\cdot|s_0,a_0)}[Q_{\widehat{P}}^\pi(s,a)].
\end{split}
\end{equation}

(\ref{simu lemma eq 4})+(\ref{simu lemma eq 5}):

\begin{equation}
\label{simu lemma eq 6}
\begin{split}
Q_P^\pi(s_0,a_0)-Q_{\widehat{P}}^\pi(s_0,a_0)=& \frac{\gamma}{1-\gamma} \mathbb{E}_{s\sim P(\cdot|s_0,a_0), a\sim \pi(\cdot|s)} \left[\mathbb{E}_{(s',a')\sim d_P^\pi(\cdot,\cdot|s,a)}Q_{\widehat{P}}^\pi(s',a') \right]\\
&- \frac{\gamma}{1-\gamma}\mathbb{E}_{(s,a)\sim d_P^\pi(\cdot,\cdot|s_0,a_0)}\left[ \mathbb{E}_{s'\sim \widehat{P}(\cdot|s,a), a'\sim \pi(\cdot|s')} Q_{\widehat{P}}^\pi(s', a') \right].
\end{split}
\end{equation}

Consider the first term on right hand side: 

\begin{align*}
\mathbb{E}_{s\sim P(\cdot|s_0,a_0), a\sim \pi(\cdot|s)} \mathbb{E}_{(s',a')\sim d_P^\pi(\cdot,\cdot|s,a)} [\cdot]&= \mathbb{E}_{(s',a')\sim \widetilde{d}_P^\pi(\cdot,\cdot|s_0,a_0)}[\cdot]\\&= \mathbb{E}_{(s,a)\sim d_P^\pi(\cdot,\cdot|s_0,a_0)} \mathbb{E}_{s'\sim P(\cdot|s,a), a'\sim \pi(\cdot|s')}[\cdot]
\end{align*}
where $\widetilde{d}_P^\pi(s,a|s_0,a_0):= (1-\gamma)\sum_{t=0}^\infty \gamma^t P^\pi (S_{t+1}=s,A_{t+1}=a|S_0=s_0,A_0=a_0)$.

So (\ref{simu lemma eq 6}) can be rewritten as

\begin{align*}
&Q_P^\pi(s_0,a_0)-Q_{\widehat{P}}^\pi(s_0,a_0)\\&= \frac{\gamma}{1-\gamma}\mathbb{E}_{(s,a)\sim d_P^\pi(\cdot,\cdot|s_0,a_0)}\left[ \mathbb{E}_{s'\sim P(\cdot|s,a), a'\sim \pi(\cdot|s')} Q_{\widehat{P}}^\pi(s', a')- \mathbb{E}_{s'\sim \widehat{P}(\cdot|s,a), a'\sim \pi(\cdot|s')} Q_{\widehat{P}}^\pi(s', a') \right]\\
&= \frac{\gamma}{1-\gamma}\mathbb{E}_{(s,a)\sim d_P^\pi(\cdot,\cdot|s_0,a_0)}\left[ \mathbb{E}_{s'\sim P(\cdot|s,a)} V_{\widehat{P}}^\pi(s')- \mathbb{E}_{s'\sim \widehat{P}(\cdot|s,a)} V_{\widehat{P}}^\pi(s') \right].
\end{align*}

Finally, consider $V_P^\pi(s_0)$, $V_{\widehat{P}}^\pi(s_0)$ and the initial distribution $\mu$. Recall that $d_P^\pi$ is the visitation measure conditioning on the initial distribution $\mu$. So we have

\begin{align*}
V_P^\pi-V_{\widehat{P}}^\pi&= \mathbb{E}_{s_0\sim \mu} \left[V_P^\pi(s_0)-V_{\widehat{P}}^\pi(s_0)\right]\\
&= \mathbb{E}_{s_0\sim \mu, a_0\sim \pi(\cdot|s_0)}\left[ Q_P^\pi(s_0,a_0)-Q_{\widehat{P}}^\pi(s_0,a_0)\right]\\
&= \frac{\gamma}{1-\gamma} \mathbb{E}_{s_0\sim \mu, a_0\sim \pi(\cdot|s_0)} \mathbb{E}_{(s,a)\sim d_P^\pi(\cdot,\cdot|s_0,a_0)}\left[ \mathbb{E}_{s'\sim P(\cdot|s,a)} V_{\widehat{P}}^\pi(s')- \mathbb{E}_{s'\sim \widehat{P}(\cdot|s,a)} V_{\widehat{P}}^\pi(s') \right]\\
&= \frac{\gamma}{1-\gamma} \mathbb{E}_{(s,a)\sim d_P^\pi}\left[ \mathbb{E}_{s'\sim P(\cdot|s,a)} V_{\widehat{P}}^\pi(s')- \mathbb{E}_{s'\sim \widehat{P}(\cdot|s,a)} V_{\widehat{P}}^\pi(s') \right],
\end{align*}

which finishes the first part of the lemma.

Then we prove the second part: first note that

\begin{equation}
\label{simu lemma eq 7}
\begin{split}
\left| V_P^\pi-V_{\widehat{P}}^\pi \right| &= \frac{\gamma}{1-\gamma} \left| \mathbb{E}_{(s,a)\sim d_P^\pi}\left[ \mathbb{E}_{s'\sim P(\cdot|s,a)} V_{\widehat{P}}^\pi(s')- \mathbb{E}_{s'\sim \widehat{P}(\cdot|s,a)} V_{\widehat{P}}^\pi(s') \right] \right| \\
&\le \frac{\gamma}{1-\gamma} \mathbb{E}_{(s,a)\sim d_P^\pi} \left| \mathbb{E}_{s'\sim P(\cdot|s,a)} V_{\widehat{P}}^\pi(s')- \mathbb{E}_{s'\sim \widehat{P}(\cdot|s,a)} V_{\widehat{P}}^\pi(s') \right|.
\end{split}
\end{equation}

Suppose $q_1$, $q_2$ are two arbitrary probability distributions, and $C$ is a constant satisfying $-C\le f(x)\le C$. By property of total variation distance, $\text{TV}(q_1, q_2)=\frac{1}{2}\|q_1-q_2\|_1$.

By H\"older inequality

\begin{equation}
\label{simu lemma eq 8}
\begin{split}
\left| \mathbb{E}_{x\sim q_1}f(x)- \mathbb{E}_{x\sim q_2}f(x) \right| &= \left| \int f(x) (q_1(x)-q_2(x)) dx \right| \\
&= \|f(q_1-q_2)\|_1 \le \|f\|_\infty \|q_1-q_2\|_1\le 2C \text{TV}(q_1,q_2).
\end{split}
\end{equation}

Apply (\ref{simu lemma eq 8}) to the right hand side of (\ref{simu lemma eq 7}):

$$\left| V_P^\pi-V_{\widehat{P}}^\pi \right| \le 2C\frac{\gamma}{1-\gamma} \mathbb{E}_{(s,a)\sim d_P^\pi} \left[ \text{TV}(P(\cdot|s,a), \widehat{P}(\cdot|s,a)) \right],$$

which concludes the second part.

Third part: Consider the special case that $0\le f(x)\le C$, then we can improve the upper bound in (\ref{simu lemma eq 8})

\begin{align*}
&\left| \mathbb{E}_{x\sim q_1}f(x)- \mathbb{E}_{x\sim q_2}f(x) \right|\\ &= \left| \int f(x) (q_1(x)-q_2(x)) dx \right| \\
&= \left| \int f(x)(q_1(x)-q_2(x)) \mathbf{1}\{q_1(x)>q_2(x)\} dx - \int f(x) (q_2(x)-q_1(x)) \mathbf{1}\{q_1(x)\le q_2(x)\} dx\right|.
\end{align*}

Note that on the right hand side, the two terms inside the absolute value sign are both non-negative, so

\begin{align*}
& \left| \mathbb{E}_{x\sim q_1}f(x)- \mathbb{E}_{x\sim q_2}f(x) \right| \\
&\le  \max \left\{ \int f(x)(q_1(x)-q_2(x)) \mathbf{1}\{q_1(x)>q_2(x)\} dx , \int f(x) (q_2(x)-q_1(x)) \mathbf{1}\{q_1(x)\le q_2(x)\} dx \right\}\\
&\le C \max \left\{ \int (q_1(x)-q_2(x)) \mathbf{1}\{q_1(x)>q_2(x)\} dx , \int (q_2(x)-q_1(x)) \mathbf{1}\{q_1(x)\le q_2(x)\} dx \right\}\\
&= C \text{TV}(q_1,q_2),
\end{align*} 

where the last step is an equivalent definition of total variation distance (for two probability distributions).

So the factor 2 on the right hand side in (\ref{simu lemma eq 8}) can be improved to 1 in this case.
\end{proof}

\begin{lemma}
\label{lemma: strong convex}
If $f$ is strongly convex with modulus $\mu$ and differentiable, i.e.,
\begin{equation}
f(y) \geq f(x)+\nabla f(x)^T(y-x)+\frac{\mu}{2}\|y-x\|^2,
\end{equation}
suppose $g$ is a convex differentiable function, then $f+g$ is a strongly convex function with modulus $\mu$.
\end{lemma}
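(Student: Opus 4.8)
The plan is to add the defining first-order inequalities for $f$ and $g$ and recognize the resulting sum as the strong-convexity inequality for $f+g$. First I would invoke the standard first-order characterization of convexity: since $g$ is convex and differentiable, for all $x,y$ it satisfies $g(y) \ge g(x) + \nabla g(x)^T(y-x)$. The hypothesis on $f$ already supplies the strong-convexity inequality $f(y) \ge f(x) + \nabla f(x)^T(y-x) + \frac{\mu}{2}\|y-x\|^2$.

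Next I would add these two inequalities termwise. The function values combine into $(f+g)(y)$ on the left and $(f+g)(x)$ on the right, the two gradient terms combine into $\left(\nabla f(x) + \nabla g(x)\right)^T(y-x)$, and the quadratic term $\frac{\mu}{2}\|y-x\|^2$ is carried over unchanged, since the convexity inequality for $g$ contributes no second-order term.

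Finally I would invoke linearity of the gradient, $\nabla(f+g)(x) = \nabla f(x) + \nabla g(x)$, to rewrite the combined gradient term as $\nabla(f+g)(x)^T(y-x)$. This yields exactly
$$(f+g)(y) \ge (f+g)(x) + \nabla(f+g)(x)^T(y-x) + \frac{\mu}{2}\|y-x\|^2,$$
which is precisely the definition in \cref{def: strongly convex} of strong convexity with modulus $\mu$ for $f+g$, completing the argument.

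There is no genuine obstacle here; the argument is a single termwise addition of two elementary inequalities. The only point deserving a moment's care is that the modulus is \emph{preserved} rather than enlarged: a generic convex $g$ need not be strongly convex, so it contributes only the linear lower bound and leaves the quadratic coefficient at $\mu$. (Had $g$ itself been strongly convex with modulus $\nu$, the identical computation would instead deliver modulus $\mu+\nu$.)
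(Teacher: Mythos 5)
Your proof is correct and follows essentially the same route as the paper's: add the first-order convexity inequality for $g$ to the strong-convexity inequality for $f$, then use linearity of the gradient to identify the result as strong convexity of $f+g$ with modulus $\mu$. Your explicit remark that $g$ contributes no quadratic term (and would contribute $\nu$ if it were $\nu$-strongly convex) is a small clarification beyond the paper, but the argument is identical.
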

\begin{proof}
Since $g$ is a convex function, we have
\begin{equation}
g(y) \geq g(x)+\nabla g(x)^T(y-x).
\end{equation}
Then 
\begin{equation}
f(y)+g(y) \geq f(x)+g(x)+(\nabla f(x)^T+\nabla g(x)^T)(y-x)+\frac{\mu}{2}\|y-x\|^2,
\end{equation}
\end{proof}

\begin{theorem}
\label{thm: wainwright}
Theorem 14.20 of \citep[chap. 14]{wainwright2019high} (Uniform law for Lipschitz cost functions) Given a uniformly 1bounded function class $\mathcal{F}$ that is star-shaped around the population minimizer $f^*$, let $\delta_n^2 \geq \frac{c}{n}$ be any solution to the inequality
$$
\overline{\mathcal{R}}_n\left(\delta ; \mathcal{F}^*\right) \leq \delta^2 .
$$
(a) Suppose that the cost function is L-Lipschitz in its first argument. Then we have
$$
\sup _{f \in \mathcal{F}} \frac{\left|\mathbb{P}_n\left(\mathcal{L}_f-\mathcal{L}_f\right)-\mathbb{P}\left(\mathcal{L}_f-\mathcal{L}_f\right)\right|}{\left\|f-f^*\right\|_2+\delta_n} \leq 10 L \delta_n
$$
with probability greater than $1-c_1 e^{-c_2 n \delta_n^2}$.\\
(b) Suppose that the cost function is L-Lipschitz and $\gamma$-strongly convex. Then for any function $\widehat{f} \in \mathcal{F}$ such that $\mathbb{P}_n\left(\mathcal{L}_{\widehat{f}}-\mathcal{L}_f\right) \leq 0$, we have
$$
\left\|\widehat{f}-f^*\right\|_2 \leq\left(\frac{20 L}{\gamma}+1\right) \delta_n
$$
and
$$
\mathbb{P}\left(\mathcal{L}_{\widehat{f}}-\mathcal{L}_f\right) \leq 10 L\left(\frac{20 L}{\gamma}+2\right) \delta_n^2,
$$
where both inequalities hold with the same probability as in part (a).   
\end{theorem}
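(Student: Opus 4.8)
The statement is the localized uniform law of Theorem 14.20 in \citet{wainwright2019high}; the plan is to reconstruct its proof from the standard empirical-process machinery, reading the numerator $\mathbb{P}_n(\mathcal{L}_f-\mathcal{L}_f)-\mathbb{P}(\mathcal{L}_f-\mathcal{L}_f)$ as the centered process $\mathbb{G}_n(f):=(\mathbb{P}_n-\mathbb{P})(\mathcal{L}_f-\mathcal{L}_{f^*})$ on the shifted class. Throughout I would write $h_f:=\mathcal{L}_f-\mathcal{L}_{f^*}$ and exploit two structural facts: the $L$-Lipschitz hypothesis, which lets me transfer everything from the loss class $\{h_f\}$ to the base class $\mathcal{F}^*=\{f-f^*\}$, and the star-shaped hypothesis on $\mathcal{F}$ around $f^*$ (Definition \ref{def: star shaped}), which makes $\delta\mapsto\overline{\mathcal{R}}_n(\delta;\mathcal{F}^*)/\delta$ non-increasing and thereby turns the critical inequality $\overline{\mathcal{R}}_n(\delta_n;\mathcal{F}^*)\le\delta_n^2$ into the self-bounding estimate $\overline{\mathcal{R}}_n(r;\mathcal{F}^*)\le r\,\delta_n$ for every $r\ge\delta_n$.

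For part (a) I would first fix a radius $r\ge\delta_n$ and bound $Z(r):=\sup_{\|f-f^*\|_2\le r}|\mathbb{G}_n(f)|$. Symmetrization replaces $Z(r)$ in expectation by a Rademacher process over $\{h_f:\|f-f^*\|_2\le r\}$, and since the cost is $L$-Lipschitz in its first argument, the Ledoux--Talagrand contraction inequality bounds this by $2L\,\overline{\mathcal{R}}_n(r;\mathcal{F}^*)$; combined with the self-bounding estimate above this gives $\mathbb{E}[Z(r)]\lesssim L\,r\,\delta_n$. Next I would upgrade this to a high-probability statement via a Talagrand/Bousquet concentration inequality, using that the summands are uniformly bounded (of order $L$, from $1$-boundedness and Lipschitzness) and have variance at most $L^2\|f-f^*\|_2^2\le L^2r^2$; choosing the deviation parameter $t\asymp n\delta_n^2$ makes both the fluctuation term $\sqrt{L^2r^2 t/n}$ and the mean of the same order $L\,r\,\delta_n$, so $Z(r)\le c\,L\,r\,\delta_n$ with probability at least $1-c_1e^{-c_2 n\delta_n^2}$. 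Finally I would run a peeling argument over the dyadic shells $r_k=2^k\delta_n$: on shell $k$ the denominator $\|f-f^*\|_2+\delta_n$ is at least $r_k/2$ while the numerator is at most $c\,L\,r_k\,\delta_n$, so every ratio is $\lesssim L\delta_n$; a union bound over $k$ costs $\sum_k e^{-c_2 n 4^k\delta_n^2}$, which is geometric and dominated by its first term, preserving the stated probability.

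For part (b) I would combine a deterministic convexity lower bound with the event from part (a). Because $f^*$ minimizes the population risk, first-order optimality gives $\mathbb{P}\langle\nabla\mathcal{L}_{f^*},\widehat f-f^*\rangle\ge0$, and $\gamma$-strong convexity (Definition \ref{def: strongly convex}) then yields the quadratic lower bound $\mathbb{P}(\mathcal{L}_{\widehat f}-\mathcal{L}_{f^*})\ge\tfrac{\gamma}{2}\|\widehat f-f^*\|_2^2$. On the other side, the basic inequality $\mathbb{P}_n(\mathcal{L}_{\widehat f}-\mathcal{L}_{f^*})\le0$ together with part (a) gives $\mathbb{P}(\mathcal{L}_{\widehat f}-\mathcal{L}_{f^*})=-\mathbb{G}_n(\widehat f)+\mathbb{P}_n(\mathcal{L}_{\widehat f}-\mathcal{L}_{f^*})\le 10L\delta_n(\|\widehat f-f^*\|_2+\delta_n)$. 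Writing $u:=\|\widehat f-f^*\|_2$, these two bounds combine into the scalar quadratic inequality $\tfrac{\gamma}{2}u^2\le 10L\delta_n\,u+10L\delta_n^2$, whose larger root gives $u\le(\tfrac{20L}{\gamma}+1)\delta_n$; substituting this back into the upper bound on the excess risk yields $\mathbb{P}(\mathcal{L}_{\widehat f}-\mathcal{L}_{f^*})\le 10L(\tfrac{20L}{\gamma}+2)\delta_n^2$, as claimed.

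The main obstacle I anticipate is part (a), specifically the localized concentration step: obtaining a deviation bound whose fluctuation term scales with the local variance $L^2r^2$ rather than a global bound requires a Talagrand-type inequality with the correct variance proxy, and the peeling argument must then be arranged so that the geometric sum of failure probabilities is controlled by the single exponent $n\delta_n^2$. Tracking the explicit constants ($10L$, $20L/\gamma$) is bookkeeping once the contraction, self-bounding, and concentration steps are in place; the conceptual crux is that the star-shaped/critical-radius mechanism is exactly what converts the fixed-point inequality $\overline{\mathcal{R}}_n(\delta_n;\mathcal{F}^*)\le\delta_n^2$ into uniform multiplicative control (in $\|f-f^*\|_2+\delta_n$) of the empirical process.
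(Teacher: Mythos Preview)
The paper does not prove this theorem: it is stated in the ``Supporting lemmas'' appendix as a direct citation of Theorem~14.20 in \citet{wainwright2019high}, with no accompanying argument, and is then invoked as a black box in the proofs of Proposition~\ref{thm: controlled estimation error} and Corollary~\ref{cor: mle}. So there is no ``paper's own proof'' to compare against.

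That said, your reconstruction is the standard route and is essentially correct. The ingredients you identify---symmetrization, Ledoux--Talagrand contraction to pass from the loss class to $\mathcal{F}^*$, the star-shaped property yielding $\overline{\mathcal{R}}_n(r;\mathcal{F}^*)\le r\delta_n$ for $r\ge\delta_n$, Talagrand-type concentration with local variance proxy, and a dyadic peeling argument---are exactly how Wainwright proves part~(a); your part~(b) derivation (quadratic lower bound from strong convexity plus first-order optimality, basic inequality, then solving the resulting scalar quadratic in $\|\widehat f-f^*\|_2$) is likewise the intended argument. The obstacle you flag---getting the variance term in the concentration step to scale with $r$ so that the peeling sum is geometric---is real but standard, and your handling of it is right. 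There is no gap in your plan.
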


\begin{lemma}
\label{lemma: convergence of mle}
Corollary 14.22 of \citep{wainwright2019high}. Given a class of densities satisfying the previous conditions, let $\delta_n$ be any solution to the critical inequality (14.58) such that $\delta_n^2 \geq\left(1+\frac{b}{v}\right) \frac{1}{n}$. Then the nonparametric density estimate $\widehat{f}$ satisfies the Hellinger bound
$$
H^2\left(\widehat{f} \| f^*\right) \leq c_0 \delta_n^2
$$
with probability greater than $1-c_1 e^{-c_2 \frac{v}{b+n} n \delta_n^2}$.
\end{lemma}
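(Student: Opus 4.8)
The plan is to reproduce the classical analysis of the nonparametric MLE under Hellinger loss (the Wong--Shen / van de Geer argument, packaged as Corollary 14.22 of \citet{wainwright2019high}), driven by the critical radius $\delta_n$ of the localized Rademacher complexity. Write $\mathbb{P}_n$ for the empirical expectation over $\{X_i\}_{i=1}^n$ with $X_i\sim f^*$, and $\mathbb{P}$ for the population expectation; recall the boundedness hypotheses $\sup f\le b$ and $\inf f^*\ge v$, which make every log-ratio bounded. First I would set up the \emph{basic inequality}. Since $\widehat f$ maximizes the likelihood, $\mathbb{P}_n\log(\widehat f/f^*)\ge 0$. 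It is cleaner to pass to the midpoint density $\bar f:=(\widehat f+f^*)/2$: the AM--GM bound $\bar f/f^*\ge \sqrt{\widehat f/f^*}$ gives $\mathbb{P}_n\,\tfrac12\log\!\big(\tfrac{\widehat f+f^*}{2f^*}\big)\ge 0$. Defining the shifted log-ratio $g_f(x):=\tfrac12\log\!\big(\tfrac{f(x)+f^*(x)}{2f^*(x)}\big)$, we have $\mathbb{P}_n g_{\widehat f}\ge 0$, and the ratio bounds force $\|g_f\|_\infty\lesssim \log(1+b/v)$, which is precisely why the stated rate requires $\delta_n^2\ge (1+b/v)/n$.

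Next I would convert the population mean of $g_{\widehat f}$ into a squared Hellinger distance. By Jensen's inequality, $\mathbb{P}g_f=\mathbb{E}_{f^*}\log\sqrt{\bar f/f^*}\le \log\!\int\sqrt{\bar f f^*}$; the affinity identity $\int\sqrt{\bar f f^*}=1-H^2(\bar f,f^*)$ (up to the normalization convention) together with $\log(1-t)\le -t$ yields $\mathbb{P}g_f\le -H^2(\bar f,f^*)$. A convexity comparison then gives $H^2(\bar f,f^*)\ge c\,H^2(f,f^*)$ for a universal constant $c$, so that $H^2(\widehat f,f^*)\lesssim -\mathbb{P}g_{\widehat f}\le (\mathbb{P}_n-\mathbb{P})g_{\widehat f}$, where the final inequality uses $\mathbb{P}_n g_{\widehat f}\ge 0$.

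It remains to control the empirical process $(\mathbb{P}_n-\mathbb{P})g_{\widehat f}$ by the critical radius. Here I would run a localization (peeling) argument. The class $\{g_f\}$ is uniformly bounded and has variance controlled by $\mathbb{P}g_f^2\lesssim H^2(f,f^*)$, and the map $f\mapsto g_f$ is Lipschitz, so by the Ledoux--Talagrand contraction principle the localized Rademacher complexity of $\{g_f: H(f,f^*)\le \delta\}$ is bounded by $\Bar{\mathcal{R}}_n(\delta;\mathcal{F})$ up to constants. Slicing the class into Hellinger shells $\{2^k\delta_n\le H(f,f^*)\le 2^{k+1}\delta_n\}$ and applying Talagrand's concentration inequality on each shell, the critical inequality $\Bar{\mathcal{R}}_n(\delta;\mathcal{F})\le \delta^2/\sqrt{b+v}$ yields, with probability at least $1-c_1 e^{-c_2\frac{v}{b+v}n\delta_n^2}$, a one-sided bound of the form $(\mathbb{P}_n-\mathbb{P})g_{\widehat f}\lesssim \delta_n\big(\delta_n+H(\widehat f,f^*)\big)$. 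Combining with the previous step gives the self-bounding quadratic inequality $H^2(\widehat f,f^*)\lesssim \delta_n H(\widehat f,f^*)+\delta_n^2$, which solves to $H^2(\widehat f,f^*)\lesssim \delta_n^2$, the claimed bound.

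The hard part will be the localized empirical-process estimate: making the peeling argument rigorous requires simultaneously (i) verifying the variance-to-Hellinger comparison $\mathbb{P}g_f^2\lesssim H^2(f,f^*)$ uniformly over the class, (ii) transferring the critical radius from the density class $\mathcal{F}$ to the transformed class $\{g_f\}$ via the contraction principle while carefully tracking the boundedness constants $b$ and $v$, and (iii) applying a sharp shell-by-shell concentration inequality and summing the geometric series of failure probabilities to obtain the stated exponent $\tfrac{v}{b+v}n\delta_n^2$. By contrast, the basic inequality, the Jensen/affinity computation, and the final quadratic solve are routine.
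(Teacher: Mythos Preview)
The paper does not prove this statement at all: Lemma~\ref{lemma: convergence of mle} is quoted verbatim as Corollary~14.22 of \citet{wainwright2019high} and is used as a black box in the proof of Corollary~\ref{cor: mle}. There is therefore no ``paper's own proof'' to compare against; your proposal is an outline of the textbook argument rather than something the present paper supplies. As such it is appropriate in spirit---the Wong--Shen/van de Geer route via the midpoint log-ratio, the basic inequality, the Jensen/affinity step, and localization by the critical radius is exactly how Wainwright establishes the result---but for the purposes of this paper a one-line citation suffices.
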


\begin{lemma}
\label{lemma: KL and hellinger}
Lemma B.2 of \cite{ghosal2017fundamentals}. For every $b>0$, there exists a constant $\epsilon_b>0$ such that for all probability densities $p$ and densities $q$ with $0<d_H^2(p, q)<\epsilon_b P(p / q)^b$,
$$
\begin{aligned}
& K(p ; q) \lesssim d_H^2(p, q)\left(1+\frac{1}{b} \log _{-} d_H(p, q)+\frac{1}{b} \log _{+} P\left(\frac{p}{q}\right)^b\right)+1-Q(\mathfrak{X}), \\
& V_2(p ; q) \lesssim d_H^2(p, q)\left(1+\frac{1}{b} \log _{-} d_H(p, q)+\frac{1}{b} \log _{+} P\left(\frac{p}{q}\right)^b\right)^2 .
\end{aligned}
$$
Furthermore, for every pair of probability densities $p$ and $q$ and any $0<\epsilon<0.4$,
$$
\begin{gathered}
K(p ; q) \leq d_H^2(p, q)\left(1+2 \log _{-} \epsilon\right)+2 P\left[\left(\log \frac{p}{q}\right) \mathbb{I}\{q / p \leq \epsilon\}\right], \\
V_2(p ; q) \leq d_H^2(p, q)\left(12+2 \log _{-}^2 \epsilon\right)+8 P\left[\left(\log \frac{p}{q}\right)^2 \mathbb{I}\{q / p \leq \epsilon\}\right] .
\end{gathered}
$$
Consequently, for every pair of probability densities $p$ and $q$,
$$
\begin{aligned}
& K(p ; q) \lesssim d_H^2(p, q)\left(1+\log \left\|\frac{p}{q}\right\|_{\infty}\right) \leq 2 d_H^2(p, q)\left\|\frac{p}{q}\right\|_{\infty}, \\
& V_2(p ; q) \lesssim d_H^2(p, q)\left(1+\log \left\|\frac{p}{q}\right\|_{\infty}\right)^2 \leq 2 d_H^2(p, q)\left\|\frac{p}{q}\right\|_{\infty} .
\end{aligned}
$$
\end{lemma}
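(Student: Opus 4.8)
The plan is to treat this as a scalar calculus lemma about the likelihood ratio and reduce every bound to two pointwise inequalities for the functions $u\mapsto -\log u$ and $u\mapsto(\log u)^2$. Write $u:=q/p$ for the likelihood ratio on $\{p>0\}$, so that $K(p;q)=\int p(-\log u)\,d\mu$ and $V_2(p;q)=\int p(\log u)^2\,d\mu$, and recall $d_H^2(p,q)=\int(\sqrt p-\sqrt q)^2\,d\mu=\int p\,(1-\sqrt u)^2\,d\mu$. The whole proof hinges on comparing the integrands $-\log u$ and $(\log u)^2$ against the Hellinger integrand $(1-\sqrt u)^2$, with the comparison allowed to degrade only on the tail $\{u\le\epsilon\}$ where $q$ is tiny relative to $p$; there the excess is absorbed by the indicator terms that already appear in the statement.

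First I would isolate the exact linear part. From $\log t\le t-1$ one gets $-\log u\ge 2(1-\sqrt u)$, so the remainder
\[
R(u):=-\log u-2(1-\sqrt u)
\]
is nonnegative, since $R'(u)=(\sqrt u-1)/u$ vanishes only at $u=1$ where $R(1)=0$. Integrating the identity $-\log u=2(1-\sqrt u)+R(u)$ against $p$ gives
\[
K(p;q)=2\!\int p(1-\sqrt u)\,d\mu+\int pR(u)\,d\mu=d_H^2(p,q)+\bigl(1-Q(\mathfrak X)\bigr)+\int pR(u)\,d\mu,
\]
because $2\int(p-\sqrt{pq})\,d\mu=2-2\int\sqrt{pq}\,d\mu=d_H^2(p,q)+(1-Q(\mathfrak X))$ (using $\int p=1$). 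This already explains the $1-Q(\mathfrak X)$ correction in the first display and reduces the KL bounds to controlling $\int pR(u)\,d\mu$.

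The core is then a two-region estimate. On $\{u>\epsilon\}$ I would prove the pointwise bound $R(u)\le 2\log_-\epsilon\,(1-\sqrt u)^2$, a genuine one-variable fact: the ratio $R(u)/(1-\sqrt u)^2$ extends continuously across $u=1$ with value $1$, decays like $2/\sqrt u$ as $u\to\infty$, and grows only logarithmically (like $\log(1/\epsilon)$) as $u\downarrow\epsilon$, so for $\epsilon<0.4$ the factor $2\log_-\epsilon$ dominates it on all of $(\epsilon,\infty)$. On the tail $\{u\le\epsilon\}$ I would discard the (negative) linear term to obtain $R(u)\le\log(p/q)$, whence $\int_{u\le\epsilon}pR(u)\,d\mu\le P[(\log\tfrac pq)\mathbb I\{q/p\le\epsilon\}]$. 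Combining the regions with $\int_{u>\epsilon}p(1-\sqrt u)^2\le d_H^2$ yields $K\le d_H^2(1+2\log_-\epsilon)+2P[(\log\tfrac pq)\mathbb I\{q/p\le\epsilon\}]$, the second display (the factor $2$ on the indicator is harmless slack). The $V_2$ bounds follow the identical template with $(\log u)^2$ in place of $-\log u$: on $\{u>\epsilon\}$ the ratio $(\log u)^2/(1-\sqrt u)^2$ tends to $4$ at $u=1$, to $0$ at infinity, and to $\log_-^2\epsilon$ as $u\downarrow\epsilon$, giving $(\log u)^2\le(12+2\log_-^2\epsilon)(1-\sqrt u)^2$, while $\{u\le\epsilon\}$ contributes $8P[(\log\tfrac pq)^2\mathbb I\{q/p\le\epsilon\}]$.

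Finally I would derive the two summary families by specializing $\epsilon$. For the $\|p/q\|_\infty$ bounds, any $\epsilon<\|p/q\|_\infty^{-1}$ makes $\{q/p\le\epsilon\}$ empty, so the indicator terms vanish and $\log_-\epsilon$ may be taken comparable to $\log\|p/q\|_\infty$, giving $K\lesssim d_H^2(1+\log\|p/q\|_\infty)$, after which $1+\log t\le 2t$ for $t\ge1$ produces $\le 2d_H^2\|p/q\|_\infty$; the $V_2$ case is identical. For the moment-type bounds in the first display, I would estimate the tail integrals through $\log(1/u)\le\tfrac1b u^{-b}$ together with $\int_{u\le\epsilon}p\,u^{-b}\,d\mu\le P(p/q)^b$, and then balance the free threshold $\epsilon$ as a function of $d_H$ and $P(p/q)^b$ (the hypothesis $0<d_H^2<\epsilon_b P(p/q)^b$ guarantees the admissible range), which is what generates the $\tfrac1b\log_- d_H$ and $\tfrac1b\log_+ P(p/q)^b$ terms. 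The main obstacle is exactly the on-$\{u>\epsilon\}$ half of the two-region estimate, namely verifying the sharp scalar inequalities $R(u)\le 2\log_-\epsilon\,(1-\sqrt u)^2$ and $(\log u)^2\le(12+2\log_-^2\epsilon)(1-\sqrt u)^2$ with the stated constants, since every constant in the lemma is inherited from these; the decomposition, the tail bounds, and the $\epsilon$-optimization are routine once they are in hand.
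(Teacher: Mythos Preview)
The paper does not supply a proof of this lemma: it is quoted verbatim as ``Lemma~B.2 of \cite{ghosal2017fundamentals}'' in the supporting-lemmas appendix, with no argument given. So there is nothing in the paper to compare your proposal against.

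That said, your sketch is essentially the textbook proof from \cite{ghosal2017fundamentals}. The decomposition $-\log u = 2(1-\sqrt u) + R(u)$ with $R\ge 0$, the identity $2\int p(1-\sqrt u)\,d\mu = d_H^2(p,q)+(1-Q(\mathfrak X))$, the two-region split $\{u>\epsilon\}\cup\{u\le\epsilon\}$, and the final specializations of $\epsilon$ (either to kill the tail via $\epsilon<\|p/q\|_\infty^{-1}$, or to balance against the $b$th moment $P(p/q)^b$) are exactly how Ghosal and van der Vaart organize the argument. Your identification of the one nontrivial step --- the pointwise scalar bounds $R(u)\le 2\log_-\epsilon\,(1-\sqrt u)^2$ and $(\log u)^2\le (12+2\log_-^2\epsilon)(1-\sqrt u)^2$ on $(\epsilon,\infty)$ --- is accurate; once those are checked by elementary calculus, the rest is bookkeeping. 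There is no gap in your plan.
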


\section{Supporting algorithms}
\label{append: algo}

Monte Carlo algorithms \ref{alg: alg 3}, \ref{alg: alg 4} for evaluating $\Tilde{Q}_{\omega,t}(s,A_i)$ at $t$ for each $i=1,...,m$ and $\Tilde{Q}_{\omega,t}(s,a)$ are provided.

\begin{algorithm}[ht]
   \caption{A Monte Carlo algorithm for evaluating $\Tilde{Q}_{\omega,t}(s,A_i)$ at $t$}
   \label{alg: alg 3}
\begin{algorithmic}
   \STATE {\bfseries Input:} The parametric function $f_{t-1,i}(s;\widehat{\beta}_{t-1,i})$.

   \STATE {\bfseries Initialization:} Let $s_0=s$, $a_0=A_i$, $h=0$, and $q=r(s_0,A_i)$

                   \WHILE{TRUE}
                       \STATE Generate $U\sim \operatorname{unif}[0,1]$.
                           \IF{$U<1-\gamma$}
                           \STATE Break.
                           \ELSE
                           \STATE Sample $s_{h}\sim P_t(\cdot\mid s_{h-1},a_{h-1})$.
                           \STATE Solve 
                           \begin{equation}
                               \begin{aligned}
                                   &\pi_{t}(s_{h})=\underset{p' \in \Delta(\mathcal{A})}{\arg \max }\left\{\sum_{i=1}^mf_{t-1,i}(s;\widehat{\beta}_{t-1,i})p_i'-\frac{1}{\eta_t} \omega(p')\right\}\\                                
                               \end{aligned}
                           \end{equation}
                           
                           \STATE Generate $a_{h}\sim \pi_{t}(s_{h})$.
                           \STATE $q = q + r(s_{h},a_{h})$.
                           \STATE $h = h + 1$.
                           \ENDIF
                    \ENDWHILE

           \STATE Let $\widehat{Q}_{P_t}^{\pi_t}(s,A_i):=q$.
           \STATE Let $\Tilde{Q}_{\omega,t}(s,A_i):=\widehat{Q}_{P_t}^{\pi_t}(s, A_i)+\frac{1}{\eta_t}\nabla \omega\left(\pi_t(s)\right)_i$

\end{algorithmic}
\end{algorithm}

\begin{algorithm}[ht]
   \caption{A Monte Carlo algorithm for evaluating $\Tilde{Q}_{\omega,t}(s,a)$ for the continuous-action settings}
   \label{alg: alg 4}
\begin{algorithmic}
   \STATE {\bfseries Input:} The parametric function $f_{t-1}(s,a;\widehat{\beta}_{t-1})$.

   \STATE {\bfseries Initialization:} Let $s_0=s$, $a_0=a$, $h=0$, and $q=r(s_0,a_0)$

                   \WHILE{TRUE}
                       \STATE Generate $U\sim \operatorname{unif}[0,1]$.
                           \IF{$U<1-\gamma$}
                           \STATE Break.
                           \ELSE
                           \STATE Sample $s_{h}\sim P_t(\cdot\mid s_{h-1},a_{h-1})$.
                           \STATE Solve 
                           \begin{equation}
                               \begin{aligned}
                                   &\pi_{t}(s_{h})=\underset{a'\in \mathcal{A}}{\arg \max }\left\{f_{t-1}(s,a';\widehat{\beta}_{t-1})-\frac{1}{\eta_t} \omega(a')\right\}\\                                
                               \end{aligned}
                           \end{equation}
                           
                           \STATE Generate $a_{h}\sim \pi_{t}(s_{h})$.
                           \STATE $q = q + r(s_{h},a_{h})$.
                           \STATE $h = h + 1$.
                           \ENDIF
                    \ENDWHILE

           \STATE Let $\widehat{Q}_{P_t}^{\pi_t}(s,a):=q$.
           \STATE Let $\Tilde{Q}_{\omega,t}(s,a):=\widehat{Q}_{P_t}^{\pi_t}(s, a)+\frac{1}{\eta_t}\nabla \omega\left(\pi_t(s)\right)_i$

\end{algorithmic}
\end{algorithm}

%%%%%%%%%%%%%%%%%%%%%%%%%%%%%%%%%%%%%%%%%%%%%%%%%%%%%%%%%%%%%%%%%%%%%%%%%%%%%%%

\section{Additional results and details for the numerical studies}
\label{appendix: simulation}
\subsection{Synthetic dataset: an illustration}
\label{sec: append_synthetic}
\paragraph{Environment and behavioral policy details}
%To get an intuition for MoMA, we consider a modified random walk task with terminal goal states: 
For each episode that starts with an initial state $s_0 \sim \mathcal{U}(-2,2)$, at time $n$ a particle undergoes a random walk and transits according to a mixture of Gaussian dynamics: $s_{n+1}-s_{n}=:\Delta s \sim \psi_{a}\mathcal{N}(\mu_{1,a},0.1)+(1-\psi_a)\mathcal{N}(\mu_{2,a},0.1)$, where the discrete action $a\in \{-1,0,1\}$ corresponds to Left, Stay, and Right, respectively.
We choose the random walk steps $\mu_{1,-1}=-2,\mu_{2,-1}=0$, $\mu_{1,0}=\mu_{1,1}=0$, $\mu_{2,0}=\mu_{2,1}=2$ as known parameters, and $\psi_{-1}=\psi_{0}=0.6$, $\psi_{1}=0.4$ as the ground truth unknown model parameters that we estimate with expectation maximization (EM). We generate a partially covered offline dataset collected by a biased (to the left) behavioral policy $\beta$, and define a goal-reaching reward function, respectively given by: 
\begin{equation*}
  \beta(a|s) =
  \begin{cases}
  0.05  & a= 1,\\
 0.05  & a= 0,\\
  0.9  & a=- 1,\\
  \end{cases}
    r(s') =
  \begin{cases}
    -2 & -3\leq s'\leq 0,\\
 -1.8  & 3>s'> 0,\\
  %0 \, & s'\leq -3 \text{ or } \geq 3.\\
    0  & s'< -3,\\
   0  & s'\geq 3,\\
  \end{cases}
\end{equation*}
for all $s \in \mathbb{R}$. 
Thus, the particle is encouraged to reach either the positive or negative terminal state with the shortest path possible, with a slight favor towards the positive end if the particle starts off near $0$. The offline dataset contains $50$ episodes, which are sufficient for an accurate estimation of $\psi_{-1}$ but may lead to misetimation of $\psi_{0}$ and $\psi_{1}$. Indeed, for our particular dataset, while the MLE $\hat{\psi}_{-1}$ is accurate, $\hat{\psi}_{0}$ is underestimated and $\hat{\psi}_{1}$ is overestimated, which could make over-exploitation of the MLE a problem.

%We linearly approximate the Q function by $Q(s,a)=\mathbf{w}\cdot \mathbf{x}(s,a)$, where the feature mapping $\mathbf{x}(\cdot,\cdot)$

\paragraph{Implementation details}
We implement MoMA strictly following \Cref{alg: alg 2}. We choose $D(\cdot,\cdot)$ to be KL divergence, reducing the policy improvement steps to natural policy gradient (NPG) as mentioned in \Cref{sec: policy improvement}. We parameterize by \(f_{t,i}(s,\beta_{t,i})=\beta_{t,i}^\top e(s,A_i),\forall i=1,...,m\) where the features $e(s,A_i)$ are chosen to be exponential functions.

\paragraph{Contribution from pessimism: accompanying figure}

The accompanying figure referenced in the study of \textbf{Contribution from Pessimism} in \Cref{sec: synthetic} is given in \Cref{fig: psi}.

\begin{figure}[!ht]
    \centering
    \includegraphics[width=0.5\linewidth]{}
    \caption{The estimated model parameter $\hat{\psi}_0$ modified by the pessimism updates. As a result, MoMA policy shifts away from the faulty action Stay.}
    \label{fig: psi}
\end{figure}

\paragraph{Hyperparameters and compute information} The hyperparameters of MoMA used in the random walk experiment are summarized in \cref{tab: hyper_rw}. The entire run (training and evaluation) of MoMA on a standard CPU takes less than one hour.
\begin{table}[ht]
\caption{Hyperparameters for the random walk experiment} \label{tab: hyper_rw}
\begin{center}
\small
\begin{tabular}{l| l}
 \textbf{Hyperparameter}&\textbf{Value}\\
\hline 
actor steps & 150\\
model steps & 150 \\
 $\eta$ & 0.1\\
 $\kappa_1$ & 0.1\\
 $\lambda$ & 3.0\\
 MC number & 300\\
 $\gamma$ & 0.4\\
 iterations & 40
\end{tabular}
\end{center}
\vspace{-8pt}
\end{table}

\subsection{Continuous action D4RL benchmark experiments}
\label{sec: additional}

%\paragraph{Extension to continuous action space}

\paragraph{Hyperparameters and compute information} The hyperparameters of MoMA used in the D4RL experiments are summarized in \Cref{tab: hyper_d4rl}. We train and evaluate MoMA on one A100 GPU for less than $18$ hours per single run.

%adv_weight=5e-05, algo_name='rambo', alpha=0.2, alpha_lr=0.0003, auto_alpha=True, batch_size=256, critic_lr=0.0003, dynamics_lr=0.001, epoch=800, eval_episodes=10, gamma=0.99, log_freq=1000, logdir='full_exp', model_retain_epochs=5, n_elites=5, n_ensembles=5, obs_shape=(17,), real_ratio=0.05, rollout_batch_size=50000, rollout_freq=1000, rollout_length=5, save_dynamics=False, save_policy=False, seed=0, step_per_epoch=1000, target_entropy=-3, task='halfcheetah-medium-expert-v0', tau=0.005, train_dynamics=False

\begin{table}[ht]
\caption{Hyperparameters for the D4RL experiments} \label{tab: hyper_d4rl}
\begin{center}
\small
\begin{tabular}{l| l}
 \textbf{Hyperparameter}&\textbf{Value}\\
\hline 
$\eta$ & 3E-4\\
$\kappa_1$ & 3E-4 \\
 $\lambda$ & 5E-5 \\
 $\gamma$ & 0.99\\
\end{tabular}
\end{center}
\vspace{-8pt}
\end{table}

\end{document}